\newcommand{\BibTeX}{B\kern-.05em{\sc i\kern-.025em b}\kern-.08em\TeX}
\begin{document}


\begin{frontmatter}




\title{{\huge A Single Online Agent Can Efficiently Learn Mean Field Games}}


\author[A]{\fnms{Chenyu}~\snm{Zhang}}
\author[B]{\fnms{Xu}~\snm{Chen}\footnotemark}
\author[B]
{\fnms{Xuan}~\snm{Di}\thanks{Email: sharon.di@columbia.edu}} 

\address[A]{Data Science Institute, Columbia University, New York, NY, USA}
\address[B]{Department of Civil Engineering and Engineering Mechanics, Columbia University, New York, NY, USA}


\begin{abstract}
	Mean field games (MFGs) are a promising framework for modeling the behavior of large-population systems.
	However, solving MFGs can be challenging due to the coupling of forward population evolution and backward agent dynamics. Typically, obtaining mean field Nash equilibria (MFNE) involves an iterative approach where the forward and backward processes are solved alternately, known as fixed-point iteration (FPI). This method requires fully observed population propagation and agent dynamics over the entire spatial domain, which could be impractical in some real-world scenarios.
	To overcome this limitation, this paper introduces a novel online single-agent model-free learning scheme, which enables a single agent to learn MFNE using online samples, without prior knowledge of the state-action space, reward function, or transition dynamics.
	Specifically, the agent updates its policy through the value function (Q), while simultaneously evaluating the mean field state (M), using the same batch of observations.
	We develop two variants of this learning scheme: off-policy and on-policy QM iteration. We prove that they efficiently approximate FPI, and a sample complexity guarantee is provided. The efficacy of our methods is confirmed by numerical experiments.
\end{abstract}

\end{frontmatter}


\section{Introduction} 


Mean field games (MFGs) \citep{huang2006Largepopulation,lasry2007Meanfield} offer a tractable model to describe the population impact on individual agents in multi-agent systems with a large population.
This work delves into the increasingly prominent field of applying reinforcement learning (RL) \citep{sutton2018Reinforcementlearninga} to learn MFGs.


In an MFG, the influence of other agents is encapsulated by a \emph{population mass} which provides a reliable approximation of real interactions between agents when the number of agents is large.
A widely used method for learning MFGs is fixed-point iteration (FPI), which iteratively calculates the \emph{best response} (BR) w.r.t. the current population, and the \emph{induced population distribution} (IP) w.r.t. the current policy \citep{guo2019Learningmeanfield}.
The FPI algorithm can be formally expressed as:
\[
	(\pi_k,\mu_k) = (\Gamma_{\mathrm{IP}} \circ \Gamma_{\mathrm{BR}})^k(\pi_0,\mu_0)
	,\]
where operators
\(\Gamma_{\mathrm{BR}}\) calculates the best response and \(\Gamma _{\mathrm{IP}}\)
calculates the induced population distribution.
We defer the full definitions of these operators to \cref{sec:pre}.


Although it is a prominent scheme for learning MFGs, current implementations of FPI and its variants face several limitations, especially in the IP calculation:
\begin{enumerate*}[label=\upshape\arabic*\upshape)]
	\item \(\Gamma_{\mathrm{BR}}\) and $\Gamma_{\mathrm{IP}}$ are implemented separately and executed alternately, impeding parallel computing and potentially increasing the \emph{computational complexity} of the entire algorithm.
	\item The implementation of $\Gamma_{\mathrm{IP}}$ typically requires the knowledge of the transition dynamics of the environment \citep{yang2018LearningDeep,perrin2021MeanField,chen2023Learningdual,chen2023hybrid}, limiting the use of \emph{model-free} methods.
	\item Despite some proposals of model-free strategies in existing literature, these methods demand direct observability of population dynamics \citep{carmona2021ModelFreeMeanField,lee2021reinforcement,anahtarci2023Qlearningregularized}. In reality, fulfilling this requirement generally needs a central server capable of communication across the entire state space, restricting the feasibility of implementing such methods with a single online agent, i.e., an agent that interacts with the environment and collects local observations to learn and act on-the-go.
\end{enumerate*}


While these limitations paint part of the picture, we still need to answer the following question:
\begin{quoting}\itshape
	Why should we employ a single online agent to learn the equilibria of mean field games?
\end{quoting}


The reasons are multifold:
\begin{itemize}
	\item In many real-world scenarios, a single online agent is often the most accessible, and sometimes the only available resource \citep{shou2022MultiAgentReinforcement}.
	\item Online single-agent model-free methods are more straightforward to implement, since they do not require prior knowledge of the data or the model.
	\item Once a single-agent model-free method is devised, this fundamental scheme can accommodate extensions such as multi-agent collaborative learning and model learning.
\end{itemize}


Motivated by answers to the ``why'' question, we ask:
\begin{quoting}\itshape
	Can a single online agent learn the equilibria of mean field games efficiently?
\end{quoting}


\begin{figure}[H]
	\centering
	\includegraphics[width=0.9\columnwidth]{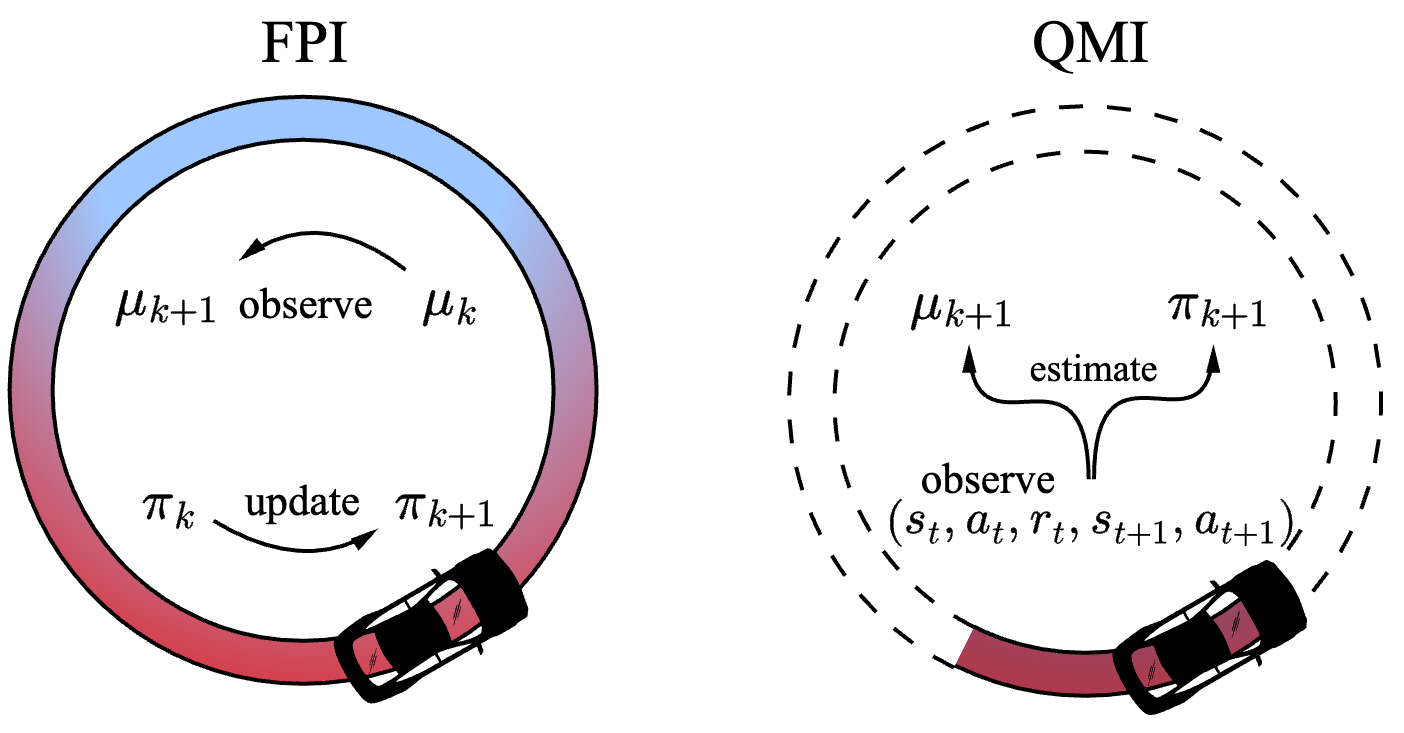}
	\caption{Illustration of learning processes of FPI and QMI for speed control on a ring road. The gradient color map signifies the varying population density on the ring road, with the dashed line indicating elements unobserved by the online agent. In FPI, the BR is calculated by a \emph{representative} agent and the IP is directly observed. In QMI, a single online agent observes only \emph{local} states and resultant rewards $(s_t,a_t,r_t,s_{t+1},a_{t+1})$, and uses these observations to estimate both the BR and IP.} \label{fig:toy}
\end{figure}

Our work affirmatively answers this question by presenting QM iteration (QMI), an efficient online single-agent model-free method for learning MFGs.
QMI is strongly backed by the following theoretical premise.
In an MFG, as all agents follow the same policy, we know that any agent's state is sampled from the population distribution.
This fact reveals that a single agent encapsulates information about the entire population, suggesting that the induced population distribution can be learned through a single agent's state observations.
More importantly, these observations are already collected during the phase where the agent updates its policy using an online RL method, suggesting that a single agent can learn both the BR and IP simultaneously using the same batch of online observations.%





We present the example of speed control on a ring road, as illustrated in \cref{fig:toy}, to concretize the above ideas and highlight the improvements of QMI over FPI.
In this game, vehicles aim to maintain some desired speed while avoiding collisions.
In FPI, a \emph{representative} agent interacts with the population mass to learn the BR. Then, a dedicated forward process is needed to calculate the IP, either by leveraging knowledge of the transition dynamics or directly observing population dynamics across the entire state space.
In contrast, QMI employs a single online agent with only local state and reward observations.
Unlike FPI's representative agent, the online agent in QMI has no population information and thus no interaction with the population mass.
Consequently, it maintains an \emph{estimate} of the IP, and derives rewards according to this estimate%
. Equipped with this estimate, the online agent in QMI, similar to FPI's representative agent, can update its policy using local observations by online RL methods. As a distinctive feature, this agent also uses these local observations to update its population distribution estimate. Hence, QMI consolidates the two separate backward and forward processes in FPI into one and eliminates the need for prior environmental knowledge and global communication.

\paragraph{Contributions.}
Our primary contributions include:

\begin{itemize}
	\item We propose an online single-agent model-free scheme for learning MFGs, termed as QM iteration (QMI).
	      At each step of QMI, the agent updates its BR and IP estimates \emph{simultaneously} using an online observation.
	      More practical than FPI, QMI is applicable when no prior knowledge of the transition dynamics or the state space is available.
	      We develop two variants of QMI,
	      contingent on whether the agent selects actions following a fixed \emph{behavior} policy, or adaptively updates its behavior policy within an outer iteration (\cref{alg}).
	      An overview of the distinct features exhibited by the two variants is provided in \cref{tab:comp}.
	\item We prove that QMI efficiently approximates FPI and, therefore enjoys a similar convergence guarantee.
	      The resemblance between the learning dynamics of QMI and FPI is illustrated in \cref{fig:qmi}.
	      We provide sample complexity guarantees for our methods (\cref{thm}).
	      Our methods are the first provably efficient online single-agent model-free methods for learning MFGs.
	      We validate our findings through numerical experiments on various MFGs (\cref{sec:exp}).
\end{itemize}

\paragraph{Related work.}

\cz{\citet{huang2006Largepopulation} introduced mean field games and suggested a forward-backward FPI scheme to solve them.
To address the instability of FPI in discrete-time \citep{cui2022ApproximatelySolving}, researchers have proposed various stabilization techniques, including fictitious play \citep{cardaliaguet2017Learningmean,perrin2020Fictitiousplay}, online mirror descent \citep{perolat2021ScalingMean,lauriere2022ScalableDeep}, and entropy regularization \citep{cui2022ApproximatelySolving,guo2022Entropyregularization,anahtarci2023Qlearningregularized}.
\citet{yang2018LearningDeep,chen2023Learningdual} formulated MFGs as a population MDP, avoiding solving the forward-backward process in FPI, while requiring the knowledge of the entire state space and transition dynamics to update the state of the population.}

A comprehensive survey on the application of RL in learning MFGs is presented by \citet{lauriere2022LearningMean,cui2022SurveyLargePopulation}.
Existing work exclusively focuses on obtaining BRs in FPI using RL methods, including
Q-learning \citep{guo2019Learningmeanfield,perrin2021MeanField,cui2022ApproximatelySolving},
policy gradient \citep{elie2020convergence},
and actor critic \citep{mguni2018decentralised,Subramanian2022mean,chen2023hybrid}.
\cz{For the population evolution (IP), most existing methods require either knowledge of the transition dynamics or direct observability.
}\cz{
Recently, \citet{angiuli2022unified,angiuli2023convergence} proposed an asynchronous Q-learning method for MFGs, removing the IP observability assumption, and proved its asymptotic convergence when population estimate updates occurs much slower than Q-value function updates ($\beta_{t} \ll \alpha _{t}$).
\citet{zaman2023oracle} extended this two-timescale model-free approach with model learning and proved its non-asymptotic convergence.
In contrast, our methods employ the same timescale for population and policy estimates ($\beta_{t} \asymp \alpha _{t}$), substantially distinguishing our methodology.
}


\section{Preliminaries} \label{sec:pre} 

\subsection{Mean Field Games} \label{sec:mfg}

We consider an infinite-horizon discounted Markov decision process (MDP) denoted by \(\mathcal{M} = (\mathcal{S},\mathcal{A}, r, P, \gamma)\), where \(\mathcal{S}\) and \(\mathcal{A}\) are the finite state and action spaces respectively,
with their cardinality denoted by $S \coloneqq |\mathcal{S}|$ and $A \coloneqq |\mathcal{A}|$,
$r$ is the reward function, $\gamma \in (0,1)$ is the discount factor, and $P$ is the transition kernel such that \(P(s'\given s, a)\) represents the probability that an agent transitions to state \(s'\) when it takes action \(a\) at state \(s\).
A policy (also referenced as a strategy or response) $\pi$ maps a state to a distribution on the action space, guiding the action choices of an agent.
When the policy $\pi$ is fixed, we use $P_{\pi}$ to denote the transition kernel and write $P_{\pi}(s,s') \coloneqq \sum_{a\in \mathcal{A}}P(s'\given s, a)\pi(a\given s)$.

In MFGs, agents are considered indistinguishable with individually negligible influence. Thus, an MFG encapsulates the impact of all agents on a given one through the concept of \emph{population}.
In this work, we consider reward functions that depend on the population distribution over the state space \(\mu\in \Delta(\mathcal{S})\coloneqq \{ \text{distributions on } \mathcal{S} \}\).
Specifically, a reward function \(r: \mathcal{S}\times \mathcal{A}\times \Delta(\mathcal{S})\to[0,R]\) signals a reward at each state-action pair based on the population distribution.

In MFGs, agents are rational and aim to maximize their expected cumulative reward.
Our goal is to find an \emph{optimal} policy---one that cannot be improved given that other agents' policies are fixed.
We utilize a value-based approach to calculate policies. A Q-value function returns the expected cumulative reward starting from a state following the current policy $\pi$ and population distribution $\mu$:
\[\label{eq:q}
	Q_{\pi,\mu}(s, a)=\EE_{\pi}\!\!\left[\sum_{t=0}^{\infty} \gamma^t r\left(s_t, a_t,\mu\right) \Given s_0\!=\!s, a_0\!=\!a\right]\!\!
	,\]
where the expectation is taken w.r.t. the transition kernel $P_{\pi}$.
Given a value function, we can choose the action accordingly, e.g., greedily select the action that maximizes the value function or use an \(\epsilon\)-greedy selection. For broader adaptability, we presume access to a \emph{policy operator} \(\Gamma_{\pi}\) that yields a policy based on a value function. Thus, the optimal policy can be characterized through a value function, translating our goal into discovering an optimal value function.
We are now ready to present the optimality conditions.

\begin{definition}[Mean field Nash equilibrium]\label{def:mfne}
	A value function-population distribution pair \((Q,M)\) is a \emph{mean field Nash equilibrium (MFNE)} if it satisfies
	\begin{equation}\label{eq:qm}
			Q = \T{Q,M} Q \quad \text{and} \quad
			M = \P{Q} M,
	\end{equation}
	where \(\T{Q,M}\) is the Bellman operator:
	\[\label{eq:bellman}
		\T{Q,M}Q(s,a) = \E{Q}[r(s,a,M) + \gamma Q(s',a')],
	\]
	where \(\E{Q}\) denotes the expectation over \(a,a' \sim \Gamma_{\pi}(Q)\) and \(s' \sim P(\cdot\given s,a)\); and \(\P{Q}\) is the transition operator:
	\[\label{eq:transition}\textstyle
		\P{Q}M(s') = \sum_{s\in \mathcal{S}}\PP{Q}(s,s')M(s)
		,\]
	where we write $\PP{Q} \!\coloneqq\! \PP{\Gamma_{\pi}(Q)}$, as the policy is determined by the value function given a fixed policy operator.
\end{definition}

In \cref{def:mfne}, $Q\in\R^{S\times A}$ denotes a generic value function table, which is not necessarily an actual value function defined per \cref{eq:q}.
Similarly, $M \in \Delta(\mathcal{S})$, where $\Delta(\mathcal{S})$ is the probability simplex over $\mathcal{S}$, represents a generic population distribution which is not necessarily an actual policy-induced population distribution.
Analogous to the Q-value function, we refer to this generic population distribution as the \emph{M-value function}.
We use subscripts, e.g., $\q{M}$ and $\m{Q}$, to indicate actual BRs and IPs w.r.t. specific population distributions and value functions.




\subsection{Fixed-Point Iteration for MFG}

Fixed-point iteration (FPI), a classic method for learning MFGs, comprises two steps: evaluating the \emph{best response} (BR) and the \emph{induced population distribution} (IP).
Fixing a population distribution $M$, the game reduces to a standard RL problem, which has a unique optimal value function \citep{bertsekas1996Neurodynamicprogramming}, i.e., the BR w.r.t. the population distribution $M$.
If the transition kernel $\PP{Q}$ yields a steady state distribution, this distribution is referred to as the IP w.r.t. the value function $Q$.
Decomposing \cref{eq:qm} gives formal definitions of these two operations.

\begin{definition}[FPI operators] \label{def:fpi-op}
	The BR operator,
	\[
		\Gamma_{\mathrm{BR}}: \Delta(\mathcal{S})\to \R^{S\times A},\  M \mapsto \q{M}
		,\]
	returns the unique solution to the Bellman equation $\q{M} = \T{Q_{M},M}\q{M}$ for any population distribution $M$.
	The IP operator,
	\[
		\Gamma_{\mathrm{IP}}: \R^{S\times A} \to \Delta(\mathcal{S}),\ Q\mapsto \m{Q}
		,\]
	returns the unique fixed point of the transition operator $\P{Q}$ defined in \cref{eq:transition} for any value function $Q$.
	Then, the FPI operator is the composition of the above two operators:
	\(
	\Gamma \coloneqq \Gamma_{\mathrm{IP}} \circ \Gamma_{\mathrm{BR}} : \Delta(\mathcal{S})\to \Delta(\mathcal{S})
	.\)
\end{definition}

Notably,
the optimality in the BR is determined by the policy operator $\Gamma_{\pi}$.
For example,
When $\Gamma_{\pi}$ is the greedy selector: $\Gamma_{\pi}^{(\mathrm{max})}\!(Q)[a\!\given \!s] \!\!=\!\! \mathbbm{1}(a \!=\! \argmax_{a}Q(s,a))$, \cref{eq:bellman} becomes the Bellman optimality operator:
\[\label{eq:bellman-opt}
	\T{M}Q(s,a) = \mathbb{E}[r(s,a,M) + \gamma \max_{a'} Q(s',a')],
\]
making BRs deterministic optimal policies.
When $\Gamma_{\pi}$ is the softmax function: $\Gamma_{\pi}^{(\mathrm{softmax})}(Q)[a\given s] = e^{LQ(s,a)}/\sum_{a'}e^{LQ(s,a')}$, where $L$ is the inverse temperature parameter, the optimality corresponds to the MFG with entropy regularization \citep{cui2022ApproximatelySolving,guo2022Entropyregularization,anahtarci2023Qlearningregularized}.

To focus on the main ideas, we consider \emph{contractive} MFGs in this paper, where FPI is guaranteed to converge to the unique MFNE.
Then, in \cref{sec:anlys}, we show that our methods approximate FPI, thus enjoying a similar convergence guarantee.
Without the contraction condition,
stabilization techniques like fictitious play and online mirror descent need to be applied to FPI. We envision that our algorithms can be extended to incorporate these techniques with our analysis applying with minimal adjustment.

\begin{assumption}[Contractive MFG]\label{asmp:contract}
	The FPI operator is $(1\!-\!\kappa)$-contractive ($\kappa\in (0,1]$), i.e., for any $M_1,M_2\in \Delta(\mathcal{S})$, it holds that
	\[
		\|\Gamma M_1 - \Gamma M_2\|_{2} \le (1-\kappa) \|M_1-M_2\|_{2}
		.\footnotemark\]
\end{assumption}
\footnotetext{
	We consider $L_1$ and $L_2$ distances for probability measures in this work.
	For a finite state space with the trivial metric, the total variation distance equals the 1-Wasserstein distance \citep{gibbs2002ChoosingBounding}, with the $L_1$ distance being twice as large as them.
	Without loss of generality, we redefine the total variation distance as twice its standard definition, and use it interchangeably with the $L_1$ distance.
}

\section{Online Stochastic Updates} \label{sec:m-update}

Without prior knowledge of the environment or the population, the online agent maintains two estimates---the Q-value function for the BR and the M-value function for the IP---which it updates using online stochastic observations. We first extend temporal difference (TD) control methods, a classic model-free RL framework covering Q-learning and SARSA \citep{sutton2018Reinforcementlearninga}, to learn BRs, and then derive an online stochastic update rule for the IPs in the same vein.

\paragraph{Q-value function update.}
Guided by the Bellman operator \cref{eq:bellman}, TD control gives an online stochastic update for the Q-value function:
\[\label{eq:qupdate}
	\begin{aligned}
		Q(s,a) \leftarrow        & Q(s,a) - \alpha g_{Q}(s,a,s',a'),    \\
		\text{with}\quad g_{Q} = & Q(s,a) - r(s,a,M) - \gamma Q(s',a'),
	\end{aligned}
\]
where $\alpha$ is the step size, $s' \sim P(\cdot \given s,a)$, and $a' \sim \Gamma_{\pi}(Q)$.
If the policy operator is greedy and the behavior policy is fixed, the above update rule gives rise to off-policy Q-learning \citep{watkins1992Qlearning}; for general policy operators, if the behavior policy updates in accordance to the value function, i.e., $\Gamma_{\pi}(Q)$ is the behavior policy and $a'$ is the actual action, the above update rule gives rise to on-policy SARSA \citep{rummery1994OnlineQlearning,singh1996Reinforcementlearning}.
We defer further discussion on these two TD control methods to \cref{sec:alg,sec:anlys}.


\paragraph{Population estimate.}

TD control replaces the expectation in the Bellman operator \cref{eq:bellman} with a stochastic approximation using online observations. Likewise, for the M-value function update,
we first rewrite the transition operator using expectation:
\begin{align}
	\P{Q}M(z) = & \sum_{s'\in\mathcal{S}}\delta_{s'}(z)\P{Q}M(s')                                 \\
	=           & \sum_{s'\in \mathcal{S}} \sum_{s\in\mathcal{S}} \delta _{s'}(z)\PP{Q}(s,s')M(s) \\
	=           & \E{Q,M}[\delta _{s'}(z)], \label{eq:pm-exp}
\end{align}
where $\delta_{s'}$ is the indicator probability vector in $\Delta(\mathcal{S})$ such that $\delta_{s'}(z) = \mathbbm{1}(z=s')$, and the expectation is taken over \(s \sim M\) and \(s' \sim \PP{Q}(s,\cdot )\).
Mimicking TD control and stochastic gradient descent, we remove the expectation and use the observed next successive state $s'$ to stochastically approximate $\P{Q}M$. This
gives an online stochastic update for the M-value function:
\[\label{eq:mupdate}
	M \leftarrow M - \beta g_{M}(s') = M + \beta (\delta_{s'} - M)
	.\]
where $\beta$ is the step size.
Please refer to \cref{sec:apx-update} for the full derivation of this update rule.
Similar to TD control, we anticipate that this update rule drives the M-value function to converge to the population distribution induced by $\PP{Q}$.
Furthermore, selecting a step size of $\beta_t = 1 /(t+1)$ simplifies it to a Markov chain Monte Carlo (MCMC) method, validating its correctness.

For online stochastic updates \cref{eq:qupdate,eq:mupdate} and general online learning methods to yield optimal solutions, the environment must be readily \emph{explorable}.
Unlike offline methods which rely on pre-collected data, an online agent learns and acts based on its real-time observations.
Hence, the efficient learning of optimal policy becomes unfeasible if certain states are inaccessible, leading to potential suboptimal solutions. To avoid this, we impose the following condition on the MDP.

\begin{assumption}[Ergodic MDP] \label{asmp:ergodic}
	For any $Q\in \R^{S\times A}$, the Markov chain induced by the transition kernel $\PP{Q}$ is ergodic with a uniform mixing rate. In other words, there exists a steady state distribution $\m{Q}$ for any policy $\Gamma_{\pi}(Q)$, with constants $m \ge 1$ and $\rho \in (0, 1)$, such that
	\[
		\sup_{s\in \mathcal{S}} \sup_{Q\in \R^{S\times A}} \left\| P_{\pi}\left( S_t = \cdot \Given S_0 = s \right) - \m{Q} \right\|_{\mathrm{TV}} \le m \rho^{t}
		.\]
	For future reference, we define an auxiliary constant $\sigma = \hat{n} + m \rho^{\hat{n}} / (1-\rho)$, where $\hat{n} = \left\lceil \log _{\rho} m^{-1} \right\rceil$.
	And the probability of visiting a state-action pair under a steady distribution is lower bounded:
	\[
		\inf_{(s,a)\in \mathcal{S}\times \mathcal{A}}\inf_{Q\in\R^{S\times A}} \mu_{Q}(s) \cdot \Gamma_{\pi}(Q)[a\given s] \ge \lambda_{\min} > 0
		.
	\]
\end{assumption}

\cref{asmp:ergodic} is a standard assumption for online methods \citep{bhandari2018FiniteTime,zou2019Finitesampleanalysis,qu2020Finitetimeanalysis}.

\section{Proposed Methods} \label{sec:alg} 
\subsection{Off-Policy QM Iteration} 

A significant advantage of our online stochastic formulation of the update rules, \cz{over the \emph{iterative} BR and IP evaluation typical of FPI methods}, is that it enables \emph{simultaneous} updates of both the Q-value and M-value functions using the same batch of observations.

Taking one step in this direction, we first present an algorithm that simultaneously evaluates both the BR and IP, with the agent's behavior policy being fixed within each outer iteration.
Since the behavior policy is not updated along with the Q-value function, we use off-policy Q-learning to learn the BR, and term this method \emph{off-policy} QM iteration.
The method is presented in \cref{alg} with input option \texttt{off-policy} and the greedy policy operator $\Gamma_{\pi}^{(\mathrm{max})}$.

Our algorithm showcases marked simplicity. At each time-step, the online agent observes a state transition and a reward; it then uses this information to update the Q-value and M-value function tables using \cref{eq:qupdate,eq:mupdate}, respectively, which only involves two elementary operations---scaling and addition.
It is noteworthy that in the Q-value function update, $a_{t+1}$, which follows the fixed behavior policy, is not used. Instead, $a' \sim \Gamma_{\pi}^{(\mathrm{max})}(Q_{k,t})$ is used according to \cref{eq:qupdate}.
The discrepancy accounts for the naming of ``off-policy'' Q-learning.

\begin{algorithm}[ht]
	\caption{QM Iteration} \label{alg}
    \begin{algorithmic}[1]
	\STATE {\bfseries Input:} initial value functions $Q_{-1,T} = Q_0$ and $M_{-1,T} = M_0$; initial state $s_0$; option \texttt{off-policy} or \texttt{on-policy}
	\FOR {$k = 0,1,..., K$}
	\STATE $Q_{k,0} = Q_{k-1,T}, M_{k,0} = M_{k-1,T}$
	\STATE $\pi_{k,0} = \Gamma_{\pi}(Q_{k,0})$
	\FOR {$t = 0,1, \ldots ,T$}
	\STATE sample one Markovian observation tuple $(s_t,a_t,s_{t+1},a_{t+1})$ following policy $\pi_{k,t}$
	\STATE observe the reward $r(s_t,a_t,M_{k,0})$
	\STATE $Q_{k,t+1}(s_t,a_t)\!=\!Q_{k,t}(s_t,a_t)\!-\!\alpha_{t} g_{Q_{k,t}}$ \label{line:q-update}
	\STATE $M_{k,t+1} = M_{k,t} - \beta_t g_{M_{k,t}}(s_{t+1})$\; \label{line:m-update}
	\IF {\texttt{off-policy}}
	\STATE $\pi_{k,t+1} = \pi_{k,0}$
	\ELSIF {\texttt{on-policy}}
	\STATE $\pi_{k,t+1} = \Gamma_{\pi}\left(\operatorname{mix}\left(\{Q_{k,l}\}_{l=0}^{t+1}\right)\right)$ \label{line:p-update}
	\ENDIF
	\ENDFOR
	\ENDFOR
        \STATE {\bfseries return} $Q_{K,T},M_{K,T}$
        \end{algorithmic}
\end{algorithm}

The stationary nature of the transition kernel within each outer iteration directly gives the convergence guarantee of off-policy QMI and suggests its analogy with FPI (see \cref{sec:anlys}).
Nevertheless, fixed transition kernels make off-policy QMI learn BRs and IPs \emph{parallelly}. That is, at $k$th iteration, the BR w.r.t. $M_{k,0}$ is approximated by $Q_{k,T} = Q_{k+1,0}$, whose corresponding population distribution is then approximated by $M_{k+1,T} = M_{k+2,0}$, rather than $M_{k+1,0}$.
Let $Q_{k} \coloneqq Q_{k,0}$ and $M_{k}\coloneqq M_{k,0}$. Then, off-policy QMI generates two non-interacting parallel policy-population sequences: $\{(Q_{2k},M_{2k+1})\}_{k=0}^{K/2}$ and $\{(Q_{2k+1}, M_{2k})\}_{k=0}^{K/2}$.
This observation also implies that off-policy QMI is at least twice as \cz{data-}efficient as FPI; see \cref{fig:qmi} for an illustration.

To establish the convergence guarantee of off-policy QMI, we leverage the theoretical results of off-policy Q-learning.
However, the greedy policy operator used is too \emph{nonsmooth}: a slight difference in the Q-value function can lead to completely different action choices.
As a result, the induced population distributions can drastically differ between outer iterations, leading to unstable convergence performance.
Since we do not incorporate stabilization techniques, we make the following assumption.

\begin{assumption}[Lipschitz continuous transition kernels for Q-learning] \label{asmp:ql}
	For any $Q_1,Q_2\in \R^{S\times A}$, it holds that
	\[
		\|P_{Q_1} - P_{Q_2}\|_{\mathrm{TV}} \le L\left\| Q_1-Q_2 \right\|_{2}
		,\]
	where $\|P_{Q}\|_{\mathrm{TV}} \coloneqq \sup_{\|q\|_{\mathrm{TV}} = 1} \left\| \sum_{s\in\mathcal{S}} q(s)P_{Q}(s,\cdot ) \right\|_{\mathrm{TV}}$.
\end{assumption}

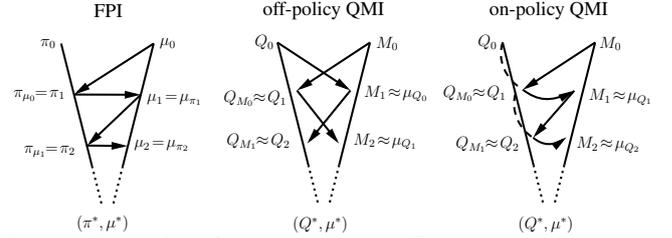
\begin{figure}[H]\centering
	\resizebox{0.482\textwidth}{!}{
		\tikzset{every picture/.style={line width=1.75pt}} 
	\begin{tikzpicture}[x=0.75pt,y=0.75pt,yscale=-1,xscale=1]

		\draw [line width=1.5]    (100.63,50) -- (130.63,170) ;
		\draw [line width=1.5]    (190.63,50) -- (160.63,170) ;
		\draw [line width=1.5]  [dash pattern={on 1.69pt off 2.76pt}]  (130.63,170) -- (140.63,210) ;
		\draw [line width=1.5]  [dash pattern={on 1.69pt off 2.76pt}]  (160.63,170) -- (150.63,210) ;
		\draw [line width=1.5]    (190.63,50) -- (116.59,97.83) ;
		\draw [shift={(113.23,100)}, rotate = 327.14] [fill={rgb, 255:red, 0; green, 0; blue, 0 }  ][line width=0.08]  [draw opacity=0] (15.6,-3.9) -- (0,0) -- (15.6,3.9) -- cycle    ;
		\draw [line width=1.5]    (113.23,100) -- (174.03,100.19) ;
		\draw [shift={(178.03,100.2)}, rotate = 180.18] [fill={rgb, 255:red, 0; green, 0; blue, 0 }  ][line width=0.08]  [draw opacity=0] (15.6,-3.9) -- (0,0) -- (15.6,3.9) -- cycle    ;
		\draw [line width=1.5]    (178.03,100.2) -- (128.92,147.23) ;
		\draw [shift={(126.03,150)}, rotate = 316.24] [fill={rgb, 255:red, 0; green, 0; blue, 0 }  ][line width=0.08]  [draw opacity=0] (15.6,-3.9) -- (0,0) -- (15.6,3.9) -- cycle    ;
		\draw [line width=1.5]    (126.03,150) -- (161.23,150.36) ;
		\draw [shift={(165.23,150.4)}, rotate = 180.58] [fill={rgb, 255:red, 0; green, 0; blue, 0 }  ][line width=0.08]  [draw opacity=0] (15.6,-3.9) -- (0,0) -- (15.6,3.9) -- cycle    ;
		\draw [line width=1.5]    (530.03,49) -- (560.03,169) ;
		\draw [line width=1.5]    (620.03,49) -- (590.03,169) ;
		\draw [line width=1.5]  [dash pattern={on 1.69pt off 2.76pt}]  (560.03,169) -- (570.03,209) ;
		\draw [line width=1.5]  [dash pattern={on 1.69pt off 2.76pt}]  (590.03,169) -- (580.03,209) ;
		\draw [line width=1.5]    (620.03,49) -- (553.06,93.05) ;
		\draw [shift={(549.72,95.25)}, rotate = 326.66] [fill={rgb, 255:red, 0; green, 0; blue, 0 }  ][line width=0.08]  [draw opacity=0] (15.6,-3.9) -- (0,0) -- (15.6,3.9) -- cycle    ;
		\draw [line width=1.5]    (599.72,96.25) -- (562.35,139.23) ;
		\draw [shift={(559.72,142.25)}, rotate = 311.01] [fill={rgb, 255:red, 0; green, 0; blue, 0 }  ][line width=0.08]  [draw opacity=0] (15.6,-3.9) -- (0,0) -- (15.6,3.9) -- cycle    ;
		\draw [line width=1.5]  [dash pattern={on 5.63pt off 4.5pt}]  (530.03,49) .. controls (524.6,65.4) and (532.22,84.75) .. (549.72,95.25) ;
		\draw [line width=1.5]    (549.72,95.25) .. controls (563.9,106.12) and (576.62,107.7) .. (596.23,98.05) ;
		\draw [shift={(599.72,96.25)}, rotate = 151.79] [fill={rgb, 255:red, 0; green, 0; blue, 0 }  ][line width=0.08]  [draw opacity=0] (15.6,-3.9) -- (0,0) -- (15.6,3.9) -- cycle    ;
		\draw [line width=1.5]  [dash pattern={on 5.63pt off 4.5pt}]  (542.63,99) .. controls (537.2,115.4) and (546.72,134.75) .. (559.72,142.25) ;
		\draw [line width=1.5]    (559.72,142.25) .. controls (569.11,150.4) and (579.12,150.35) .. (589.08,143.97) ;
		\draw [shift={(592.22,141.75)}, rotate = 142.31] [fill={rgb, 255:red, 0; green, 0; blue, 0 }  ][line width=0.08]  [draw opacity=0] (15.6,-3.9) -- (0,0) -- (15.6,3.9) -- cycle    ;
		\draw [line width=1.5]    (311.03,49) -- (341.03,169) ;
		\draw [line width=1.5]    (401.03,49) -- (371.03,169) ;
		\draw [line width=1.5]  [dash pattern={on 1.69pt off 2.76pt}]  (341.03,169) -- (351.03,209) ;
		\draw [line width=1.5]  [dash pattern={on 1.69pt off 2.76pt}]  (371.03,169) -- (361.03,209) ;
		\draw [line width=1.5]    (401.03,49) -- (333.34,93.79) ;
		\draw [shift={(330,96)}, rotate = 326.51] [fill={rgb, 255:red, 0; green, 0; blue, 0 }  ][line width=0.08]  [draw opacity=0] (15.6,-3.9) -- (0,0) -- (15.6,3.9) -- cycle    ;
		\draw [line width=1.5]    (382,96) -- (342.51,144.89) ;
		\draw [shift={(340,148)}, rotate = 308.93] [fill={rgb, 255:red, 0; green, 0; blue, 0 }  ][line width=0.08]  [draw opacity=0] (15.6,-3.9) -- (0,0) -- (15.6,3.9) -- cycle    ;
		\draw [line width=1.5]    (311.03,49) -- (378.66,93.79) ;
		\draw [shift={(382,96)}, rotate = 213.51] [fill={rgb, 255:red, 0; green, 0; blue, 0 }  ][line width=0.08]  [draw opacity=0] (15.6,-3.9) -- (0,0) -- (15.6,3.9) -- cycle    ;
		\draw [line width=1.5]    (330,96) -- (368.33,144.85) ;
		\draw [shift={(370.8,148)}, rotate = 231.88] [fill={rgb, 255:red, 0; green, 0; blue, 0 }  ][line width=0.08]  [draw opacity=0] (15.6,-3.9) -- (0,0) -- (15.6,3.9) -- cycle    ;

		\draw (113.43,216.1) node [anchor=north west][inner sep=0.75pt]  [font=\Large]  {$\left( \pi ^{*} ,\mu ^{*}\right)$};
		\draw (146.63,18) node  [font=\LARGE] [align=left] {FPI};
		\draw (88.04,50) node  [font=\Large]  {$\pi _{0}$};
		\draw (204.43,49.8) node  [font=\Large]  {$\mu _{0}$};
		\draw (79.5,101) node  [font=\Large]  {$\pi _{\mu _{0}} \!\!=\!\pi _{1}$};
		\draw (211.03,105) node  [font=\Large]  {$\mu _{1} \!=\!\mu _{\pi _{1}}$};
		\draw (89.5,155.5) node  [font=\Large]  {$\pi _{\mu _{1}} \!\!=\!\pi _{2}$};
		\draw (198.43,150) node  [font=\Large]  {$\mu _{2} \!=\!\mu _{\pi _{2}}$};
		\draw (543,217.4) node [anchor=north west][inner sep=0.75pt]  [font=\Large]  {$\left( Q^{*} ,\mu ^{*}\right)$};
		\draw (575,18) node  [font=\LARGE] [align=left] {on-policy QMI};
		\draw (515.44,49) node  [font=\Large]  {$Q_{0}$};
		\draw (634.83,49.8) node  [font=\Large]  {$M_{0}$};
		\draw (502.9,99) node  [font=\Large]  {$Q_{M_{0}} \!\!\approx\! Q_{1}$};
		\draw (645,102.5) node  [font=\Large]  {$M_{1} \!\approx\! \mu _{Q_{1}}$};
		\draw (515.5,149) node  [font=\Large]  {$Q_{M_{1}} \!\!\approx\! Q_{2}$};
		\draw (632.83,149) node  [font=\Large]  {$M_{2} \!\approx\! \mu _{Q_{2}}$};
		\draw (323.83,217.1) node [anchor=north west][inner sep=0.75pt]  [font=\Large]  {$\left( Q^{*} ,\mu ^{*}\right)$};
		\draw (356,18) node  [font=\LARGE] [align=left] {off-policy QMI};
		\draw (298.44,49) node  [font=\Large]  {$Q_{0}$};
		\draw (417.83,49.8) node  [font=\Large]  {$M_{0}$};
		\draw (289.5,104) node  [font=\Large]  {$Q_{M_{0}} \!\!\approx\! Q_{1}$};
		\draw (426.43,99) node  [font=\Large]  {$M_{1} \!\approx\! \mu _{Q_{0}}$};
		\draw (292.5,146) node  [font=\Large]  {$Q_{M_{1}} \!\!\approx\! Q_{2}$};
		\draw (416,146) node  [font=\Large]  {$M_{2} \!\approx\! \mu _{Q_{1}}$};

	\end{tikzpicture}
	}
	\caption{Illustration of learning processes. Each arrow represents one iteration in FPI or one outer iteration in QMI, matching the end BR or IP with the population distribution or value function at the start. The dashed line in on-policy QMI represents behavior policy updates, making $M_k$ match the updated BR estimate $Q_{k}$.
	}\label{fig:qmi}
\end{figure}

\subsection{On-Policy QM Iteration} 

Still, BR and IP evaluations are executed parallelly in off-policy QMI, and thus its efficiency boost indirectly attributes to parallel computing. This naturally raises a question: can we directly approximate the FPI operator $\Gamma$ in one outer iteration?
The \emph{on-policy} variant of QM iteration provides a positive response.
This time, we pass to \cref{alg} the option \texttt{on-policy} and a general policy operator satisfying \cref{asmp:sarsa}, facilitating dynamic updates of agent's behavior policy within each outer iteration.
By syncing the behavior policy in accordance with the Q-value function, the policy learning process is governed by on-policy SARSA.
Additionally, since the agent now observes the state transition induced by the updated policy, the M-value function is updated towards the population distribution induced by the updated policy.
The learning process of on-policy QMI is illustrated in \cref{fig:qmi}.

On the other hand, constantly changing behavior policies in on-policy QMI yield nonstationary Markov chains.
Such nonstationarity renders the convergence guarantee of off-policy QMI not applicable here and complicates the convergence analysis of on-policy QMI.
Nonetheless, we establish a similar convergence guarantee for on-policy QMI (\cref{lem:sarsa}).
To achieve the sharp logarithmic dependency on $T$, we \emph{mix} the Q-value functions obtained in an outer iteration:
\(\operatorname{mix}\left( \{ Q_{k,l} \}_{l=0}^{t+1} \right) \coloneqq \sum_{l=0}^{t} (w_l /\sum_{l=0}^{t}w_l) Q_{k,l}\),
where $w_l \asymp t$, and use this convex combination to determine the behavior policy.

Theoretical results of on-policy SARSA are used to establish the convergence guarantee of on-policy QMI.
While the instability issue persists as in off-policy QMI, on-policy SARSA's adaptability and versatility---facilitated by its use of general policy operators---outstrip those of Q-learning, thus allowing us to directly impose the smoothness condition on $\Gamma_{\pi}$.

\begin{assumption}[Lipschitz continuous policy operator for SARSA] \label{asmp:sarsa}
	For any $Q_1,Q_2\in\R^{S\times A}$ and $s\in \mathcal{S}$, it holds that
	\[
		\|\Gamma_{\pi}(Q_1)[\cdot\given s] - \Gamma_{\pi}(Q_1)[\cdot\given s]\|_{\mathrm{TV}} \le L\|Q_1-Q_2\|_{2}
		.\]
	Furthermore, the Lipschitz constant satisfies $L\le \lambda_{\min}(1-\gamma)^2 /(2R\sigma)$.
\end{assumption}

\begin{remark}
\cref{asmp:ql} is weaker than \cref{asmp:sarsa} as the latter implies the former (see \cref{lem:diff}) and requires a small Lipschitz constant.
	However, verifying \cref{asmp:ql} can be difficult as the dependence of $P_{Q}$ on $Q$ can be intricate and the model is unknown, whereas \cref{asmp:sarsa} is more achievable given the flexibility in choosing policy operators for SARSA.
	For instance, the softmax function with an apt temperature parameter satisfies \cref{asmp:sarsa} \citep{gao2018PropertiesSoftmax}.
	Actually, a softmax policy operator imposes entropy regularization to the greedy selection \citep{gao2018PropertiesSoftmax}, a common technique used to stabilize the MFG learning process \citep{cui2022ApproximatelySolving,guo2022Entropyregularization,anahtarci2023Qlearningregularized}.
	Absent such regularization, \cref{asmp:ql} ensures training stability.
	Other assumptions have been posited for this purpose, such as a strongly convex Bellman operator \citep{anahtarci2019FittedQlearning}.
	Notably, \cref{asmp:contract} and \ref{asmp:ql} or \ref{asmp:sarsa} are not mutually exclusive; either \cref{asmp:ql} or \ref{asmp:sarsa} with some conditions on the reward function's smoothness and Lipschitz constants can imply \cref{asmp:contract} \citep{guo2019Learningmeanfield,anahtarci2019FittedQlearning}.
\end{remark}

\subsection{Comparison of Off- and On-Policy QMI}

\renewcommand{\check}{\CheckmarkBold}
\newcommand{\uncheck}{\XSolidBrush}
\begin{table}[H]
	\caption{Comparison of off- and on-policy QMI.
	} \label{tab:comp}
	\centering
	\resizebox{0.4\textwidth}{!}{
		\begin{tabular}{l|cc}
			           & \textbf{Off-Policy} & \textbf{On-Policy} \\
			\midrule
			\makecell[l]{Behavior policy within                   \\an outer iteration} & fixed               & adaptive           \\\hline\\[-0.9em]
			\cz{Policy type} & greedy            & soft             \\\hline\\[-0.9em]
			MFNE       & original            & regularized        \\\hline\\[-0.9em]
			\makecell[l]{Sample efficiency                        \\ boost mechanism} & parallel            & concurrent        \\\hline\\[-0.9em]
			\makecell[l]{Population-dependent                     \\transition kernels} & \uncheck            & \check
		\end{tabular}
	}
\end{table}


\cref{tab:comp} gives an overview of the differences between off- and on-policy variants of QMI.
By utilizing Q-learning with a greedy policy operator, off-policy QMI can learn a deterministic optimal policy of the original MFG.
On-policy QMI, on the other hand, utilizes SARSA with a \emph{soft} (non-deterministic) policy operator, meaning that the learned MFNE depends on the policy operator and corresponds to an \cz{implicitly} regularized MFG. Nevertheless, on-policy QMI affords flexibility in choosing a wider range of policy operators, and the soft policies it acquires exhibit greater robustness \citep{sutton2018Reinforcementlearninga}.
Furthermore, off-policy QMI boosts the sample efficiency by learning two policy-population sequences parallelly, while on-policy QMI directly boosts it by amalgamating the two steps in FPI into one.
Last but not least, on-policy QMI and its convergence guarantee can directly accommodate transition kernels that are dependent not only on behavior policy but also on population distribution. However, such a dependence breaks the parallel procedure in off-policy QMI. See \cref{sec:gmfg} for more discussion on population-dependent transition kernels.


\section{Sample Complexity Analysis} \label{sec:anlys} 

In this section, we establish the sample complexity guarantee for both the off- and on-policy variants of \cref{alg}, given our assumptions on MDPs (\cref{asmp:ergodic}), MFGs (\cref{asmp:contract}), and smoothness (\cref{asmp:ql} or \cref{asmp:sarsa}).
To assist the analysis, we define the operators presented in \cref{alg}, which correspond to those in \cref{def:fpi-op}.

\begin{definition}[QMI operators] \label{def:qmi-op}
	For off-policy QMI, the Q- and M-value function operators,
	\begin{align}
		\Gamma_{Q}(T) & : \Delta(\mathcal{S})\to \R^{S\times A},\  M_{k,0} \mapsto Q_{k,T} \quad \text{and} \\
		\Gamma_{M}(T) & : \R^{S\times A} \to \Delta(\mathcal{S}),\ Q_{k,0} \mapsto M_{k,T},
	\end{align}
return the updated Q- and M-value function after an outer iteration of \cref{alg}, consisting of $T$ online stochastic updates using Lines~\ref{line:q-update} and \ref{line:m-update}.
	Then, the off-policy QMI operator is the composition of the above two operators:
	\(
	\hat{\Gamma}_{\mathrm{off}}(T) \coloneqq \Gamma_{M}(T) \circ \Gamma_{Q}(T)
	.\)

	The on-policy QMI operator,
	\[
		\hat{\Gamma}_{\mathrm{on}}(T) : \Delta(\mathcal{S}) \to \Delta(\mathcal{S}),\ M_{k,0} \mapsto M_{k,T}
		,\]
	returns the updated M-value function after an outer iteration of \cref{alg}, consisting of $T$ online stochastic updates using Lines~\ref{line:q-update} and \ref{line:m-update}, as well as the policy updates using Line~\ref{line:p-update}.
\end{definition}

As mentioned in previous sections, the equivalence between off-policy QMI and FPI comes from the convergence of off-policy Q-learning and MCMC. Specifically, we have the following two lemmas.

\begin{lemma}[Sample complexity of Q-learning {\citep[Theorem 7]{qu2020Finitetimeanalysis}}] \label{lem:ql}
	Suppose \cref{asmp:ergodic,asmp:ql} hold for the greedy policy operator.
	With a step size of $\alpha_t \!\asymp\! 1 /(\lambda_{\min}(1\!-\!\gamma)t)$, for any $M\!\!\in\!\!\Delta(\mathcal{S})$, we have
	\[
		\EE\left\| \Gamma_{Q}(T)M - \Gamma_{\mathrm{BR}}M \right\|_{2}^2 = O\left( \frac{SAR^2\log T}{\lambda_{\min}^2(1-\gamma)^{5}T} \right)
		,\]
	where MDP components $\mathcal{S},\mathcal{A},R$, and $\gamma$ are defined in \cref{sec:mfg}, with $S$ and $A$ denote the cardinality of $\mathcal{S}$ and $\mathcal{A}$. $\sigma$ and $\lambda_{\min}$ are defined in \cref{asmp:ergodic}, and $L$ is defined in \cref{asmp:ql}.
\end{lemma}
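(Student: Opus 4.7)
The plan is to reduce \cref{lem:ql} to the finite-time analysis of asynchronous Q-learning in \citet{qu2020Finitetimeanalysis} by showing that an outer iteration of off-policy QMI, with the population distribution $M$ frozen and a fixed behavior policy, is literally an instance of the setting analyzed there. First I would fix $M \in \Delta(\mathcal{S})$ and define the auxiliary single-agent MDP with reward $\tilde r(s,a) \coloneqq r(s,a,M)$ and transition kernel $P$. Under this reduction, the Bellman operator $\T{Q,M}$ in \cref{eq:bellman} specialised to $\Gamma_{\pi}^{(\mathrm{max})}$ becomes the standard Bellman optimality operator \cref{eq:bellman-opt}, whose unique fixed point is $\Gamma_{\mathrm{BR}}M$ by \cref{def:fpi-op}. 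The inner loop of \cref{alg} under option \texttt{off-policy} with a fixed behavior policy $\pi_{k,0}$ then carries out exactly asynchronous Q-learning on this MDP, driven by the Markovian tuples $(s_t,a_t,s_{t+1})$ sampled along the Markov chain with transition kernel $P_{\pi_{k,0}}$.

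Second, I would verify that each hypothesis of \citet[Theorem 7]{qu2020Finitetimeanalysis} is implied by our assumptions. The uniform ergodicity and geometric mixing of the behavior chain, together with the positive visitation lower bound $\lambda_{\min}$ on every state--action pair, follow directly from \cref{asmp:ergodic} applied to $Q_{k,0}$; the bound on rewards $\tilde r \in [0,R]$ is inherited from the MFG definition; and the step-size schedule $\alpha_t \asymp 1/(\lambda_{\min}(1-\gamma)t)$ is precisely the rescaled harmonic schedule used in their analysis. \cref{asmp:ql} is not actually needed at this stage since $M$ and the behavior policy are frozen within the outer iteration, so the transition kernel does not vary during the $T$ updates; it will only play a role when chaining successive outer iterations in the overall QMI analysis.

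Third, I would translate their bound into the norm used here. \citet[Theorem 7]{qu2020Finitetimeanalysis} provides a bound on $\EE\|Q_{k,T} - Q^\star\|_{\infty}^2$ of order $SA R^2 \log T /(\lambda_{\min}^2 (1-\gamma)^5 T)$ (after tracking their constants through $\sigma$), with $Q^\star = \Gamma_{\mathrm{BR}}M$. Since $\|\cdot\|_2 \le \sqrt{SA}\|\cdot\|_\infty$ in finite dimensions, and since $\|\cdot\|_2^2 \le SA\|\cdot\|_\infty^2$, one directly obtains the claimed $O(SA R^2 \log T /(\lambda_{\min}^2 (1-\gamma)^5 T))$ rate after absorbing the extra $SA$ factor; depending on the exact form of Qu--Wierman's $\|\cdot\|_\infty$ bound, this may only contribute constants that are already hidden in the big-$O$.

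The main obstacle, and the only non-routine part, is checking that the constants in \citet{qu2020Finitetimeanalysis} match the way we have parameterised them here: their bound is stated in terms of a mixing quantity $\tau_{\mathrm{mix}}$ and a visitation constant that we must identify with $\sigma$ and $\lambda_{\min}$ from \cref{asmp:ergodic}. I would do this identification by invoking the standard equivalence between geometric mixing with rate $(m,\rho)$ and the $\sigma = \hat n + m\rho^{\hat n}/(1-\rho)$ definition, and by noting that the lower bound on $\mu_{Q}(s)\cdot\Gamma_{\pi}(Q)[a\given s]$ governs the effective per-coordinate learning rate in the asynchronous analysis. Once these identifications are made, the claim follows verbatim from their theorem, with no additional MFG-specific argument required.
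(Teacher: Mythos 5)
Your overall route is the same as the paper's: freeze $M$, observe that the inner loop of off-policy QMI is standard asynchronous Q-learning for the auxiliary MDP with reward $r(\cdot,\cdot,M)$ and fixed behavior policy, invoke \citet[Theorem 7]{qu2020Finitetimeanalysis}, identify their mixing and visitation constants with $\sigma$ and $\lambda_{\min}$ from \cref{asmp:ergodic}, and pass from $L_\infty$ to $L_2$ at the cost of an $SA$ factor. Your side remark that \cref{asmp:ql} plays no role within a single outer iteration is also consistent with the paper's argument, which never uses $L$ in this lemma.

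One concrete step is misstated. \citet[Theorem 7]{qu2020Finitetimeanalysis} is a \emph{high-probability} $L_\infty$ bound, not a bound on $\EE\|Q_{k,T}-Q^\star\|_\infty^2$ as you assert. The paper performs an explicit conversion: $\EE\|\cdot\|_\infty^2 \le (1-\delta)\cdot O\bigl(R^2\log(T/\delta)/(\lambda_{\min}^2(1-\gamma)^{5}T)\bigr) + \delta\cdot(2R)^2$, and then chooses $\delta = O\bigl(\log T/(\lambda_{\min}(1-\gamma)^{5}T)\bigr)$ so that the failure event contributes at the same order; without this step (or some equivalent use of the uniform boundedness of Q-values) the expectation bound does not follow from the cited theorem. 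Relatedly, your bookkeeping of the $SA$ factor is off: the $L_\infty$ bound carries only logarithmic dependence on $S$ and $A$, and the single factor of $SA$ in the final statement comes entirely from $\|\cdot\|_2^2 \le SA\|\cdot\|_\infty^2$, so there is no ``extra'' $SA$ to absorb. The final rate you state is nonetheless the correct one.
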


\begin{lemma}[Sample complexity of stationary MCMC {\citep[Theorem 3.1]{latuszynski2013Nonasymptoticbounds}}] \label{lem:mcmc}
	Suppose \cref{asmp:ergodic} holds.
	With a step size of $\beta_t \asymp 1 /t$, for any $Q\in \R^{S\times A}$, we have
	\[
		\EE\left\| \Gamma_{M}(T)Q - \Gamma_{\mathrm{IP}}Q \right\|_{2}^2 = O\left( \frac{SA}{(1-\rho)^2T} \right)
		.\]
\end{lemma}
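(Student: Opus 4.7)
The plan is to reduce the M-value iterate to a standard MCMC empirical-measure estimator and then apply a non-asymptotic mean-squared error bound for Markov averages under the uniform mixing from \cref{asmp:ergodic}. With $\beta_t = 1/(t+1)$, the recursion $M_{t+1} = (1-\beta_t)M_t + \beta_t\delta_{s_{t+1}}$ admits the telescoping identity $(t+1)M_{t+1} = tM_t + \delta_{s_{t+1}}$, whose unrolling gives $M_T = \hat{\mu}_T := T^{-1}\sum_{l=1}^{T}\delta_{s_l}$, i.e.\ the empirical measure of the Markov chain run under the fixed policy $\Gamma_{\pi}(Q)$. Consequently, the target $\mathbb{E}\|\Gamma_M(T)Q - \Gamma_{\mathrm{IP}}Q\|_2^2$ equals $\mathbb{E}\|\hat{\mu}_T - \mu_Q\|_2^2$, the MSE of the canonical MCMC estimator of the stationary law $\mu_Q$, which eliminates all dependence on the initialization $M_{k,0}$.

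Next I would perform a componentwise bias--variance decomposition
\[
\mathbb{E}\|\hat{\mu}_T - \mu_Q\|_2^2 = \sum_{s \in \mathcal{S}} \Bigl[\bigl(\mathbb{E}\hat{\mu}_T(s) - \mu_Q(s)\bigr)^2 + \mathrm{Var}(\hat{\mu}_T(s))\Bigr].
\]
For the bias, the uniform mixing bound $\|P_{\pi}^l(s_0,\cdot) - \mu_Q\|_{\mathrm{TV}} \le m\rho^l$ in \cref{asmp:ergodic} controls $|\mathbb{E}\mathbbm{1}\{s_l = s\} - \mu_Q(s)|$ by $m\rho^l$ uniformly in $s_0$, so the per-coordinate bias is at most $m/((1-\rho)T)$ and the squared-bias contribution is $O(S/((1-\rho)^2 T^2))$ after summation over $\mathcal{S}$.

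For the variance, I would expand $\mathrm{Var}(\hat{\mu}_T(s)) = T^{-2}\sum_{l,l'} \mathrm{Cov}(\mathbbm{1}\{s_l=s\},\mathbbm{1}\{s_{l'}=s\})$ and bound each covariance by $Cm\rho^{|l-l'|}$ using the same uniform mixing together with a one-step coupling to the stationary chain. Summing the resulting two-sided geometric series gives $\mathrm{Var}(\hat{\mu}_T(s)) = O(1/((1-\rho)T))$ per coordinate, hence a total variance contribution of $O(S/((1-\rho)T))$. Combining the two yields the advertised $O(S/((1-\rho)^2 T))$ rate; the $A$ factor appearing in the statement is harmless and simply aligns the cardinality constant with the Q-learning bound in \cref{lem:ql}.

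The main technical obstacle is the covariance estimate in the non-stationary regime: because the chain need not start in $\mu_Q$, one cannot identify $\mathrm{Cov}(\mathbbm{1}\{s_l=s\},\mathbbm{1}\{s_{l'}=s\})$ with a stationary spectral-gap expression. The correct argument crucially leverages the uniform-in-initial-state clause of \cref{asmp:ergodic} to couple the two chains so that they agree after a geometric number of steps, giving the required $\rho^{|l-l'|}$ decay; the telescoping step in the first paragraph is what makes this clean, since otherwise a generic $\beta_t$ would attach non-uniform weights $\prod_{r>l}(1-\beta_r)$ to each sample and obstruct the empirical-measure reduction. This non-stationary MCMC analysis is exactly the content of Theorem~3.1 of \citet{latuszynski2013Nonasymptoticbounds}, which we invoke rather than reprove.
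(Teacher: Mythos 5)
Your proposal is correct, and it reaches the stated bound by a route that is partly more self-contained than the paper's. The paper's own argument for \cref{lem:mcmc} is essentially a citation: it recasts \cref{asmp:ergodic} in the small-set language of \citet{meyn2012Markovchains} so that Theorem~3.1 / Remark~4.3 of \citet{latuszynski2013Nonasymptoticbounds} applies coordinatewise, yielding $\EE|(\Gamma_M(T)Q)_s - (\Gamma_{\mathrm{IP}}Q)_s|^2 = O(1/((1-\rho)^2T))$, and then converts $L_\infty$ to $L_2$ to pick up the cardinality factor. You instead first make explicit the reduction that the paper only gestures at in \cref{sec:m-update}: with $\beta_t = 1/(t+1)$ the recursion telescopes to the empirical measure $T^{-1}\sum_{l=1}^T \delta_{s_l}$ (correctly killing the initialization, since $\beta_0 = 1$), and then you run a direct bias--variance argument using the uniform-in-initial-state mixing bound, getting squared bias $O(S/((1-\rho)^2T^2))$ and variance $O(S/((1-\rho)T))$. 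This is a valid elementary derivation that avoids the small-set recasting entirely, and in fact gives a slightly sharper $(1-\rho)$-dependence than the stated bound (which is harmless for an upper bound); your remark that the extra $A$ factor is cosmetic matches the paper's norm-conversion step. The one thing to note is that your closing sentence defers the covariance/coupling estimate back to the very theorem the paper cites, so in the end both arguments rest on the same external result; the added value of your version is that the empirical-measure identity and the bias--variance decomposition make transparent \emph{why} that result applies and why the specific step size $\beta_t \asymp 1/t$ is the right choice.
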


The preceding lemmas demonstrate that off-policy QMI efficiently approximates FPI, with the Q-value and M-value updates evaluating the BR and IP, respectively.

\cref{lem:ql,lem:mcmc} are not applicable to on-policy QMI, where transition kernels are nonstationary.
Nonetheless, we can establish the following lemma.

\begin{lemma}[Sample complexity of nonstationary MCMC with SARSA] \label{lem:sarsa}
	Suppose \cref{asmp:ergodic,asmp:sarsa} hold for the chosen policy operator.
	With a step size of $\alpha_t \asymp 1 /(\lambda_{\min}(1-\gamma)t)$ and $\beta_t \asymp 1 /t$, for any $M\in \Delta(\mathcal{S})$, we have
	\[
		\EE\left\| \hat{\Gamma}_{\mathrm{on}}(T)M - \Gamma M \right\|_{2}^2 = O\left( \frac{SAR^2L^2\sigma^2\log T}{\lambda^2_{\min}(1-\gamma)^{4}T} \right)
		.\]
\end{lemma}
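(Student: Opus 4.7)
I would bound $\|\hat{\Gamma}_{\mathrm{on}}(T)M - \Gamma M\|_{2}$ via a triangle-inequality decomposition,
\[
\|M_{k,T} - \mu_{Q_{M}}\|_{2} \le \|M_{k,T} - \mu_{Q_{k,T}}\|_{2} + \|\mu_{Q_{k,T}} - \mu_{Q_{M}}\|_{2},
\]
so that the first term captures the nonstationary MCMC tracking error and the second captures how the induced population responds to the Q-error produced by the SARSA-style updates. This mirrors the division of labor in \cref{lem:ql,lem:mcmc}, but both pieces now have to be re-proved under a drifting transition kernel generated by the mixed behavior policy of \cref{line:p-update}.

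For the second term I would first establish a finite-time SARSA bound of the form $\EE\|Q_{k,T} - Q_{M}\|_{2}^{2} = O\!\left(SAR^{2}\sigma^{2}\log T/(\lambda_{\min}^{2}(1-\gamma)^{4}T)\right)$ by adapting the Markovian-sampling SARSA analysis of \citet{zou2019Finitesampleanalysis} to the present mixed behavior policy $\Gamma_{\pi}(\operatorname{mix}(\{Q_{k,l}\}))$: the averaging weights $w_{l}\asymp t$ are chosen precisely so that the bias induced by policy drift remains summable and the $\log T/T$ rate is preserved. I would then convert this Q-error to an IP-error through the Lipschitz continuity of the stationary distribution in $Q$: \cref{asmp:sarsa} makes $P_{\pi}$ Lipschitz in $Q$ with constant $L$, and a standard ergodic-chain perturbation inequality (with sensitivity governed by the mixing constant $\sigma$ of \cref{asmp:ergodic}) yields $\|\mu_{Q_{1}} - \mu_{Q_{2}}\|_{2} \lesssim L\sigma\|Q_{1} - Q_{2}\|_{2}$. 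Squaring produces exactly the $L^{2}\sigma^{2}$ factor in the target rate.

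For the first term I would decompose the error as a per-step tracking bias against the instantaneous stationary target $\mu_{Q_{k,t}}$, plus a kernel-drift correction accounting for $\mu_{Q_{k,t}}$ itself moving. The tracking bias is handled by a Poisson-equation / drift argument in the spirit of \cref{lem:mcmc}, but applied uniformly over the visited policies; the uniform mixing of \cref{asmp:ergodic} makes this step kernel-independent, contributing $O(\sigma\beta_{t})$ per step. The drift correction uses $\|Q_{k,t+1} - Q_{k,t}\|_{2} = O(\alpha_{t})$ together with the Lipschitz bound above to give $\|\mu_{Q_{k,t+1}} - \mu_{Q_{k,t}}\|_{2} = O(L\sigma\alpha_{t})$. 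Summing both contributions with $\alpha_{t}\asymp 1/(\lambda_{\min}(1-\gamma)t)$ and $\beta_{t}\asymp 1/t$ produces a bound of the same order as the second term, and the two combine to yield the advertised rate.

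The main obstacle is the feedback loop between the Q- and M-updates: a single Markovian trajectory drives both, and the kernel generating that trajectory is itself a function of the mixed Q-iterate through \cref{line:p-update}. This coupling precludes direct appeal to either \cref{lem:ql} (which needs a fixed behavior policy) or \cref{lem:mcmc} (which needs a stationary chain). Three ingredients should resolve it: (i) the explicit smallness of $L$ imposed in \cref{asmp:sarsa}, which keeps the SARSA contraction intact under kernel drift; (ii) the uniform mixing parameters $m,\rho,\sigma$ of \cref{asmp:ergodic}, which give kernel-independent mixing across every policy visited along the run; and (iii) the Q-iterate averaging in \cref{line:p-update}, which damps high-frequency oscillations of the behavior policy and is precisely what permits the sharp $\log T$ dependence rather than a polynomial one.
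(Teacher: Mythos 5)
Your plan assembles the right ingredients---a finite-time SARSA bound, the perturbation inequality $\|\mu_{Q_1}-\mu_{Q_2}\|\lesssim L\sigma\|Q_1-Q_2\|$, a drift term of order $\alpha_t$ for the moving target, and uniform mixing---and these all appear in the paper's argument (as \cref{lem:control}, \cref{lem:diff}, \cref{lem:progress}, and \cref{asmp:ergodic} respectively). But the proposal has a genuine gap exactly at the point you flag as ``the main obstacle'': the cross-correlation between the M-iterate and the Markovian observation that drives it. The term you must control is of the form $\EE\langle M_t-\mu,\;\delta_{t+1}-\EE[\delta_{t+1}]\rangle$, and it does not vanish because $\delta_{t+1}$ and $M_t$ are both functions of the same trajectory under a kernel that is itself changing with $t$. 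Naming a ``Poisson-equation / drift argument'' does not resolve this; the paper's resolution is a concrete \emph{backtracking} construction (\cref{lem:mix,lem:back}): freeze the behavior policy at $\Gamma_{\pi}(\hat{Q}_{t-\tau})$ on a virtual trajectory, exploit conditional independence given $\mathcal{F}_{t-\tau}$ to get a bias of $m\rho^{\tau}$ from geometric mixing, bound the actual-versus-virtual discrepancy by the accumulated policy drift over the window (which is where \cref{asmp:sarsa}'s Lipschitz constant enters, via $L\sum_{l=t-\tau}^{t}\EE\|\hat Q_l-\hat Q_{t-\tau}\|$), and finally choose $\tau\asymp\log t$ to balance the two. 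Without some such device, or a genuinely worked-out Poisson-equation alternative, the tracking term is unproven.

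A second, quantitative gap: you assert the tracking bias contributes ``$O(\sigma\beta_t)$ per step'' and that ``summing both contributions'' yields the advertised rate. Summing $O(\beta_t)$ with $\beta_t\asymp 1/t$ gives $O(\log T)$, not $O(\log T/T)$. The $1/T$ factor does not come from summation; it comes from the negative drift $-2\beta_t\EE\|\Delta_t\|^2$ contributed by the $-\beta_t(M_t-\cdot)$ part of the update, which turns the one-step inequality into the recursion $(t+1)\EE\|\Delta_{t+1}\|^2\le t\,\EE\|\Delta_t\|^2+C\log t/t$ that the paper then unrolls. Your top-level triangle-inequality decomposition against the moving target $\mu_{Q_{k,T}}$ obscures this contraction, because the contraction lives in the squared distance to a \emph{fixed} reference ($\mu^{\dagger}=\Gamma M$ in the paper), not to a target that drifts with $t$. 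To close the argument you would need to set up a Lyapunov recursion of this type and show every error source enters multiplied by $\beta_t$, which is precisely what the paper's five-way decomposition of $\delta_{t+1}-M_t$ in \cref{lem:decomp} accomplishes.
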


\begin{figure*}
	\centering
	\begin{subfigure}[b]{0.26\textwidth}
		\centering
		\includegraphics[width=\linewidth]{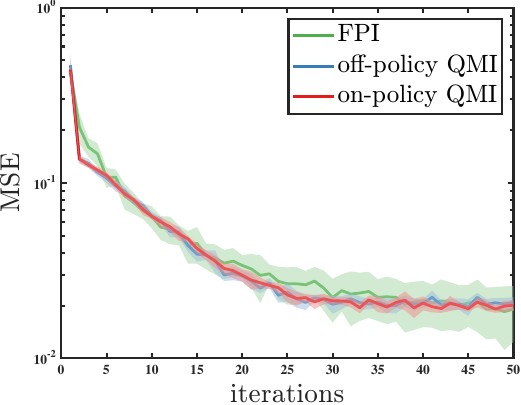}
		\caption{Mean squared error}
	\end{subfigure}
	\hspace{0.5em}
	\begin{subfigure}[b]{0.26\textwidth}
		\centering
		\includegraphics[width=\linewidth]{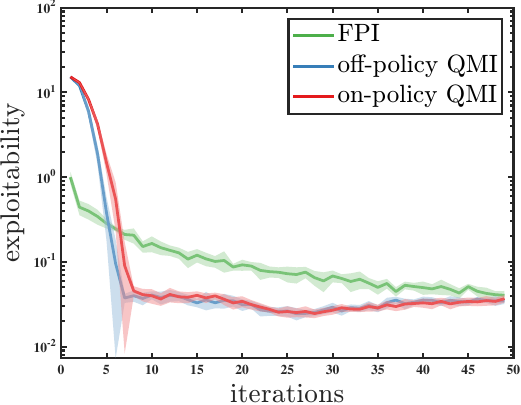}
		\caption{Exploitability}
	\end{subfigure}
	\begin{subfigure}[b]{0.288\textwidth}
		\centering
		\includegraphics[width=0.888\linewidth]{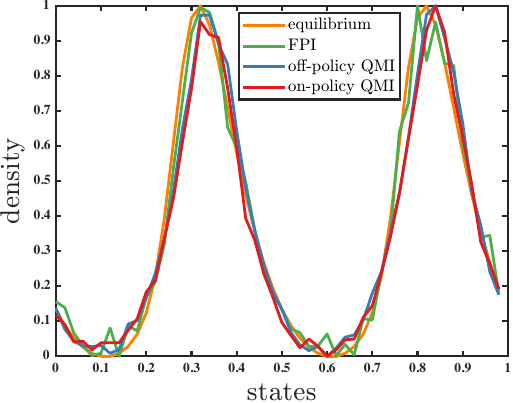}
		\caption{Learned population distributions}
	\end{subfigure}
 \vspace{0.5cm}
	\caption{Performance comparison of FPI, off-policy QMI, and on-policy QMI on ring road speed control.
		MSE represents the mean squared $L_2$ error between the current M-value function and the MFNE population distribution.
		Exploitability refers to the disparity between the current value function and the BR w.r.t. the current population distribution. 
		Learned population distributions are scaled for better visualization.
	}
	\label{fig:rr}
\end{figure*}

\begin{figure*}
	\centering
	\begin{subfigure}{0.26\textwidth}
		\centering
		\includegraphics[width=\linewidth]{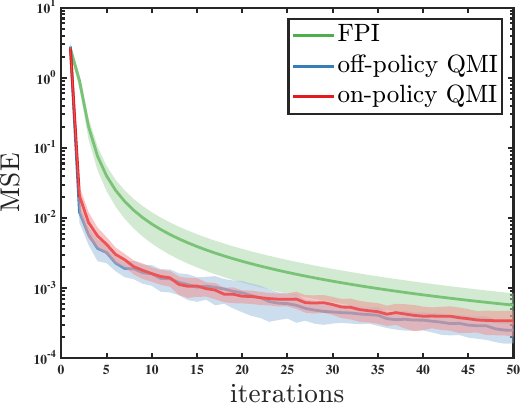}
		\caption{Mean squared error}
	\end{subfigure}
	\hspace{0.5em}
	\begin{subfigure}[b]{0.26\textwidth}
		\centering
		\includegraphics[width=\linewidth]{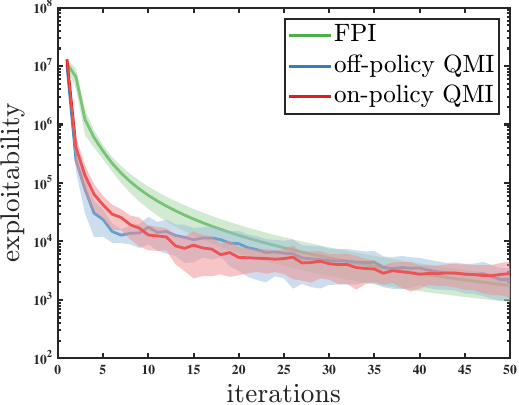}
		\caption{Exploitability}
	\end{subfigure}
	\hspace{0.5em}
	\begin{subfigure}[b]{0.25\textwidth}
		\centering
		\includegraphics[width=0.98\linewidth]{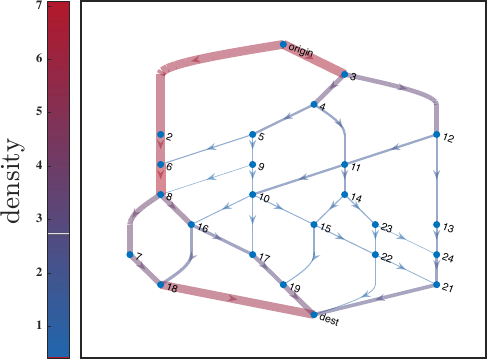}
		\vspace{0.3cm}
		\caption{Learned population distribution} 
	\end{subfigure}\hspace{1em}
 \vspace{0.5cm}
	\caption{Performance comparison of FPI, off-policy QMI, and on-policy QMI on Sioux Falls network routing. Only the population distribution learned by off-policy QMI is shown in (c); other methods
	give similar population distributions (please refer to \cref{sec:exp-rr}).}
	\label{fig:graph}
	\vspace{0.5em}
\end{figure*}

An outer iteration of on-policy QMI corresponds to a nonstationary MCMC. Thus,
to prove \cref{lem:sarsa}, we employ a \emph{backtracking} procedure, a technique developed in \citet{zou2019Finitesampleanalysis} to address nonstationarity in stochastic approximation methods.
The key idea is that in order to exploit the mixing property of stationary Markov chains (\cref{asmp:ergodic}), we virtually backtrack a period $\tau$, and generate a virtual trajectory where the agent follows the fixed behavior policy $\pi_{t-\tau}:= \Gamma_{\pi}(Q_{t-\tau})$ after time step $t-\tau$.
This virtual trajectory is stationary after time step $t-\tau$ and rapidly converges to the steady distribution induced by $\pi_{t-\tau}$, denoted by $\mu_{t-\tau}$. Next, the convergence of SARSA confirms that $\mu_{t-\tau}$ converges to the steady distribution induced by the BR w.r.t. $M$, denoted by $\mu_{M}\coloneqq \Gamma M$.
Let $s_t$ and $\tilde{s}_t$ be the state at time step $t$ on the actual and virtual trajectories, respectively.
Let $\pi_t$ be the (actual) behavior policy at time step $t$, with its induced steady distribution denoted as $\mu_t$.
Then, the proof sketch for \cref{lem:sarsa} can be succinctly portrayed as:
\[
	\mathrlap{\underbrace{\phantom{\strut s_t \approx \tilde{s}_t}}_{H_1,\text{backtrack}}}
	\ \ s_t\,\approx \,\overbrace{\tilde{s}_t \overunderset{d}{\scriptscriptstyle\tau\to\infty}{\longrightarrow} s}^{H_2,\text{mix}}
	\sim
	\mathrlap{\underbrace{\phantom{\strut\mu_{t-\tau} \approx \mu_t}}_{H_{3},\text{progress}}}
	\mu_{t-\tau} \approx \overbrace{\mu_t \overunderset{L_2}{\scriptscriptstyle t\to\infty}{\longrightarrow} \mu_{M}}^{H_4,\text{SARSA}}
	,\]
	where the backtracking discrepancy $H_1$ and the distribution progress $H_3$ are controlled by the virtual period $\tau$ (\cref{lem:progress,lem:back}), while the two convergence rates $H_2$ and $H_4$ are characterized by the geometric ergodicity of stationary MDPs and the sample complexity of SARSA (\cref{lem:control,lem:mix}), respectively.
In brief, we show that the agent's state distribution, and thus its M-value function, rapidly converges to the IP $\mu_{M}$.

Given the above lemmas, we are ready to compose the convergence guarantee and sample complexity of QMI.

\begin{theorem}[Sample complexity of QMI] \label{thm}
	Suppose \cref{asmp:contract,asmp:ergodic} hold, and \cref{asmp:ql,asmp:sarsa} hold for off- and on-policy QMI, respectively.
	Let $\mu^*$ be the MFNE population distribution.
	Then \cref{alg} returns an $\epsilon$-approximate MFNE, that is,
	\[
		\EE\left\| M_{K,T} - \mu^* \right\|_{2}^2 =
		\EE\| \hat{\Gamma}(T)^{K}M_0 - \mu^* \|_{2}^2 \le \epsilon^2
		,\]
	where $\hat{\Gamma}$ can be either $\hat{\Gamma}_{\mathrm{off}}$ or $\hat{\Gamma}_{\mathrm{on}}$,
	with the number of iterations being at most
	\[
		K = O\left( \kappa^{-1}\log \epsilon^{-1} \right),\quad
		T = C \cdot O\left( \kappa^{-2}\epsilon^{-2} \log \epsilon^{-1} \right)
		,\]
	where
	\[
		C \le \frac{SAR^2L^2\sigma^2}{\lambda^2_{\min}(1-\gamma)^{5}}
		.\]
\end{theorem}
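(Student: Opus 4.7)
The plan is to combine a per-outer-iteration approximation guarantee, already packaged by \cref{lem:ql,lem:mcmc,lem:sarsa}, with the geometric contraction of $\Gamma$ from \cref{asmp:contract}. For on-policy QMI, \cref{lem:sarsa} directly bounds the one-step error $\xi_k := \|M_{k+1} - \Gamma M_k\|_2$ in mean square by $O(C \log T / T)$, uniformly in $M_k$. For off-policy QMI, I exploit the parallel two-step structure noted after \cref{alg}: the $k$-th outer iteration produces $Q_{k+1} = \Gamma_Q(T) M_k$ and $M_{k+1} = \Gamma_M(T) Q_k$, so the even sub-sequence satisfies
\[
M_{2k+2} \approx \Gamma_{\mathrm{IP}} Q_{2k+1} \approx \Gamma_{\mathrm{IP}} \Gamma_{\mathrm{BR}} M_{2k} = \Gamma M_{2k}.
\]
Working on this sub-sequence, I split $M_{2k+2} - \Gamma M_{2k}$ by the triangle inequality into the stationary MCMC error from \cref{lem:mcmc} applied at $Q_{2k+1}$, plus the term $\Gamma_{\mathrm{IP}} Q_{2k+1} - \Gamma_{\mathrm{IP}} \Gamma_{\mathrm{BR}} M_{2k}$, which I bound by a uniform Lipschitz estimate for $\Gamma_{\mathrm{IP}}$ (obtained from \cref{asmp:ergodic} and \cref{asmp:ql}) times the Q-learning error from \cref{lem:ql}. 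This yields a uniform bound $\mathbb{E}\xi_k^2 \le C' \log T / T$ of the order claimed in the theorem; the factor from iterating two outer steps is absorbed into $O(\cdot)$.

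\textbf{Contraction recursion.} Let $e_k := \|M_k - \mu^*\|_2$ and let $\xi_k$ denote the per-iteration error above. By \cref{asmp:contract} and the triangle inequality,
\[
e_{k+1} \le \|\Gamma M_k - \Gamma \mu^*\|_2 + \xi_k \le (1-\kappa)\, e_k + \xi_k.
\]
Unrolling gives $e_K \le (1-\kappa)^K e_0 + \sum_{j=0}^{K-1} (1-\kappa)^{K-1-j} \xi_j$. Squaring, applying $(a+b)^2 \le 2a^2 + 2b^2$, and then Cauchy--Schwarz on the weighted sum with weights $c_j := (1-\kappa)^{K-1-j}$ (whose total is at most $1/\kappa$), I obtain
\[
e_K^2 \le 2(1-\kappa)^{2K} e_0^2 + \tfrac{2}{\kappa} \sum_{j=0}^{K-1} c_j\, \xi_j^2.
\]
Taking expectation and inserting the uniform per-iteration bound from the previous paragraph,
\[
\mathbb{E}\, e_K^2 \le 2(1-\kappa)^{2K} e_0^2 + \tfrac{2}{\kappa^2} \cdot \tfrac{C' \log T}{T}.
\]

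\textbf{Balancing and main obstacle.} It remains to force each of the two terms on the right-hand side to be at most $\epsilon^2 / 2$. The geometric term imposes $K = O(\kappa^{-1} \log \epsilon^{-1})$, using that $e_0$ is bounded by a universal constant for probability measures on $\mathcal{S}$; the stochastic term imposes $C' \log T / T \le \kappa^2 \epsilon^2 / 4$, which is met by $T = O(C' \kappa^{-2} \epsilon^{-2} \log \epsilon^{-1})$ after absorbing $\log C'$ and $\log \kappa^{-1}$ into $\log \epsilon^{-1}$. Substituting the constant $C'$ supplied by \cref{lem:ql,lem:mcmc,lem:sarsa} yields the stated value of $C$. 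The main technical obstacle I anticipate is the off-policy case: establishing the uniform-in-$Q$ Lipschitz continuity of $\Gamma_{\mathrm{IP}}$ is a standard but delicate computation via the resolvent of the transition kernel, using the uniform mixing constant $\sigma$ from \cref{asmp:ergodic} and the kernel smoothness in \cref{asmp:ql}. Once this Lipschitz estimate is available, the on- and off-policy analyses collapse into the same contractive recursion above, and the remainder is bookkeeping.
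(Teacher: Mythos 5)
Your proposal is correct and follows essentially the same route as the paper: the per-iteration error is decomposed identically (MCMC error plus a perturbation of $\Gamma_{\mathrm{IP}}$ controlled by the Q-learning/SARSA error, via \cref{lem:ql,lem:mcmc,lem:sarsa}), and the contraction recursion then yields the $(1-\kappa)^{K}e_0^2 + O(\kappa^{-2})\cdot C\log T/T$ bound---the paper just runs the recursion with Young's inequality on squared norms at each step rather than unrolling the triangle inequality and applying Cauchy--Schwarz, which is an algebraic variation with the same outcome. The uniform Lipschitz continuity of $\Gamma_{\mathrm{IP}}$ that you flag as the main obstacle is exactly what the paper imports as \cref{lem:diff} (Mitrophanov's perturbation bound, giving $\|\mu_{Q_1}-\mu_{Q_2}\|_{\mathrm{TV}}\le\sigma\|P_{Q_1}-P_{Q_2}\|_{\mathrm{TV}}$), combined with \cref{asmp:ql}, so no new computation is needed.
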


Our complexity results match the prior work on learning MFGs \citep{anahtarci2023Qlearningregularized} and are consistent with stochastic approximation methods \citep{latuszynski2013Nonasymptoticbounds,zou2019Finitesampleanalysis,qu2020Finitetimeanalysis}.

\ifSubfilesClassLoaded{\bibliography{mfg}}{}

\section{Numerical Experiments} \label{sec:exp} 

In this section, we present two experiments demonstrating the effectiveness of our methods.
We compare our methods with model-based FPI using two key metrics: the mean squared error (MSE) of the population distribution and the exploitability of the policy.
For a finite state space with the trivial metric, the total variation distance equals the 1-Wasserstein distance \citep{gibbs2002ChoosingBounding}, and is equivalent to the Euclidean norms.
Thus, we consider the $L_2$ MSE between the current M-value function and the MFNE population distribution:
\[
	\operatorname{MSE}(M) \coloneqq \|M_{k} - \mu^*\|_{2}^2 = \sum_{s\in \mathcal{S}} \left(M_{k}(s) - \mu^*(s)\right)^2
	.\]
\citet{perrin2020Fictitiousplay} defines the exploitability of a policy as follows:
\[
	\operatorname{exploitability}(\pi)  \coloneqq \max_{\pi'} \mathbb{E}_{s\sim \mu_{\pi}} V(s;\pi',\mu_{\pi}) - \mathbb{E}_{s\sim \mu_{\pi}}V(s;\pi,\mu_{\pi})
	,\]
where $V(s;\pi,\mu)$ is the value function determined by policy $\pi$ and population distribution $\mu$.
Given a policy operator, the exploitability quantifies the gap between the current value function $Q$ and the BR w.r.t. the population distribution induced by $Q$. We denote this BR by $Q_{\m{Q}}$, and calculate $\|Q_{\m{Q}}-Q\|$ as the exploitability in practice.

For model-based FPI, we use value iteration to calculate BRs \citep{sutton2018Reinforcementlearninga}:
\[\label{eq:V}
	V_{t+1}(s) = \max_{a\in \mathcal{A}} \left\{r(s,a,\mu) + \gamma \sum_{s'} P(s'\given s,a)V_t(s')\right\}
	,\]
and the induced population distributions are directly calculated using \cref{eq:transition}.
Model-based FPI assumes full knowledge of the state-action space, reward function, as well as transition dynamics.
During each iteration of value iteration and population update using \cref{eq:transition}, all $S$ values are updated without any random sampling, and we refer to such an iteration as a \emph{sweep}. It is expected that in order for online sampling to replicate the effects of a sweep, the number of samples should be at least $S$. Furthermore, since QMI assumes no knowledge of the action space and the reward function, it may require $A$ samples to achieve the same effect as the $\max$ calculation in \cref{eq:V}. The randomness in sampling can further impact efficiency.
Therefore, we introduce a \emph{sample compensation factor} $\eta$ to relate the number of samples to the number of sweeps. Specifically, let $T_{\mathrm{QMI}}$ and $T_{\mathrm{FPI}}$ be the number of inner iterations of QMI (\cref{alg}) and the number of sweeps of value iteration and population update in FPI respectively; we let
\[
	T_{\mathrm{QMI}} = \eta S T_{\mathrm{FPI}}
	.\]

	Please refer to \cref{sec:apx-exp} for additional experiments on different sample compensation factors, which suggest that a small sample compensation factor is sufficient for QMI.
In this section, we fix $\eta$ as an algorithmic parameter.
Please note that we do not claim that our methods outperform FPI in all scenarios as they are different types of algorithms designed for different situations.

\paragraph{Speed control on a ring road.}

We consider a speed control game of autonomous vehicles on a ring road, i.e., the unit circle $\SS^{1} \cong [0,1)$, as illustrated in \cref{fig:toy}.
At location $s\in \SS^{1}$, the representative vehicle selects a speed $a$, and then moves to the next location following transition \(s' = s+a\Delta t \pmod{1}\),
where $\Delta t$ is the time interval between two consecutive decisions.
Without loss of generality, we assume that the speed is bounded by $1$, i.e., the speed space is also $[0,1)$.
Then we discretize both the location space and the speed space using a granularity of $\Delta s = \Delta a = 0.02$.
Thus, both our discretized state (location) space and action (speed) space can be represented by $\mathcal{S} = \mathcal{A} = \{0, 0.02, \ldots, 0.98\} \cong [50]$.
By the Courant-Friedrichs-Lewy condition, we choose the time interval to be $\Delta t = 0.02 \le \Delta s / \max a$.
The objective of a vehicle is to maintain some desired speed while avoiding collisions with other vehicles. Thus, it needs to reduce the speed in areas with high population density.
A classic cost function for this goal is the Lighthill-Whitham-Richards function:
\[
	r^{(\mathrm{LWR})}(s,a,\mu) = - \frac{1}{2}\left(\left(1-\frac{\mu(s)}{\mu_{\mathrm{jam}}}\right) - \frac{a}{a_{\mathrm{max}}}\right)^2  \Delta s
	,\]
where $\mu_{\mathrm{jam}}$ is the jam density, and $a_{\mathrm{max}}$ is the maximum speed.
However, in an infinite horizon game, this cost function induces a \emph{trivial} MFNE, where the equilibrium policy and population are both constant across the state space.
Therefore, we introduce a stimulus term $b$ that varies across different locations:
\[
	r(s,a,\mu) = - \frac{1}{2}\left(b(s) + \frac{1}{2}\left(1-\frac{\mu(s)}{\mu_{\mathrm{jam}}}\right) - \frac{a}{a_{\mathrm{max}}}\right)^2  \Delta s
	,\]
where the factor of one-half before the population distribution term is included to account for the presence of the new stimulus term.
This new cost function makes the MFNE more complex and corresponds to real-world situations where vehicles may have distinct desired speeds at different locations due to environmental variations.
Specifically, we choose the stimulus term as $b(s) = 0.2 (\sin(4\pi s) + 1)$, and set $\mu_{\mathrm{jam}} = 3 / S$ and $a_{\mathrm{max}}=1$. The performance comparison is reported in \cref{fig:rr}.

%

\paragraph{Routing game on a network.}
We consider a routing game on the Sioux Falls network, a graph with $24$ nodes and $74$ directed edges. We designate node $1$ as the starting point and node $20$ as the destination.
To construct an infinite-horizon game, we add a \emph{restart} edge $e_{75}$ from the destination back to the starting point.
On each edge, a vehicle selects its next edge to travel to. We consider a deterministic environment, meaning that the vehicle will follow the chosen edge without any randomness.
Therefore, both the state space and the action space can be represented by the edge set, i.e., $\mathcal{S} = \mathcal{A} = \{e_1,\ldots,e_{75}\} \cong [75]$, where $e_{75}$ is the restart edge.
It is worth noting that a vehicle can only select from the outgoing edges of its current location as its next edge.

The objective of a vehicle is to reach the destination as fast as possible. Due to congestion, a vehicle spends a longer time on an edge with higher population distribution. Specifically, the cost (time) on a non-restart edge is \(r^{(\text{cong.})}(s,a,\mu) = - c_1\mu(s)^2 \mathbbm{1}  \{s \neq e_{75}\} \),
where $c_1$ is a cost constant. To drive the vehicle to the destination, we impose a reward at the restart edge: $r^{(\text{term.})}(s,a,\mu) = c_2 \mathbbm{1}\{s = e_{75}\}$. Together, we get the cost function:
\[
	r(s,a,\mu) = \underbrace{- c_1\mu(s)^2 \mathbbm{1} \{s \neq e_{75}\}}_{\text{congestion cost}}  + \underbrace{c_2 \mathbbm{1}\{s = e_{75}\}
	}_{\text{terminal reward}}
	.\]
The performance comparison is reported in \cref{fig:graph}.

All numerical results are averaged over $10$ independent runs. 
\cz{They demonstrate that QMI efficiently approximates FPI and achieves comparable performance, validating our fully online model-free approach.}
Please refer to \cref{sec:apx-exp} for the full setups of two experiments and additional results.

\section{Conclusion}
This study introduces the QM iteration (QMI), a novel online single-agent model-free learning scheme for mean field games, offering a practical alternative to traditional fixed-point iteration methods. QMI provides both theoretically and numerically confirmation of the statement: \emph{a single online agent can efficiently learn the equilibria of mean field games}, without any prior knowledge of the environment.
We anticipate that our methods can provide a benchmark for online model-free learning in MFGs, and serve as a base scheme for further extensions, generalizations, and applications.





\bibliography{mfg}
\onecolumn
\part*{Appendix}
\renewcommand{\thesection}{\Alph{section}}

\section{Additional Experiments} \label{sec:apx-exp}

\subsection{Speed Control on a Ring Road} \label{sec:exp-rr}

\begin{figure}
	\centering
	\begin{subfigure}[b]{0.4\textwidth}
		\centering
		\includegraphics[width=\textwidth]{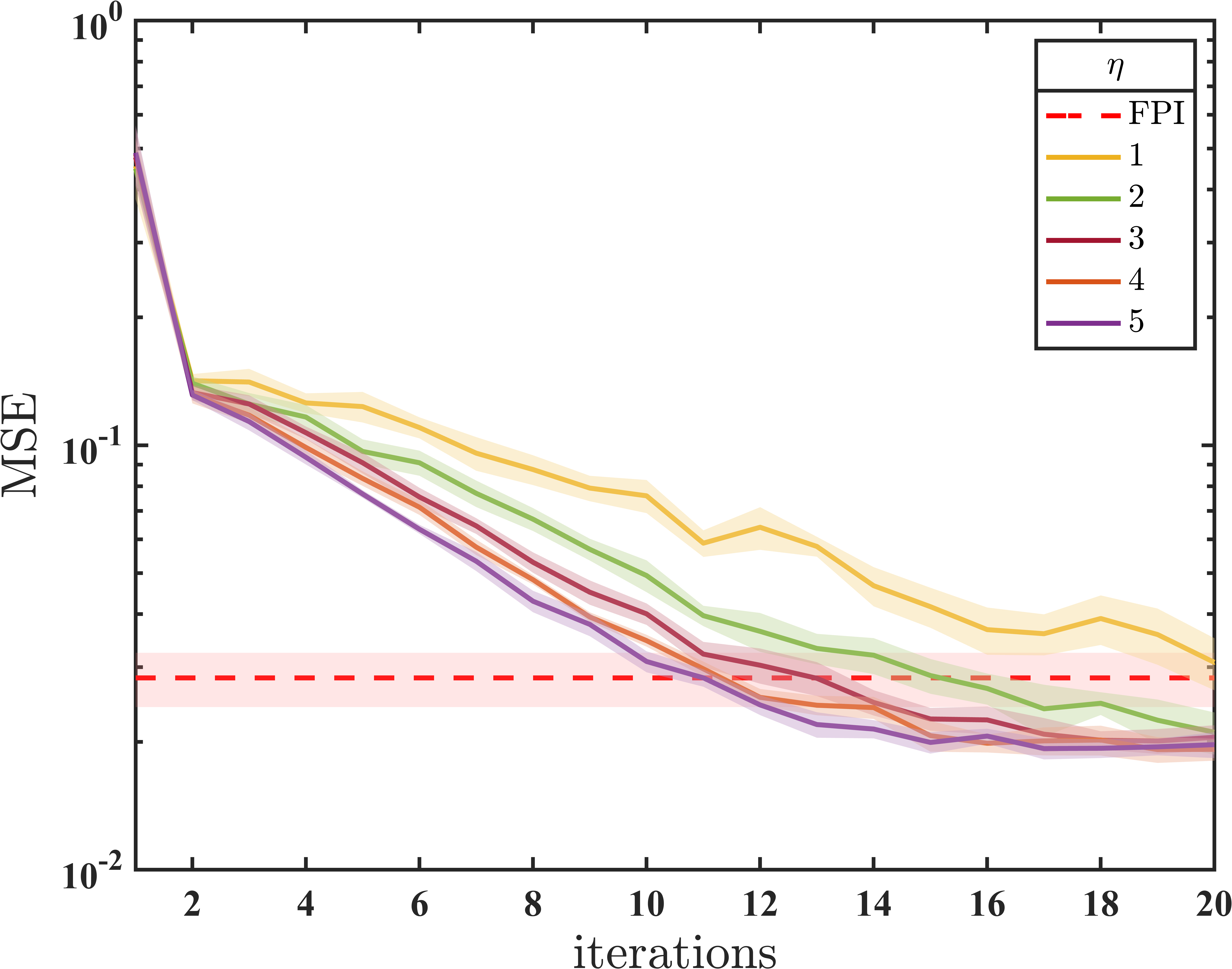}
		\caption{Mean squared error of off-policy QMI}
	\end{subfigure}
	\begin{subfigure}[b]{0.4\textwidth}
		\centering
		\includegraphics[width=\textwidth]{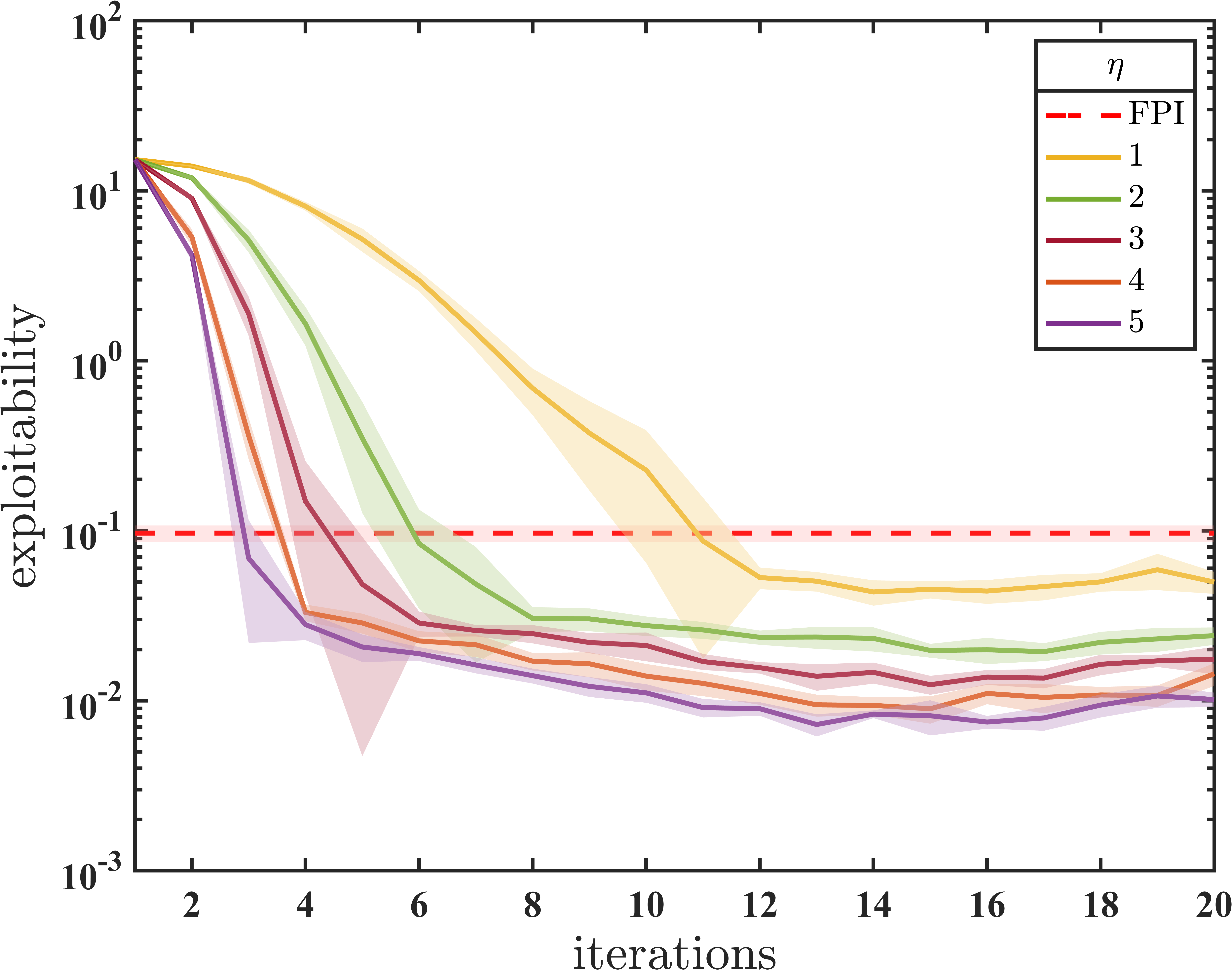}
		\caption{Exploitability of off-policy QMI}
	\end{subfigure}
	\par\bigskip
	\begin{subfigure}[b]{0.4\textwidth}
		\centering
		\includegraphics[width=\textwidth]{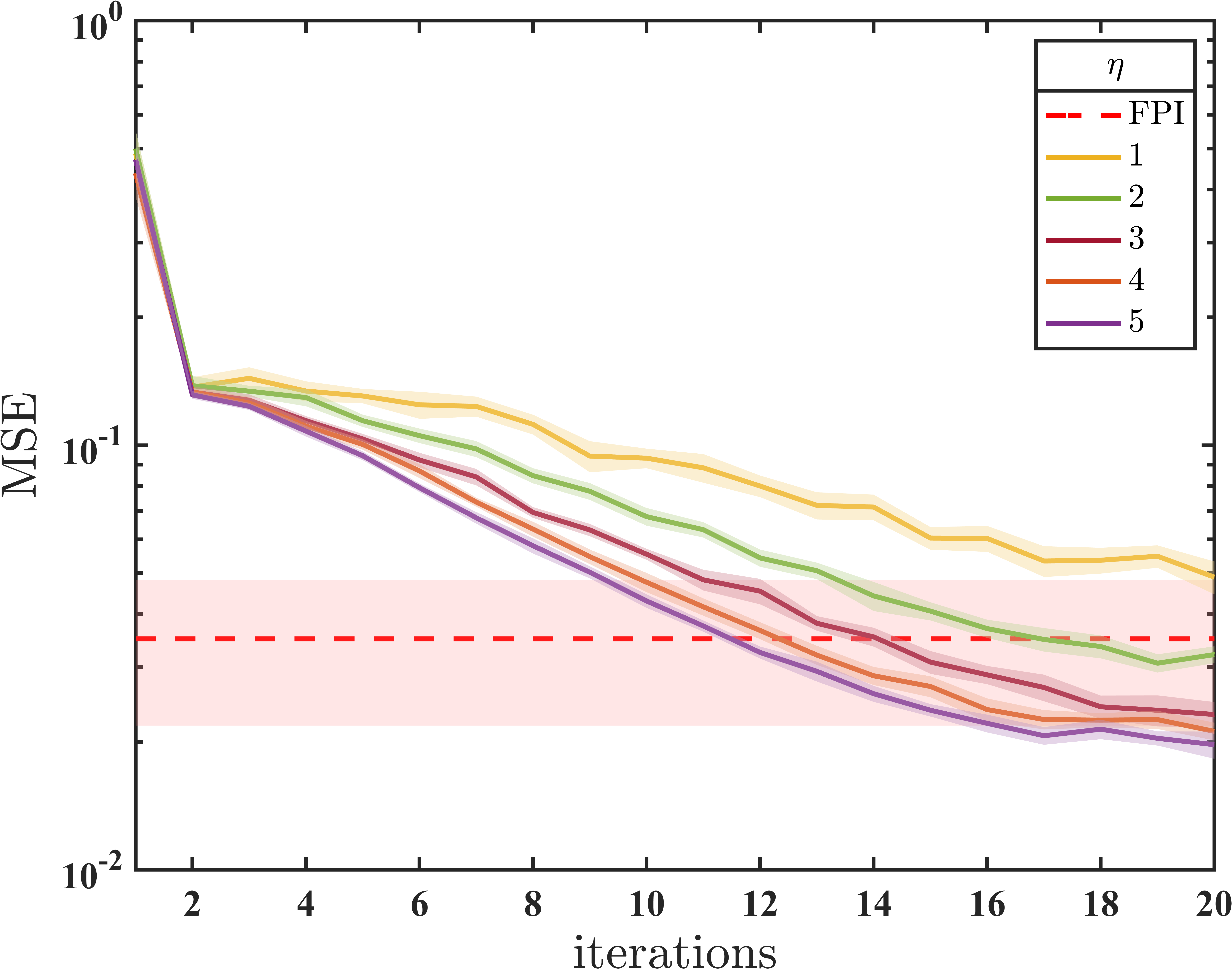}
		\caption{Mean squared error of on-policy QMI}
	\end{subfigure}
	\begin{subfigure}[b]{0.4\textwidth}
		\centering
		\includegraphics[width=\textwidth]{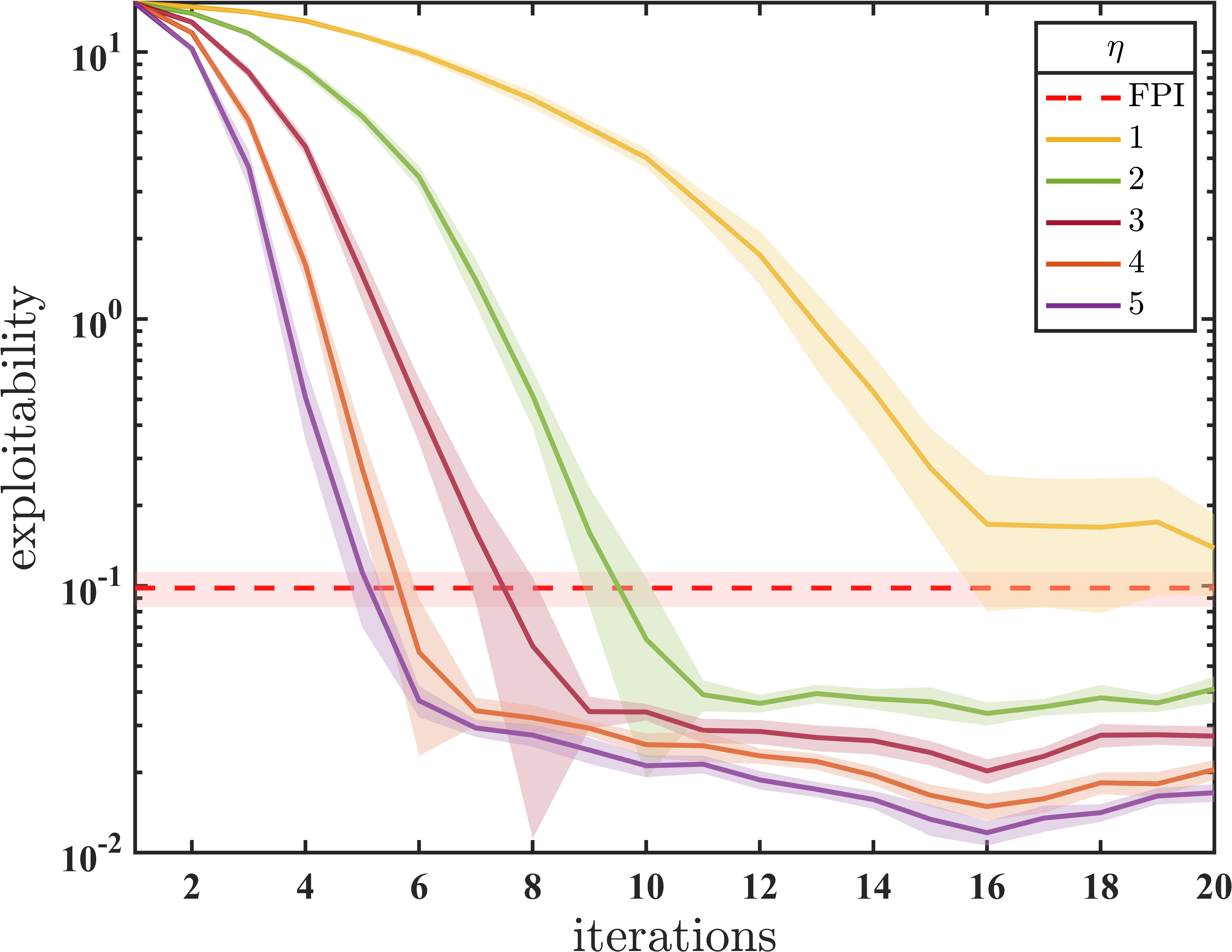}
		\caption{Exploitability of on-policy QMI}
	\end{subfigure}
 \vspace{0.5cm}
	\caption{Performance comparison of different sample compensation factors on ring road speed control. As a baseline, the performance of FPI after 20 iterations is plotted as dashed lines.}
	\label{fig:rr-eta}
 \vspace{0.5cm}
\end{figure}

\begin{figure}[ht]
	\centering
	\begin{subfigure}[b]{0.4\textwidth}
		\centering
		\includegraphics[width=\textwidth]{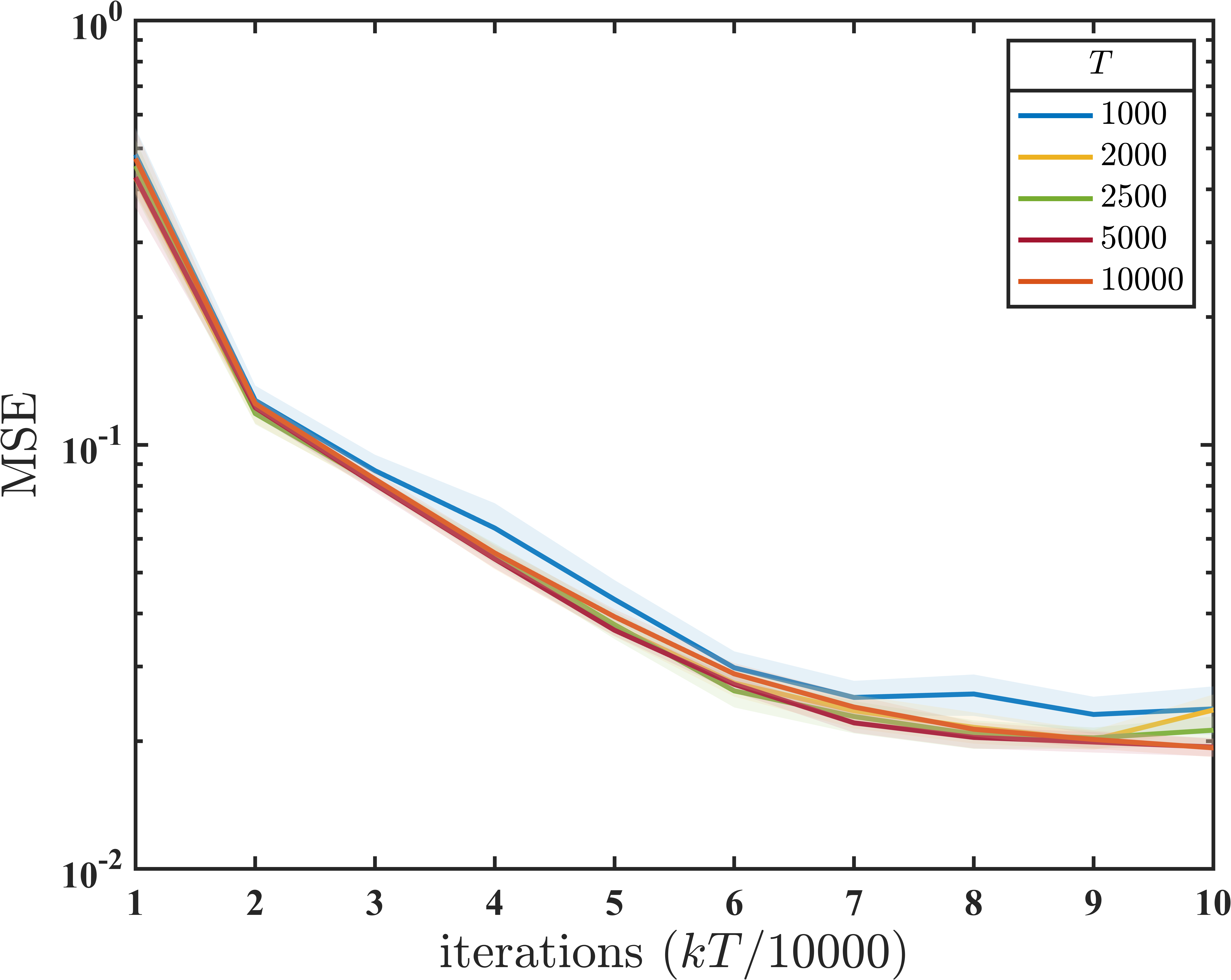}
		\caption{Mean squared error of off-policy QMI}
	\end{subfigure}
	\begin{subfigure}[b]{0.4\textwidth}
		\centering
		\includegraphics[width=\textwidth]{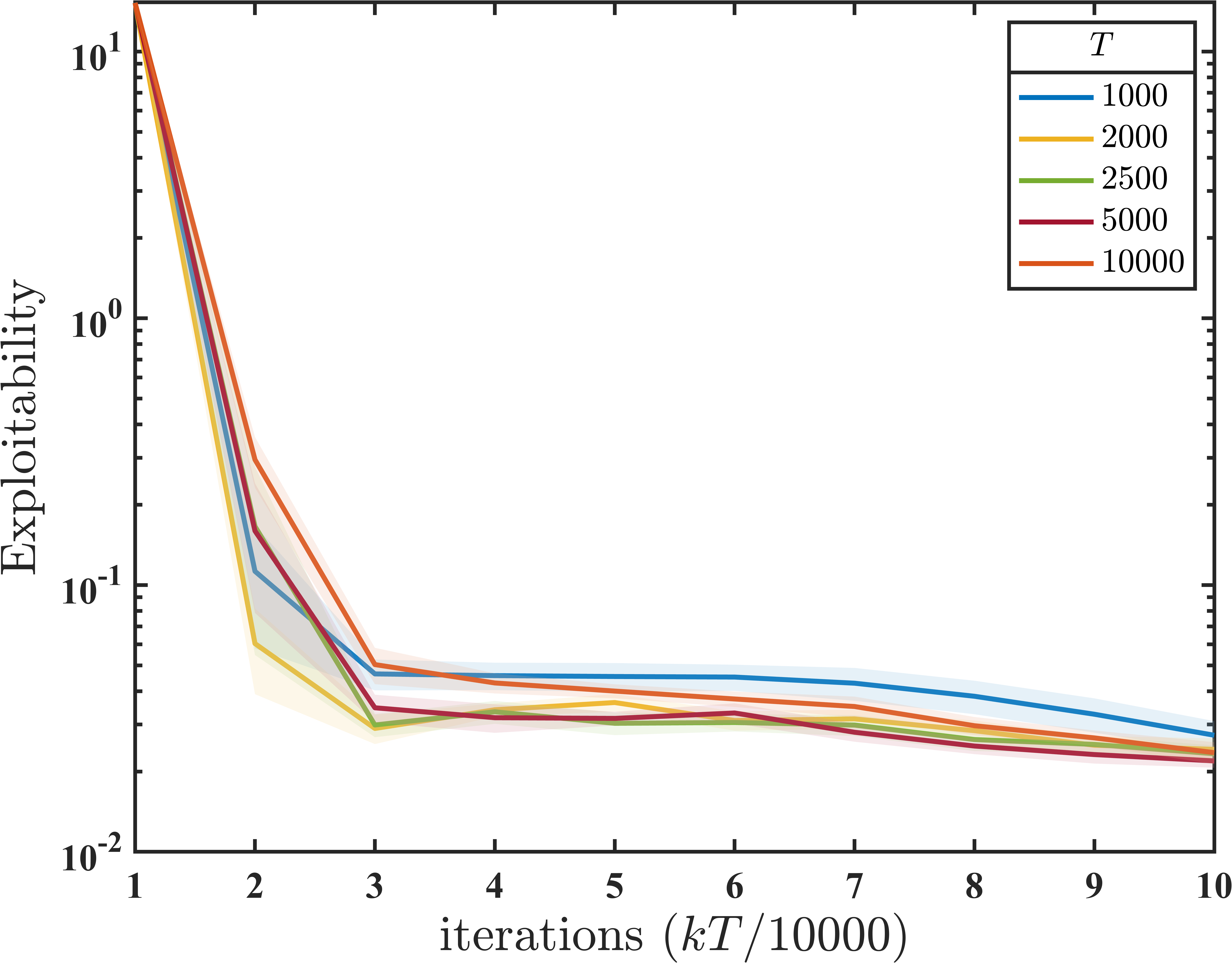}
		\caption{Exploitability of off-policy QMI}
	\end{subfigure}
	\par\bigskip
	\begin{subfigure}[b]{0.4\textwidth}
		\centering
		\includegraphics[width=\textwidth]{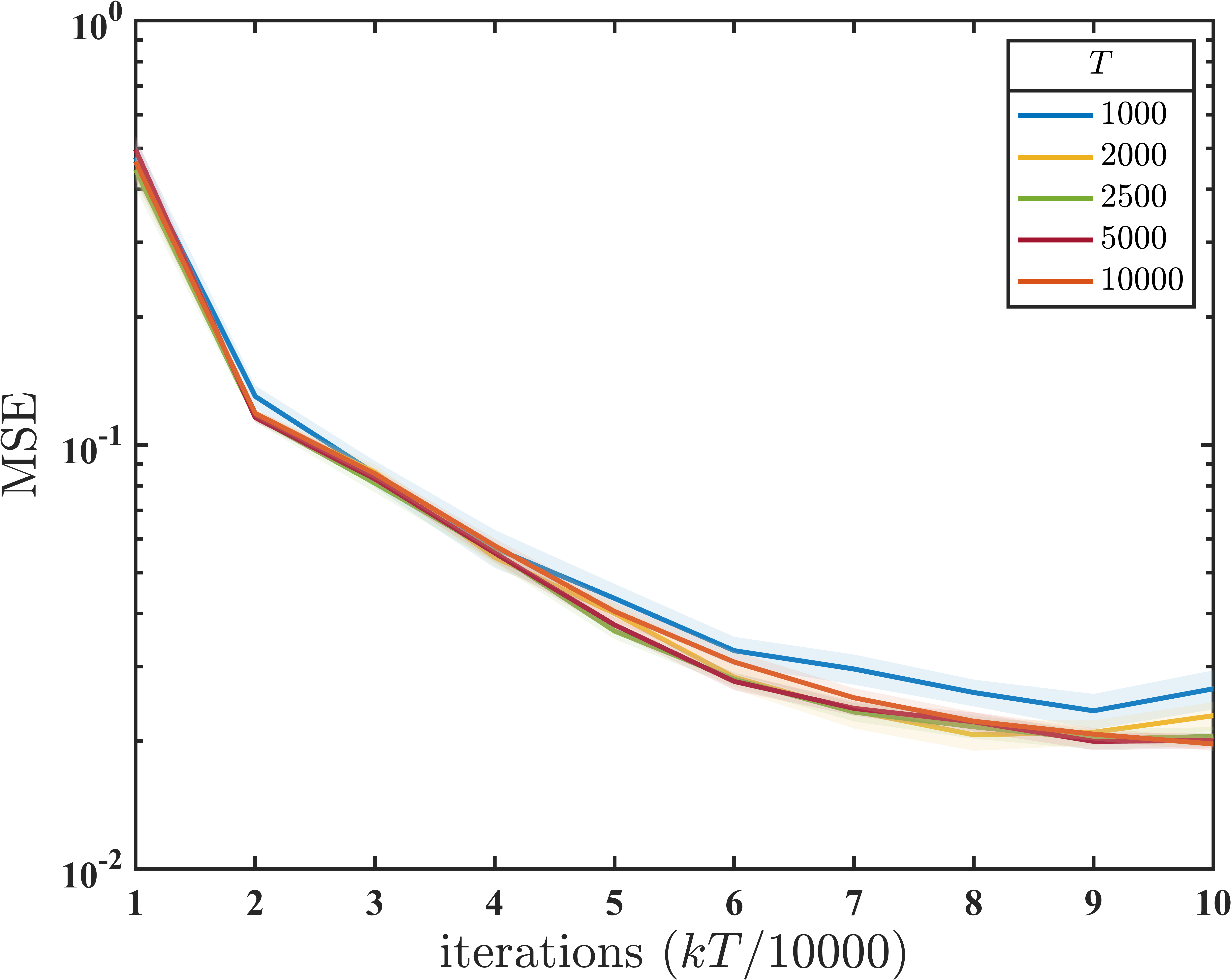}
		\caption{Mean squared error of on-policy QMI}
	\end{subfigure}
	\begin{subfigure}[b]{0.4\textwidth}
		\centering
		\includegraphics[width=\textwidth]{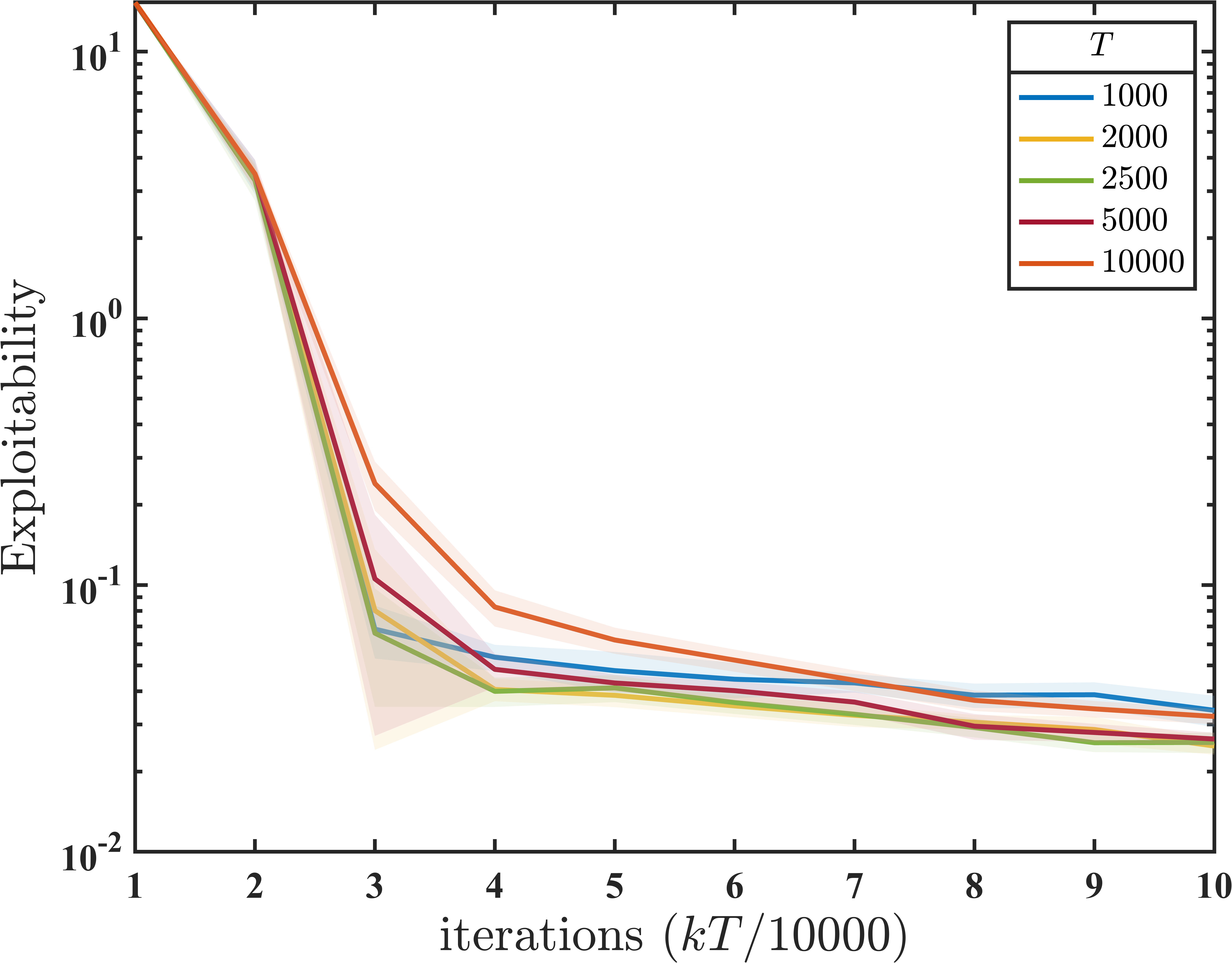}
		\caption{Exploitability of on-policy QMI}
	\end{subfigure}
 \vspace{0.5cm}
	\caption{Performance comparison of different number of inner iterations given a fixed total sample size: $KT=10^5$ on ring road speed control.}
	\label{fig:rr-t}
 \vspace{0.5cm}
\end{figure}

We consider a speed control game of autonomous vehicles on a ring road, i.e., the unit circle $\SS^{1} \cong [0,1)$, as illustrated in \cref{fig:toy}.
At location $s\in \SS^{1}$, the representative vehicle selects a speed $a$, and then moves to the next location following transition \(s' = s+a\Delta t \pmod{1}\),
where $\Delta t$ is the time interval between two consecutive decisions.
Without loss of generality, we assume that the speed is bounded by $1$, i.e., the speed space is also $[0,1)$.
Then we discretize both the location space and the speed space using a granularity of $\Delta s = \Delta a = 0.02$.
Thus, both our discretized state (location) space and action (speed) space can be represented by $\mathcal{S} = \mathcal{A} = \{0, 0.02, \ldots, 0.98\} \cong [50]$.
By the Courant-Friedrichs-Lewy condition, we choose the time interval to be $\Delta t = 0.02 \le \Delta s / \max a$.
The objective of a vehicle is to maintain some desired speed while avoiding collisions with other vehicles. Thus, it needs to reduce the speed in areas with high population density.
A classic cost function for this goal is the Lighthill-Whitham-Richards function:
\[
	r^{(\mathrm{LWR})}(s,a,\mu) = - \frac{1}{2}\left(\left(1-\frac{\mu(s)}{\mu_{\mathrm{jam}}}\right) - \frac{a}{a_{\mathrm{max}}}\right)^2  \Delta s
	,\]
where $\mu_{\mathrm{jam}}$ is the jam density, and $a_{\mathrm{max}}$ is the maximum speed.
However, in an infinite horizon game, this cost function induces a \emph{trivial} MFNE, where the equilibrium policy and population are both constant across the state space.
Therefore, we introduce a stimulus term $b$ that varies across different locations:
\[
	r(s,a,\mu) = - \frac{1}{2}\left(b(s) + \frac{1}{2}\left(1-\frac{\mu(s)}{\mu_{\mathrm{jam}}}\right) - \frac{a}{a_{\mathrm{max}}}\right)^2  \Delta s
	,\]
where the factor of one-half before the population distribution term is included to account for the presence of the new stimulus term.
This new cost function makes the MFNE more complex and corresponds to real-world situations where vehicles may have distinct desired speeds at different locations due to environmental variations.
Specifically, we choose the stimulus term as $b(s) = 0.2 (\sin(4\pi s) + 1)$, and set $\mu_{\mathrm{jam}} = 3 / S$ and $a_{\mathrm{max}}=1$.

We also experiment with different sample compensation factors. The results are reported in \cref{fig:rr-eta}.
As we can see, both MSE and exploitability decrease as the sample compensation factor increases, which is expected since a larger $T_{\mathrm{QMI}}$ leads to a more accurate approximation of FPI.
Furthermore, the improvement plateaus for large sample compensation factors, suggesting that a small sample compensation factor is sufficient, which is also more sample-efficient.

The numerical results of the experiments on the sample compensation factor confirm the intuition: as $T_{\mathrm{QMI}}$ increases, performance improves since the QMI operators more accurately approximate the FPI operators.
However, a larger $T_{\mathrm{QMI}}$ necessitates more samples, and thus the sample efficiency decreases.
Therefore, we also investigated the impact of $T_{\mathrm{QMI}}$ given a fixed total number of samples $K\cdot T$. The results are reported in \cref{fig:rr-t}.
In this experiment, we use a constant step size of $\alpha_t = 0.001$ for Q-value function updates.
The results demonstrate that different numbers of inner iterations offers nearly identical performance, implying that QMI is robust against the inexactness of the BR and IP approximation, as long as a sufficient total number of samples is present.

\subsection{Routing Game on a Network} 

\begin{figure}[ht]
	\centering
	\begin{subfigure}[b]{0.32\textwidth}
		\centering
		\includegraphics[width=\textwidth]{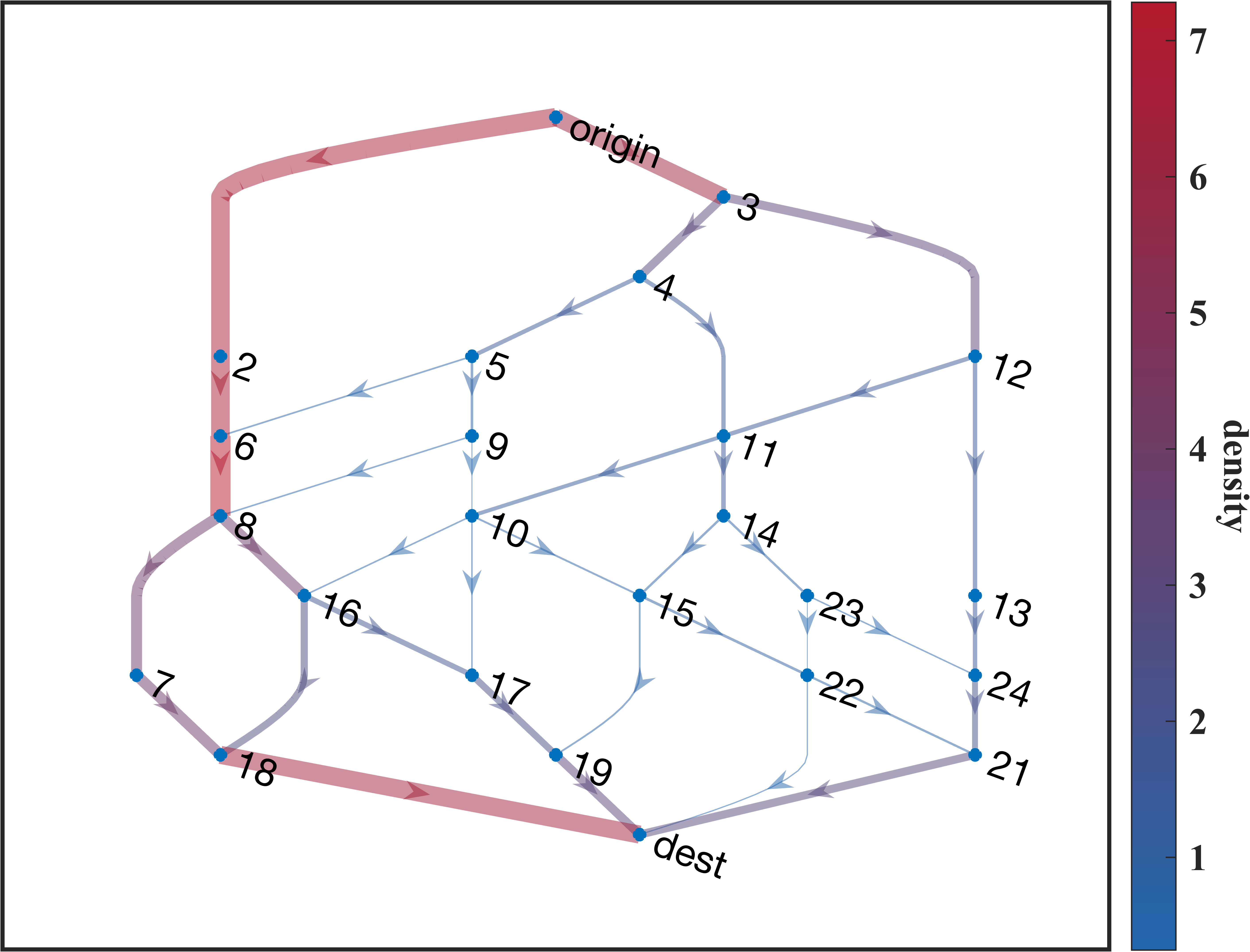}
		\caption{On-policy QMI}
	\end{subfigure}
	\hfill
	\begin{subfigure}[b]{0.32\textwidth}
		\centering
		\includegraphics[width=\textwidth]{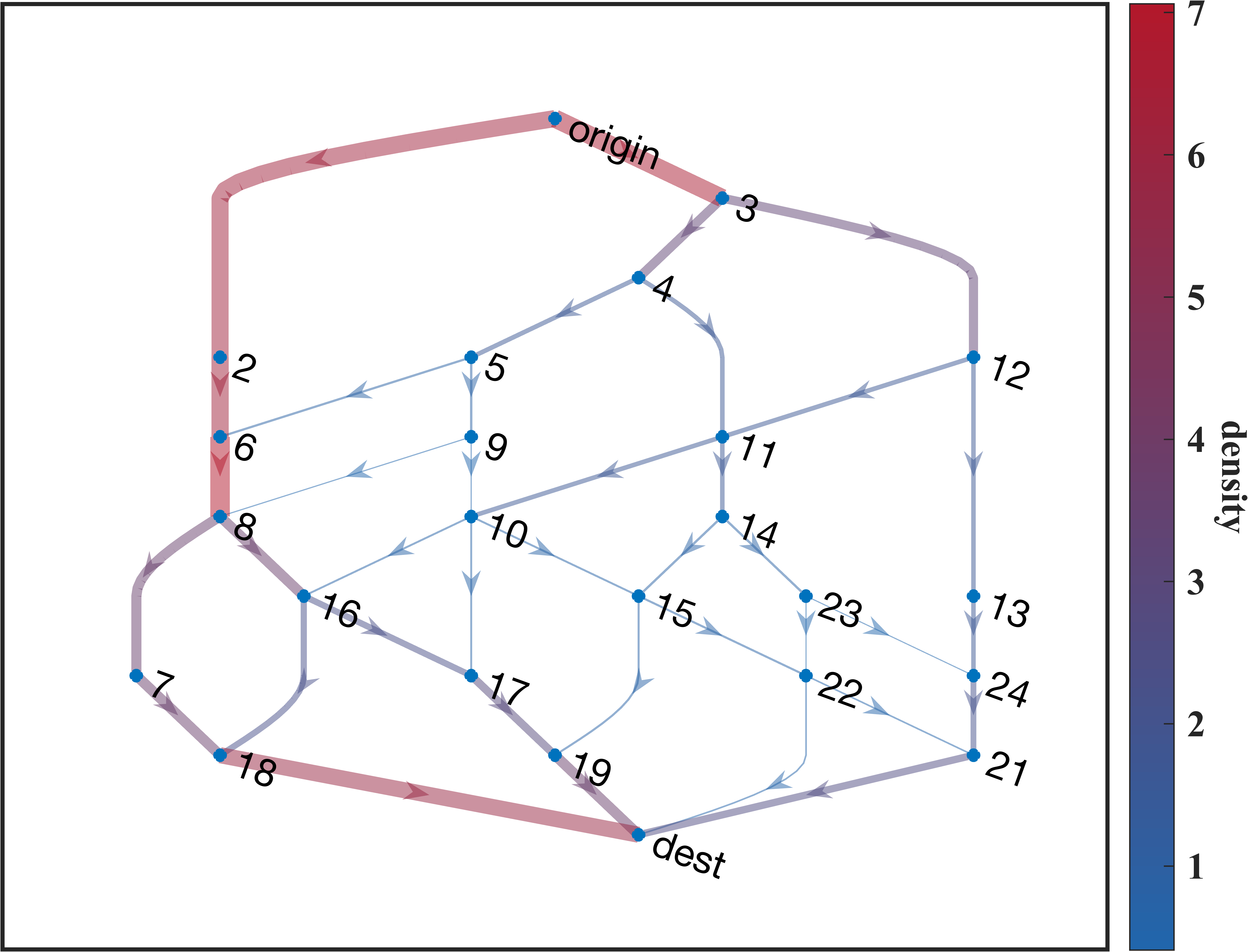}
		\caption{FPI}
	\end{subfigure}
	\hfill
	\begin{subfigure}[b]{0.32\textwidth}
		\centering
		\includegraphics[width=\textwidth]{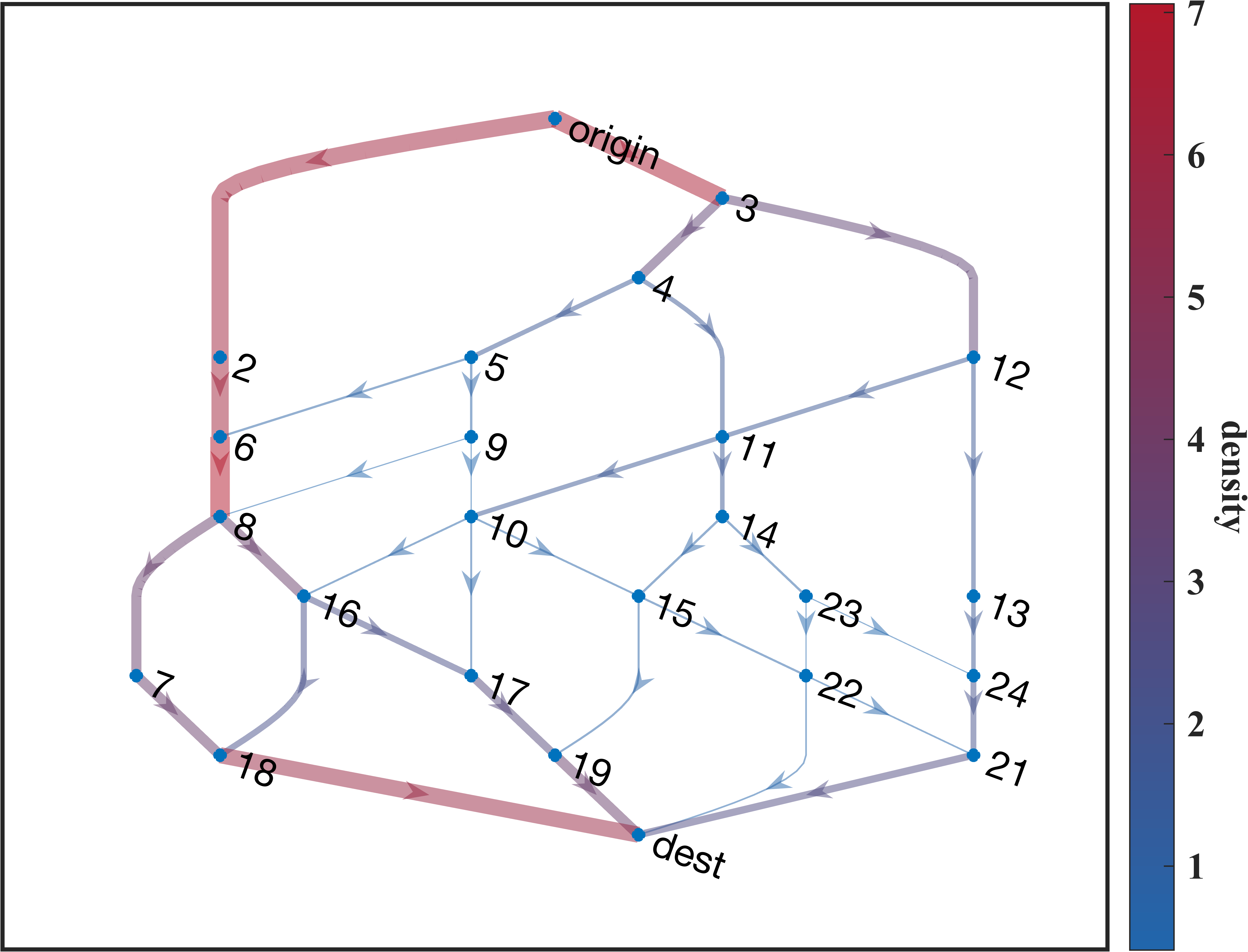}
		\caption{MFNE}
	\end{subfigure}
 \vspace{0.5cm}
	\caption{Learned population distributions and MFNE population distribution.}
	\label{fig:graph-mu}
	\vspace{0.5cm}
\end{figure}

\begin{figure}[ht]
	\centering
	\begin{subfigure}[b]{0.4\textwidth}
		\centering
		\includegraphics[width=\textwidth]{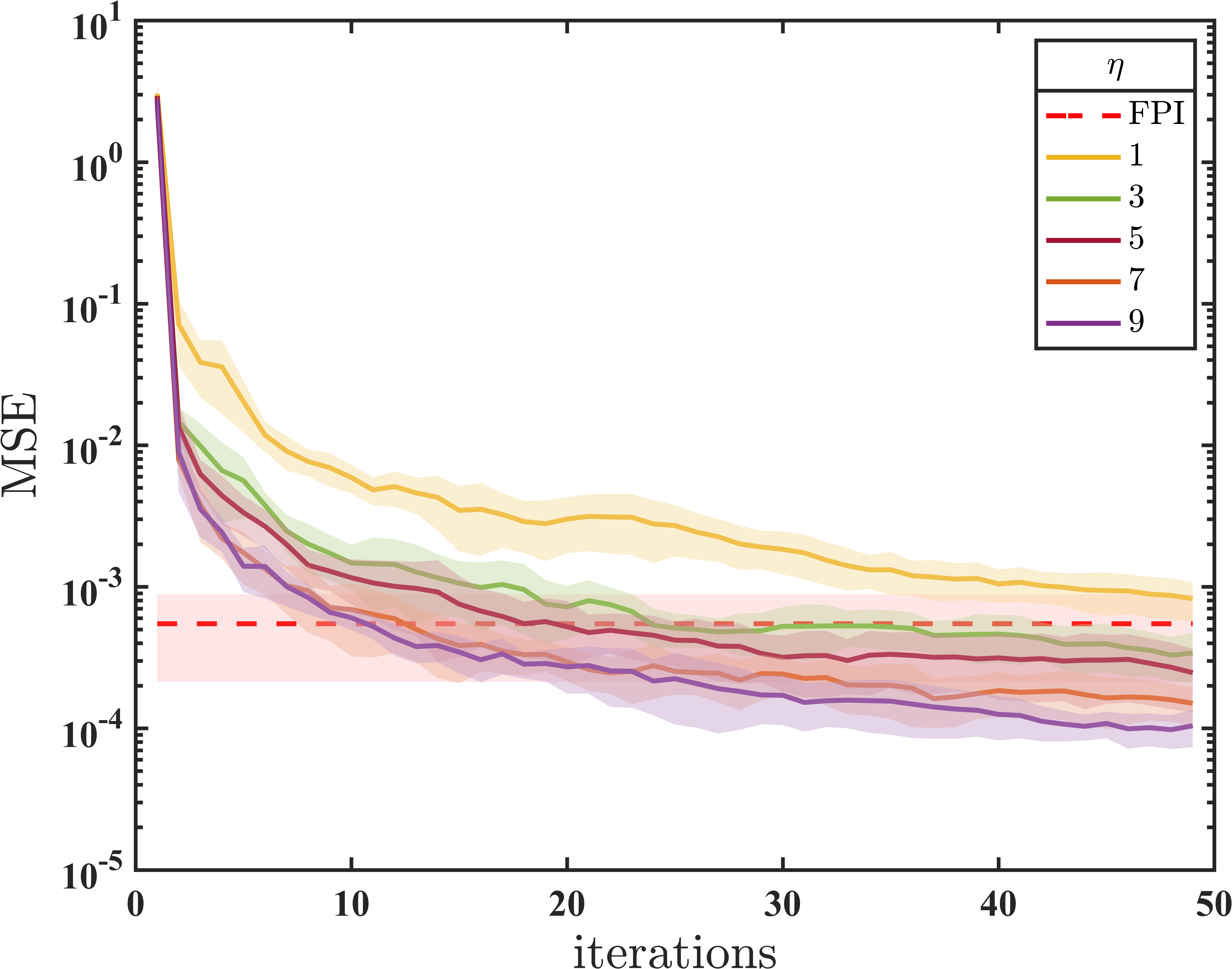}
		\caption{Mean squared error of off-policy QMI}
	\end{subfigure}
	\begin{subfigure}[b]{0.4\textwidth}
		\centering
		\includegraphics[width=\textwidth]{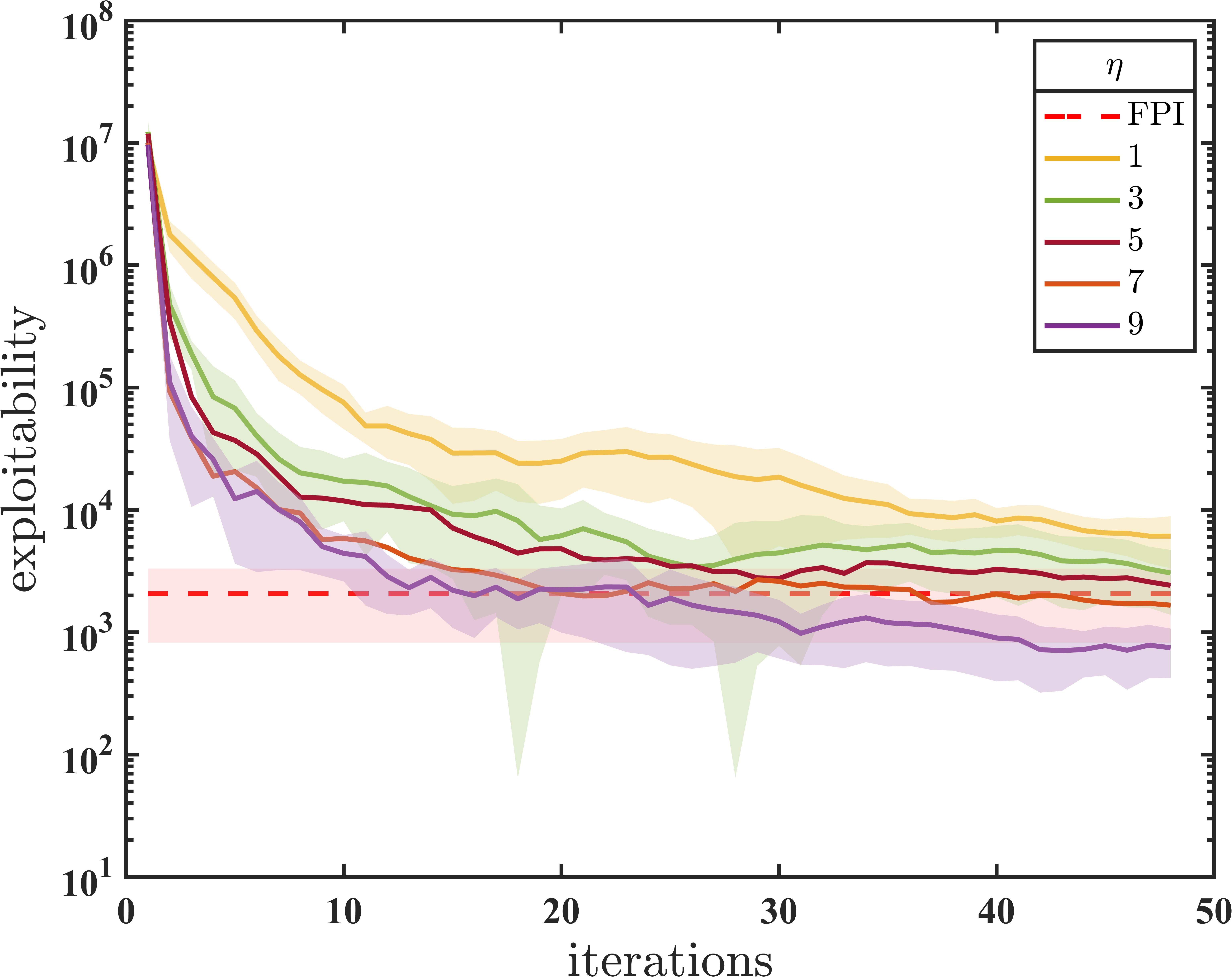}
		\caption{Exploitability of off-policy QMI}
	\end{subfigure}
	\par\bigskip
	\begin{subfigure}[b]{0.4\textwidth}
		\centering
		\includegraphics[width=\textwidth]{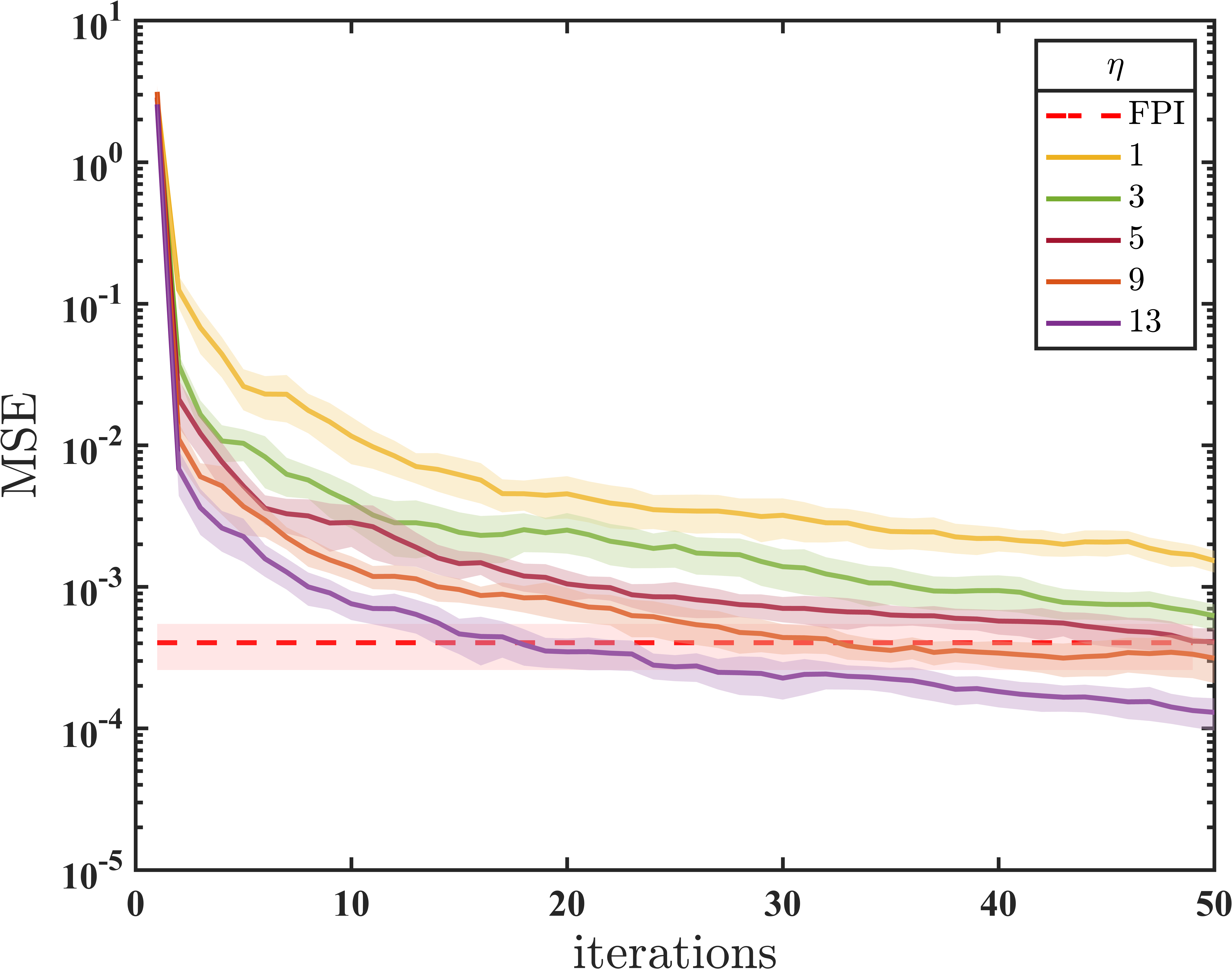}
		\caption{Mean squared error of on-policy QMI}
	\end{subfigure}
	\begin{subfigure}[b]{0.4\textwidth}
		\centering
		\includegraphics[width=\textwidth]{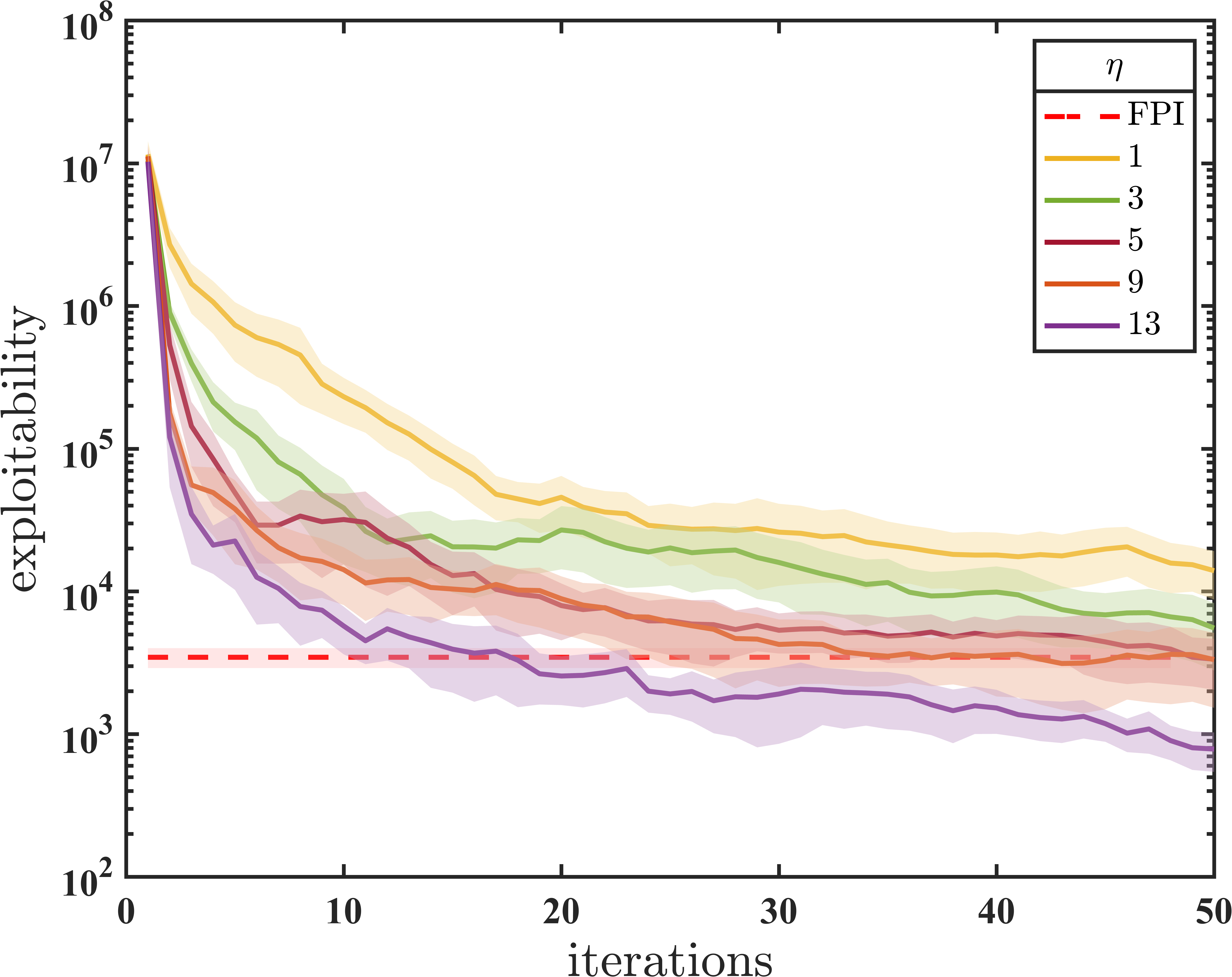}
		\caption{Exploitability of on-policy QMI}
	\end{subfigure}
 \vspace{0.5cm}
	\caption{Performance comparison of different sample compensation factors on Sioux Falls network routing. As a baseline, the performance of FPI after 50 iterations is plotted as dashed lines.}
	\label{fig:graph-eta}
 \vspace{0.5cm}
\end{figure}

We consider a routing game on the Sioux Falls network,\footnote{The topology of the network is available at \url{https://github.com/bstabler/TransportationNetworks}.} a graph with $24$ nodes and $74$ directed edges. We designate node $1$ as the starting point and node $20$ as the destination.
To construct an infinite-horizon game, we add a \emph{restart} edge $e_{75}$ from the destination back to the starting point.
On each edge, a vehicle selects its next edge to travel to. We consider a deterministic environment, meaning that the vehicle will follow the chosen edge without any randomness.
Therefore, both the state space and the action space can be represented by the edge set, i.e., $\mathcal{S} = \mathcal{A} = \{e_1,\ldots,e_{75}\} \cong [75]$, where $e_{75}$ is the restart edge.
It is worth noting that a vehicle can only select from the outgoing edges of its current location as its next edge.

The objective of a vehicle is to reach the destination as fast as possible. Due to congestion, a vehicle spends a longer time on an edge with higher population distribution. Specifically, the cost (time) on a non-restart edge is \(r^{(\text{cong.})}(s,a,\mu) = - c_1\mu(s)^2 \mathbbm{1}  \{s \neq e_{75}\} \),
where $c_1$ is a cost constant. To drive the vehicle to the destination, we impose a reward at the restart edge: $r^{(\text{term.})}(s,a,\mu) = c_2 \mathbbm{1}\{s = e_{75}\}$. Together, we get the cost function:
\[
	r(s,a,\mu) = \underbrace{- c_1\mu(s)^2 \mathbbm{1} \{s \neq e_{75}\}}_{\text{congestion cost}}  + \underbrace{c_2 \mathbbm{1}\{s = e_{75}\}
	}_{\text{terminal reward}}
	.\]
We set $c_1 = 10^5$ and $c_2=10$.
The other algorithmic parameters are chosen as follows: the discount factor $\gamma = 0.8$, the initial state is uniformly sampled, the initial value function is set as all-zero, the initial population is randomly generated.
We use the softmax function with a fixed temperature of $10^{-4}$ as the policy operator.
The effective number of outer iterations is $K = 50$ and the number of sweeps in FPI is $T_{\mathrm{FPI}} = 5$. All the results are averaged over 10 independent runs.

With a sample compensation factor of $\eta_{\mathrm{off}} = 4$ for off-policy QMI and $\eta_{\mathrm{on}} = 7$ for on-policy QMI, both variants achieve a similar efficacy as QMI, again, validating our methods. The results are reported in \cref{fig:graph}.
The population distributions learned by on-policy QMI and FPI, as well as the MFNE population distribution, are shown in \cref{fig:graph-mu}.

Similar to \cref{sec:exp-rr}, we also experiment with different sample compensation factors and different number of inner iterations given a fixed total number of samples. The results are reported in \cref{fig:graph-eta,fig:graph-t}.
The results are consistent with those in \cref{sec:exp-rr}, reinforcing the validation, efficiency, and robustness of our methods.

\begin{figure}[ht]
	\centering
	\begin{subfigure}[b]{0.4\textwidth}
		\centering
		\includegraphics[width=\textwidth]{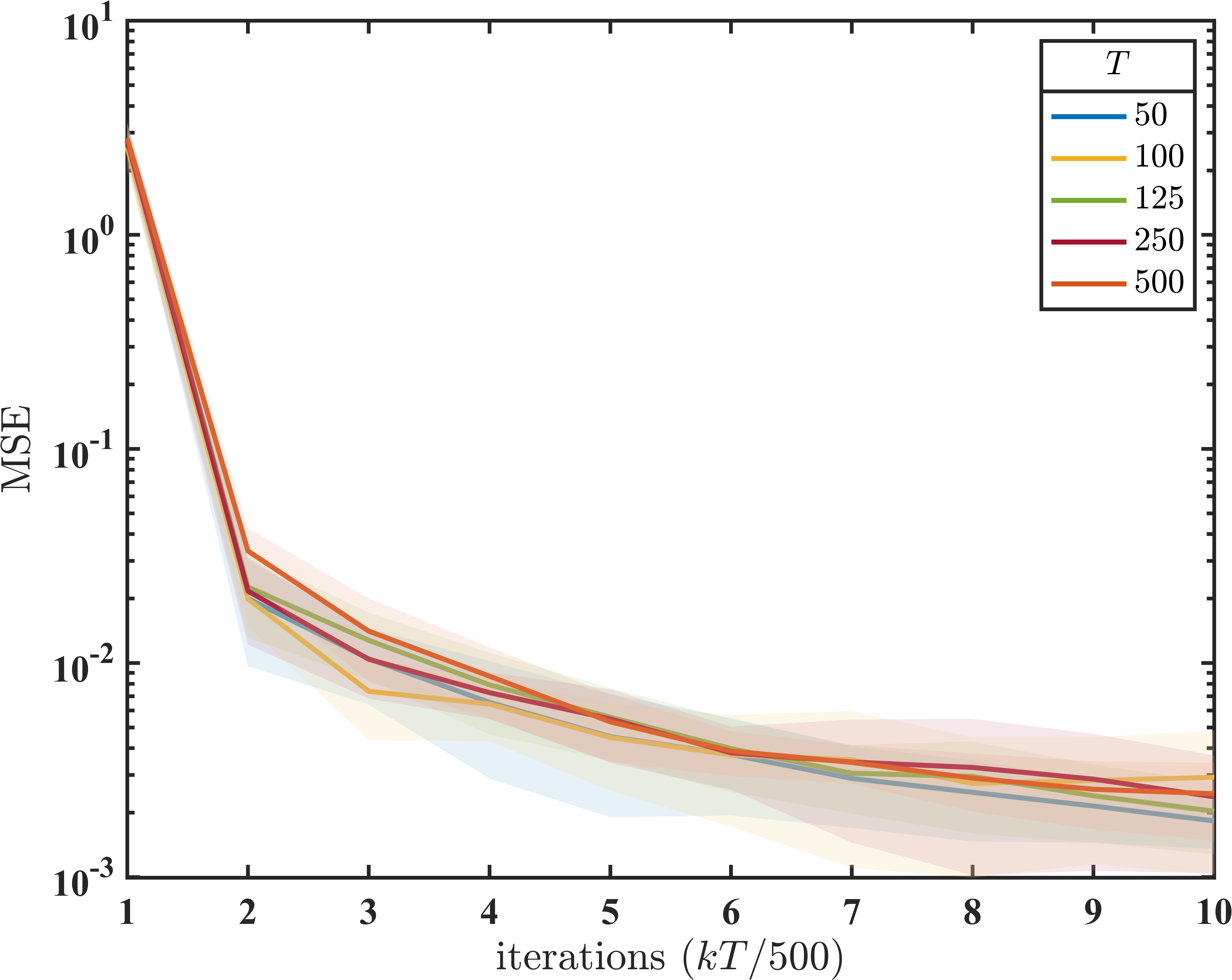}
		\caption{Mean squared error of off-policy QMI}
	\end{subfigure}
	\begin{subfigure}[b]{0.4\textwidth}
		\centering
		\includegraphics[width=\textwidth]{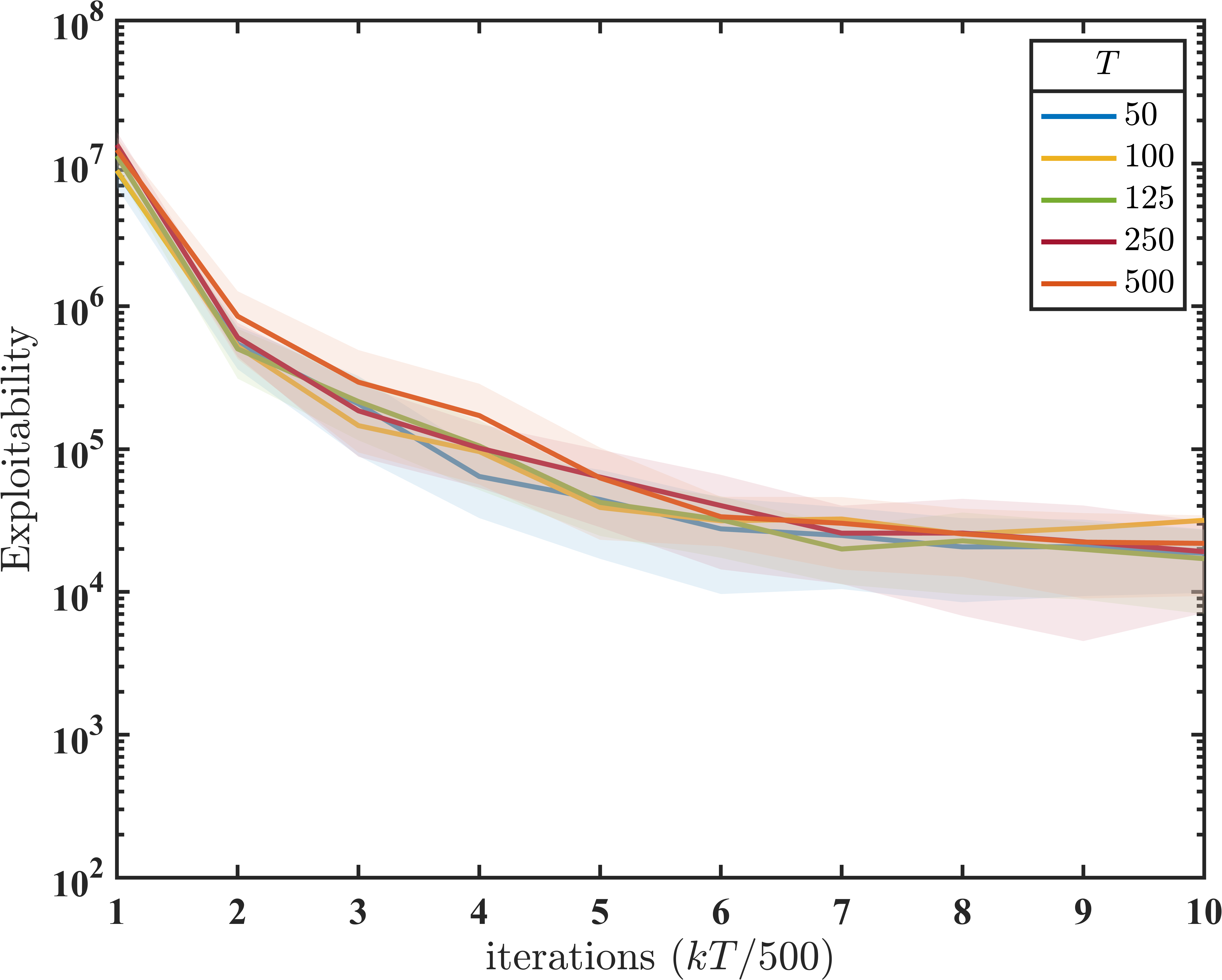}
		\caption{Exploitability of off-policy QMI}
	\end{subfigure}
	\par\bigskip
	\begin{subfigure}[b]{0.4\textwidth}
		\centering
		\includegraphics[width=\textwidth]{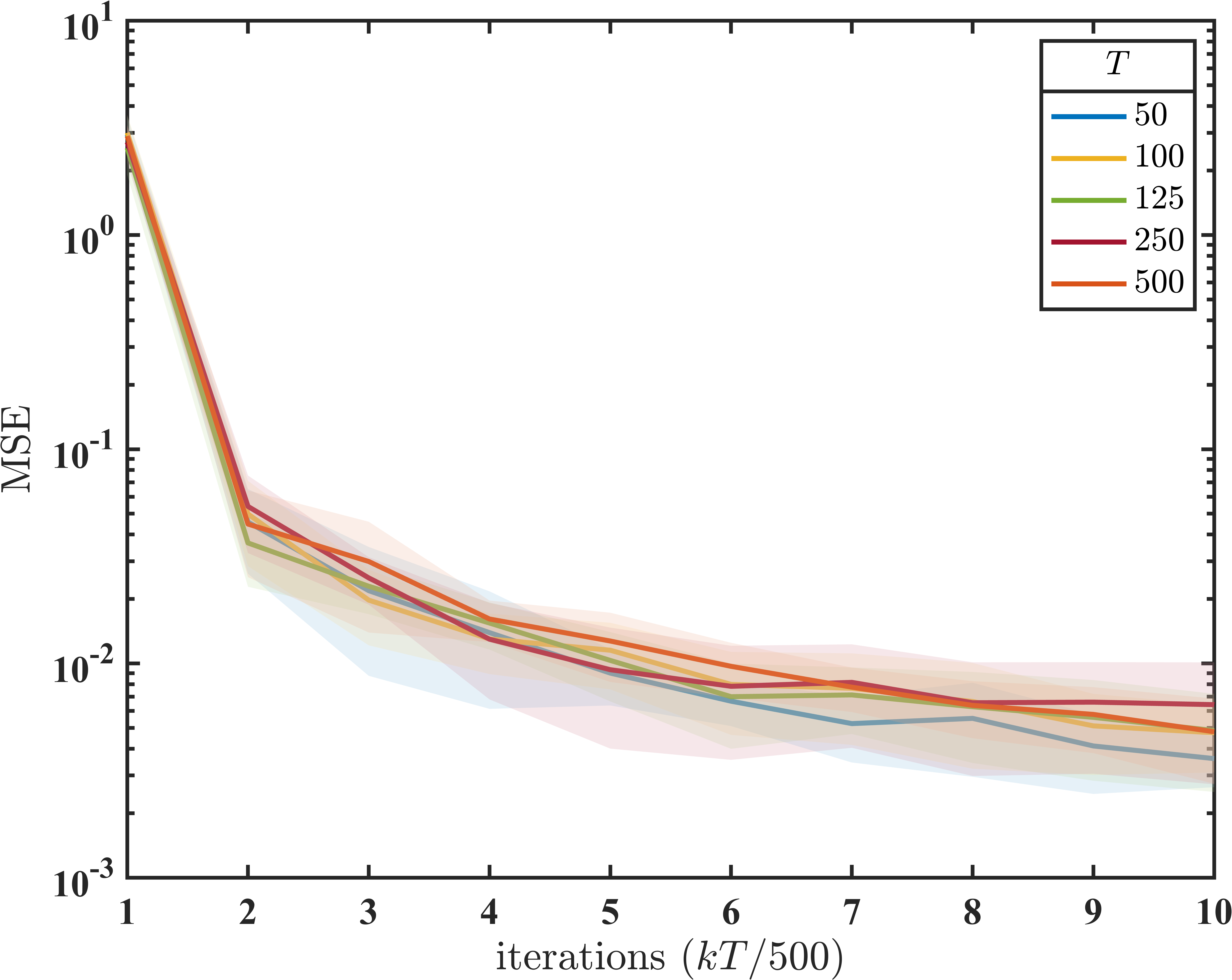}
		\caption{Mean squared error of on-policy QMI}
	\end{subfigure}
	\begin{subfigure}[b]{0.4\textwidth}
		\centering
		\includegraphics[width=\textwidth]{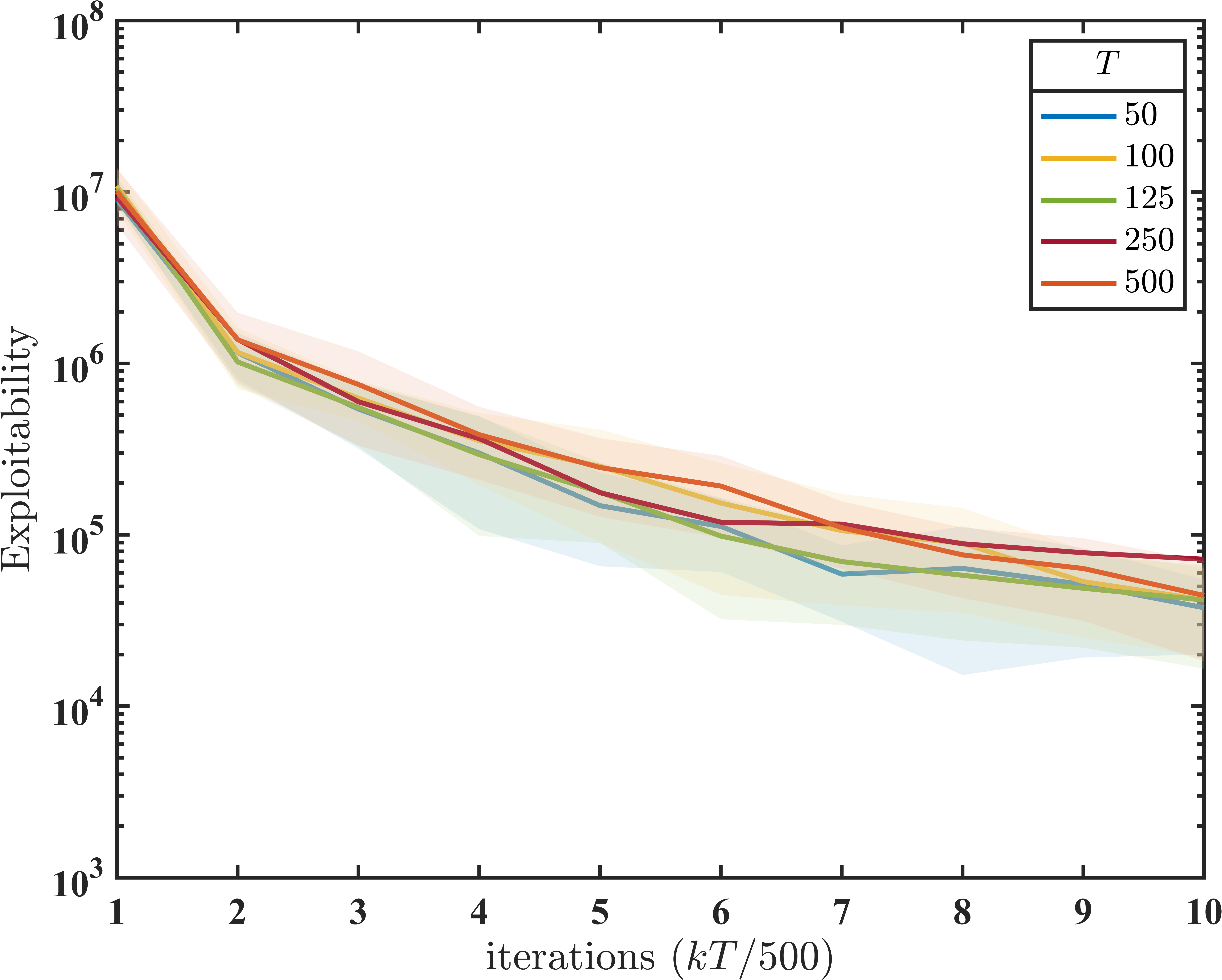}
		\caption{Exploitability of on-policy QMI}
	\end{subfigure}
 \vspace{0.5cm}
	\caption{Performance comparison of different number of inner iterations given a fixed total sample size: $KT=5000$ on Sioux Falls network routing.}
	\label{fig:graph-t}
 \vspace{0.5cm}
\end{figure}

\ifSubfilesClassLoaded{\bibliography{mfg}}{}

\section{Notation}

We summarize the notations used in this paper in \cref{tab:notation}.
Additionally, the $L_2$ norm is used when we omit the subscript.
For value function sequences generated by \cref{alg}, we write $M_{k} \coloneqq M_{k,0}$ and $Q_{k}\coloneqq Q_{k,0}$ for notational simplicity.
And when we restrict our discussion within an outer iteration, we omit the superscript $k$ and write $M_{t} \coloneqq M_{k,t}$ and $Q_{t} \coloneqq Q_{k,t}$.

\begin{table}[ht]
	\caption{Notation.} \label{tab:notation}
	\centering
	\begin{tabular}{cll}
		\textbf{Symbol}                                                                   & \textbf{Definition}                            & \textbf{Reference}        \\\midrule
		$\mathcal{S},\mathcal{A}$                                                         & state and action space                         & \cref{sec:mfg}            \\
		$S,A$                                                                             & cardinality of $\mathcal{S}$ and $\mathcal{A}$ & \cref{sec:mfg}            \\
		$r$                                                                               & reward function                                & \cref{sec:mfg}            \\
		$R$                                                                               & reward cap                                     & \cref{sec:mfg}            \\
		$P$                                                                               & transition kernel                              & \cref{sec:mfg}            \\
		$(s,a,r,s',a')$                                                                   & online observation tuple                       & \cref{sec:m-update}       \\
		$\gamma$                                                                          & discount factor                                & \cref{sec:mfg}            \\
		$\pi$                                                                             & policy                                         & \cref{sec:mfg}            \\
		$\mu$                                                                             & population distribution                        & \cref{sec:mfg}            \\
		$\Delta(\mathcal{S})$                                                             & probability simplex over $\mathcal{S}$         & \cref{sec:mfg}            \\
		$\pi_{M}, Q_{M}$                                                                  & the BR w.r.t. $M\in \Delta(\mathcal{S})$       & \cref{sec:mfg}            \\
		$\mu_{\pi}, \mu_{Q}$                                                              & the IP w.r.t. $\pi$ and $Q\in\R^{S\times A}$   & \cref{sec:mfg}            \\
		$Q,M$                                                                             & Q-value and M-value function                   & \cref{sec:mfg}            \\
		$\T{Q,M},\P{Q,M}$                                                                 & Bellman operator and transition operator       & \cref{def:mfne}           \\
		$\Gamma_{\pi}$                                                                    & policy operator                                & \cref{sec:mfg}            \\
		$\Gamma_{\mathrm{BR}}, \Gamma_{\mathrm{ID}}$                                      & FPI operators                                  & \cref{def:fpi-op}         \\
		$\Gamma_{Q}, \Gamma_{M}, \hat{\Gamma}_{\mathrm{off}}, \hat{\Gamma}_{\mathrm{on}}$ & QMI operators                                  & \cref{def:qmi-op}         \\
		$\kappa$                                                                          & contraction parameter                          & \cref{asmp:contract}      \\
		$\alpha,\beta$                                                                    & step sizes                                     & \cref{sec:m-update}       \\
		$\delta_{s'}$                                                                     & indicator probability vector                   & \cref{sec:m-update}       \\
		$m,\rho,\sigma,\lambda_{\min}$                                                    & MDP constants                                  & \cref{asmp:ergodic}       \\
		$K,T$                                                                             & number of outer and inner iterations           & \cref{alg}                \\
		$L$                                                                               & Lipschitz constant for training smoothness     & \cref{asmp:ql,asmp:sarsa} \\
		$\eta$                                                                            & sample compensation factor                     & \cref{sec:apx-exp}        \\
		$\tau$                                                                            & backtracking period                            & \cref{sec:apx-on}         \\
	\end{tabular}
\end{table}

\section{Derivation of Online Stochastic Update for Population Distribution} \label{sec:apx-update}

We consider the squared $L_{2}$ error w.r.t. the MFNE $(q^*,\mu^*)$:
\[
	\frac{1}{2}\|M - \mu^{*}\|_{2}^{2} = \frac{1}{2}\sum_{s\in\mathcal{S}}(M(s)-\mu^{*}(s))^{2},
\]
whose gradient w.r.t. $M$ is
\[
	g_{M}^{(\mathrm{real})} = M - \mu^* \overset{\cref{eq:qm}}{=} M - \mathcal{P}_{q^*} \mu^* \overset{\cref{eq:pm-exp}}{=} M - \EE_{q^{*},\mu^*}[\delta _{s'}]
	.\]
Since the information about MFNE is unavailable, we substitute it with the current policy-population pair $(Q,M)$ (bootstrapping), giving the following \emph{semi-gradient}:
\[
	g^{(\mathrm{semi})}_{M} \coloneqq M - \E{Q,M}[\delta _{s'}]
	.\]
Therefore, a single online agent can update the population distribution estimate using the stochastic semi-gradient:
\begin{equation}
	g_{M}(s') \coloneqq M - \delta _{s'}
	,\end{equation}
which gives the M-value function update rule in \cref{eq:mupdate}.
The Q-value function stochastic update rule can be derived similarly (see \citet{sutton2018Reinforcementlearninga}).

\section{Preliminary Lemmas}

This section provides two preliminary lemmas to assist the subsequent analysis.

\begin{lemma}[Steady distribution difference {\citep[Corollary 3.1]{mitrophanov2005SensitivityConvergence}}] \label{lem:diff}
	For any two Q-value functions $Q_1,Q_2\in\R^{S\times A}$, the difference between their induced steady distributions is bounded by
	\[
		\|\mu_{Q_1} - \mu_{Q_2}\|_{\mathrm{TV}} \le \sigma\|P_{Q_1} - P_{Q_2}\|_{\mathrm{TV}},
	\]
	where we denote $P_{Q}\coloneqq P_{\Gamma_{\pi}(Q)}$, and
	\[
		\|P_{Q}\|_{\mathrm{TV}} \coloneqq \sup_{\|q\|_{\mathrm{TV}} = 1} \left\| \sum_{s\in\mathcal{S}} q(s)P_{Q}(s,\cdot ) \right\|_{\mathrm{TV}}
		.\]
	And the constant $\sigma$ is defined as \(\sigma = \hat{n} + m\rho^{\hat{n}} /(1-\rho)\),
	where $\hat{n} = \left\lceil \log_{\rho}m^{-1} \right\rceil$.

	Furthermore, by \citet[Lemma 3]{zou2019Finitesampleanalysis}, if the policy operator is Lipschitz continuous as specified in \cref{asmp:sarsa}, we have
	\[
		\|\mu_{Q_1} - \mu_{Q_2}\|_{\mathrm{TV}} \le L\sigma\|Q_1 - Q_2\|_{2}
		.\]
\end{lemma}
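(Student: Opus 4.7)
The plan is to execute the classical Mitrophanov sensitivity argument, exploiting stationarity of $\mu_{Q_1}$ and $\mu_{Q_2}$ together with the uniform geometric mixing rate from Assumption \ref{asmp:ergodic}. Writing $\mu_i \coloneqq \mu_{Q_i}$ and $P_i \coloneqq P_{Q_i}$, I would first use $\mu_1 P_1^{j} = \mu_1$ for every $j$, together with $\mu_1 P_2^{k} \to \mu_2$ as $k \to \infty$, to telescope
\[
\mu_1 P_1^{k} - \mu_1 P_2^{k} \;=\; \sum_{j=0}^{k-1}\mu_1 P_1^{k-1-j}(P_1-P_2)P_2^{j} \;=\; \sum_{j=0}^{k-1}\mu_1(P_1-P_2)P_2^{j},
\]
where the second equality used stationarity to collapse $\mu_1 P_1^{k-1-j} = \mu_1$. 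Passing to the limit $k \to \infty$ yields the geometric-series representation
\[
\mu_1 - \mu_2 \;=\; \sum_{j=0}^{\infty}\mu_1(P_1-P_2)P_2^{j},
\]
which reduces the problem to controlling each summand in TV norm.

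The key technical step is promoting the pointwise mixing hypothesis to zero-mass signed measures. For $\nu \coloneqq \mu_1(P_1-P_2)$, one has $\sum_{s}\nu(s) = 0$ and $\|\nu\|_{\mathrm{TV}} \le \|P_1-P_2\|_{\mathrm{TV}}$. The Hahn--Jordan decomposition $\nu = \nu_{+} - \nu_{-}$ then satisfies $\|\nu_{+}\|_{\mathrm{TV}} = \|\nu_{-}\|_{\mathrm{TV}} = \|\nu\|_{\mathrm{TV}}/2$; normalizing each piece to a probability measure and applying Assumption \ref{asmp:ergodic} separately gives the signed-measure mixing bound $\|\nu P_2^{j}\|_{\mathrm{TV}} \le m\rho^{j}\|\nu\|_{\mathrm{TV}}$, while the trivial stochastic-kernel estimate supplies $\|\nu P_2^{j}\|_{\mathrm{TV}} \le \|\nu\|_{\mathrm{TV}}$. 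Splitting the sum at $j = \hat{n}$, applying the trivial bound for $j < \hat{n}$ and the mixing bound for $j \ge \hat{n}$, and summing the geometric tail delivers
\[
\|\mu_1 - \mu_2\|_{\mathrm{TV}} \;\le\; \Bigl(\hat{n} + \tfrac{m\rho^{\hat{n}}}{1-\rho}\Bigr)\|P_1-P_2\|_{\mathrm{TV}} \;=\; \sigma\|P_1-P_2\|_{\mathrm{TV}},
\]
which is the first claim.

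For the second claim I translate the operator-norm bound into a Lipschitz bound in $Q$ via a row-wise computation. Using $\sum_{s'}P(s'\given s,a) = 1$ and the triangle inequality,
\[
\|P_{Q_1}-P_{Q_2}\|_{\mathrm{TV}} \;\le\; \sup_{s\in\mathcal{S}}\bigl\|P_{Q_1}(s,\cdot)-P_{Q_2}(s,\cdot)\bigr\|_{\mathrm{TV}} \;\le\; \sup_{s\in\mathcal{S}}\bigl\|\Gamma_{\pi}(Q_1)[\cdot\given s]-\Gamma_{\pi}(Q_2)[\cdot\given s]\bigr\|_{\mathrm{TV}},
\]
and Assumption \ref{asmp:sarsa} bounds the right-hand side by $L\|Q_1-Q_2\|_{2}$. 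Chaining with the first inequality yields the desired $L\sigma\|Q_1-Q_2\|_{2}$ bound.

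The main obstacle is the promotion of the pointwise mixing hypothesis of Assumption \ref{asmp:ergodic} to general zero-mass signed measures; once the Hahn--Jordan normalization trick is in place, the remainder of the argument is bookkeeping via triangle inequalities and a geometric-series summation.
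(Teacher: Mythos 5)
Your proof is correct. The paper itself gives no argument for this lemma---it imports the first inequality from \citet[Corollary 3.1]{mitrophanov2005SensitivityConvergence} and the second from \citet[Lemma 3]{zou2019Finitesampleanalysis}---and your derivation is a faithful, self-contained reconstruction of exactly those results: the telescoping/stationarity identity, the Hahn--Jordan promotion of the geometric mixing bound to zero-mass signed measures with the split at $\hat{n}$ yielding $\sigma$, and the row-wise reduction of $\|P_{Q_1}-P_{Q_2}\|_{\mathrm{TV}}$ to the policy-operator Lipschitz condition of \cref{asmp:sarsa} are precisely the standard arguments behind the cited lemmas.
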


\begin{lemma}[Young's inequality] \label{lem:ineq}
	For two points $x,y$ in an inner product space, we have
	\[
		2\left< x,y \right> \le \theta \|x\|^2 + 1/\theta \|y\|^2, \quad \forall \theta\in \mathbb{R}
		,\]
	where the norm is induced by the inner product.
	The above inequality further gives
	\[
		\|x + y\|^2 \le (1 + \theta)\|x\|^2 + (1 + 1 /\theta)\|y\|^2, \quad \forall \theta\in\R
		.\]
\end{lemma}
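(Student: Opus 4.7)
\textbf{Proof proposal for Theorem~\ref{thm}.}

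The plan is to reduce the theorem to a \emph{per-outer-iteration} approximation bound between the QMI operator and the FPI operator, and then to run the contraction recursion driven by Assumption~\ref{asmp:contract}. Specifically, write $M_k \coloneqq M_{k,0}$ and decompose
\[
  M_{k+1} - \mu^* \;=\; \underbrace{\hat{\Gamma}(T)M_k - \Gamma M_k}_{\text{one-step approximation error}} \;+\; \underbrace{\Gamma M_k - \Gamma \mu^*}_{\text{contractive in }M_k-\mu^*}.
\]
Applying Young's inequality (Lemma~\ref{lem:ineq}) with parameter $\theta = \kappa/(2(1-\kappa))$ and using $\|\Gamma M_k - \Gamma \mu^*\|_2 \le (1-\kappa)\|M_k - \mu^*\|_2$ from Assumption~\ref{asmp:contract}, I obtain a recursion of the form
\[
  \EE\|M_{k+1}-\mu^*\|_2^2 \;\le\; (1-\kappa/2)\,\EE\|M_k-\mu^*\|_2^2 \;+\; (2/\kappa)\,\varepsilon_T^2,
\]
where $\varepsilon_T^2 \coloneqq \sup_{M}\EE\|\hat{\Gamma}(T)M - \Gamma M\|_2^2$. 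Unrolling this recursion, and noting $\|M_0-\mu^*\|_2^2 \le \mathrm{const}$, yields $\EE\|M_K-\mu^*\|_2^2 \le (1-\kappa/2)^K\cdot\mathrm{const} + (4/\kappa^2)\varepsilon_T^2$; choosing $K = O(\kappa^{-1}\log \epsilon^{-1})$ drives the first term below $\epsilon^2/2$, and enforcing $\varepsilon_T^2 \le \kappa^2\epsilon^2/8$ drives the second term below $\epsilon^2/2$.

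The remaining task is to bound $\varepsilon_T^2$ in the two variants. For \emph{on-policy} QMI, Lemma~\ref{lem:sarsa} gives $\varepsilon_T^2 = O\!\left(SAR^2L^2\sigma^2\log T / (\lambda_{\min}^2(1-\gamma)^4 T)\right)$ directly. For \emph{off-policy} QMI, I split using the triangle inequality (with another application of Young):
\[
  \|\hat{\Gamma}_{\mathrm{off}}(T)M - \Gamma M\|_2
  \le \|\Gamma_M(T)Q' - \Gamma_{\mathrm{IP}}Q'\|_2 + \|\Gamma_{\mathrm{IP}}Q' - \Gamma_{\mathrm{IP}}Q_M\|_2,
\]
where $Q' = \Gamma_Q(T)M$ and $Q_M = \Gamma_{\mathrm{BR}}M$. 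The first term is controlled by Lemma~\ref{lem:mcmc}; the second I bound using Lemma~\ref{lem:diff}, the Lipschitz-continuous-kernel Assumption~\ref{asmp:ql}, and Lemma~\ref{lem:ql} on $\|Q' - Q_M\|_2$. Combining, both variants yield $\varepsilon_T^2 = C\cdot O(\log T / T)$ with $C \lesssim SAR^2L^2\sigma^2/(\lambda_{\min}^2(1-\gamma)^5)$. Solving $\varepsilon_T^2 \le \kappa^2\epsilon^2/8$ for $T$ gives $T = C\cdot O(\kappa^{-2}\epsilon^{-2}\log\epsilon^{-1})$, completing the stated bounds.

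The main obstacle I expect is the off-policy per-iteration bound. Because off-policy QMI produces two interleaved sequences $(Q_{2k},M_{2k+1})$ and $(Q_{2k+1},M_{2k})$, I must verify that the contraction recursion still applies to each sub-sequence independently, effectively redefining $\Gamma$ as the appropriate two-step composition; the contraction constant $\kappa$ then refers to the composite operator and the $K$ in the theorem should be understood as counting outer iterations of this composite (costing at most a factor of two that is absorbed in the $O(\cdot)$). A secondary subtlety is the transfer of the SARSA nonstationarity bound in Lemma~\ref{lem:sarsa} into the $L_2$ recursion: because the lemma is already stated in expectation and holds \emph{uniformly in the starting $M$}, I can condition on $M_k$, apply the lemma with the initial state determined by the previous outer iteration, and then take the outer expectation---so no additional martingale bookkeeping is required beyond the Young-inequality decomposition above.
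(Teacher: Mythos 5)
There is a fundamental mismatch here: the statement you were asked to prove is Lemma~\ref{lem:ineq} (Young's inequality), an elementary algebraic fact about inner product spaces, but your proposal is an outline of the proof of Theorem~\ref{thm}. Nothing in your write-up establishes the two displayed inequalities of the lemma; instead the lemma is \emph{used} as an ingredient in your argument. So as a proof of the stated result, the proposal is vacuous. The lemma itself follows in two lines: for $\theta>0$, expand
\[
0 \le \bigl\| \sqrt{\theta}\,x - \theta^{-1/2}y \bigr\|^2 = \theta\|x\|^2 - 2\langle x,y\rangle + \tfrac{1}{\theta}\|y\|^2,
\]
which gives the first inequality, and then
\[
\|x+y\|^2 = \|x\|^2 + 2\langle x,y\rangle + \|y\|^2 \le (1+\theta)\|x\|^2 + \bigl(1+\tfrac{1}{\theta}\bigr)\|y\|^2
\]
gives the second. (Note the quantifier in the paper's statement should read $\theta>0$ rather than $\theta\in\mathbb{R}$; for negative $\theta$ the first inequality fails already at $x=y\neq 0$, $\theta=-1$.)

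As a side remark: your outline of Theorem~\ref{thm} is in fact close in spirit to the paper's actual proof of that theorem --- the same Young-inequality split of $M_{k+1}-\mu^*$ into a one-step approximation error plus a contractive term, the same per-iteration bounds via Lemma~\ref{lem:mcmc}, Lemma~\ref{lem:diff}, Assumption~\ref{asmp:ql}, and Lemma~\ref{lem:ql} for the off-policy case, and the same handling of the interleaved sequences by treating $K$ as even and recursing in steps of two. But that is a different statement from the one you were asked to prove, and the lemma still needs its own (short) proof.
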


\ifSubfilesClassLoaded{\bibliography{mfg}}{}

\section{Analysis for Off-Policy QMI} 

\subsection{Remark of Lemma~\ref{lem:ql}}

For \cref{lem:ql}, we utilize the result from \citet[Theorem 7]{qu2020Finitetimeanalysis}, a finite time high probability $L_{\infty}$ error bound for tabular Q-learning, which reads: suppose \cref{asmp:ergodic} holds and the step size is $\alpha_t = h /(t+t_0)$ where $h \ge 4 / (\lambda_{\min}(1-\gamma))$ and $t_0 \ge \max \{ 4h, \left\lceil \log_{2} 2/\lambda_{\min} \right\rceil \log (4m) / \log \rho^{-1} \}$;\footnotemark then with probability at least $1-\delta$,
\[
	\|\Gamma_{Q}(T)M - \Gamma_{\mathrm{BR}}M\|_{\infty}^{2} = O\left( \frac{R^2 \log (T /\delta)}{\lambda_{\min}^2(1-\gamma)^{5}T} \right)
	,\]
where we omit the logarithmic dependencies on $S,A$ and $\lambda_{\min}$.
A high probability bound is stronger than a mean squared error bound. To see this, we have
\[
	\EE \left\| \Gamma_{Q}(T)M - \Gamma_{\mathrm{BR}}M  \right\|^2_{\infty} \le (1-\delta)\cdot  O\left( \frac{R^2 (\log T + \log\delta^{-1})}{\lambda_{\min}^2(1-\gamma)^{5}T} \right) + \delta \cdot (2R)^2
	.\]
Substituting $\delta$ with $O(\log T / (\lambda_{\min}(1-\gamma)^{5}T))$ gives the mean squared error bound we desire.
By the relationship between the $L_{\infty}$ norm and $L_{2}$ norm, we get the result presented in \cref{lem:ql}:
\[\label{eq:ql}
	\EE \left\| \Gamma_{Q}(T)M - \Gamma_{\mathrm{BR}}M  \right\|^2_{2} = O\left( \frac{SA R^2 \log T }{\lambda_{\min}^2(1-\gamma)^{5}T} \right)
	.\]

\footnotetext{\citet{qu2020Finitetimeanalysis} uses a general mix time constant $t_{\mathrm{mix}}$, which is bounded by $\log (4m) / \log \rho^{-1}$ by \cref{asmp:ergodic}.}

We acknowledge that there are other finite time error bounds for Q-learning. For example,
\citet{bhandari2018FiniteTime} established a finite time mean squared $L_2$ error bound for Q-learning with linear function approximation for optimal stopping problems,
and \citet{li2023QLearningMinimax} sharpened the bound in \cref{eq:ql} by a factor of $\frac{1}{\lambda_{\min}(1-\gamma)}$ with a constant step size.
Since improving the constant dependencies in the finite error bounds is not our focus, we utilize the result from \citet{qu2020Finitetimeanalysis} as their setting is the closest to ours.

\subsection{Remark of Lemma~\ref{lem:mcmc}}

To directly invoke the results from \citet{latuszynski2013Nonasymptoticbounds} for \cref{lem:mcmc}, we need to recast \cref{asmp:ergodic} using the terminology of small sets: the state space is $(1-\rho)$-small (please refer to \citet[Chapter 5]{meyn2012Markovchains} for the definitions).
By \citet[Theorem 16.2.4]{meyn2012Markovchains}, a $(1-\rho)$-small state space implies \cref{asmp:ergodic}.
Then, for a step size of $\beta_t = 1 /(t+1)$, \citet[Theorem 3.1]{latuszynski2013Nonasymptoticbounds} gives a finite time mean squared $L_{\infty}$ error bound for stationary Markov chain Monte Carlo methods. \citet[Remark 4.3]{latuszynski2013Nonasymptoticbounds} claims that
\[
	\EE |(\Gamma_{M}(T)Q)_{s} - (\Gamma_{\mathrm{IP}}Q)_{s}|^2 = O\left( \frac{1}{(1-\rho)^2T} \right)
	,\]
where $(x)_{s}$ represents the $s$th element of vector $s$.
The result in \cref{lem:mcmc} follows by relating the $L_{\infty}$ norm and $L_{2}$ norm.

\subsection{Proof of Theorem~\ref{thm} for Off-Policy QMI}

\begin{proof}
	In this proof, we omit the subscript for the $L_2$ norm.
	We first bound the difference between $\hat{\Gamma}_{\mathrm{off}}$ and $\Gamma$.
	For any $M\in\Delta(\mathcal{S})$, we have the decomposition
	\begin{align}
		    & \left\| \hat{\Gamma}_{\mathrm{off}}(T)M - \Gamma M \right\|                                                             \\
		=   & \left\| \left( \Gamma_{M}(T) \circ \Gamma_{Q}(T) \right) M - (\Gamma_{\mathrm{IP}}\circ \Gamma_{\mathrm{BR}})M \right\| \\
		\le & \left\| \left( \Gamma_{M}(T) \circ \Gamma_{Q}(T) \right) M
		- \left( \Gamma_{\mathrm{IP}} \circ \Gamma_{Q}(T) \right) M\right\|
		+ \left\|\left( \Gamma_{\mathrm{IP}} \circ \Gamma_{Q}(T) \right) M
		- (\Gamma_{\mathrm{IP}}\circ \Gamma_{\mathrm{BR}})M \right\|                                                                  \\
		=   & \left\| \Gamma_{M}(T)Q - \Gamma_{\mathrm{IP}}Q \right\| + \left\| \m{Q} - \mu_{q_{M}} \right\| \label{eq:off-decomp}
		,\end{align}
	where we denote $Q \coloneqq \Gamma_{Q}(T)M$, and recall that $\mu_{Q}$ is the IP w.r.t. policy $\Gamma_{\pi}(Q)$.
	By \cref{lem:diff}, we have
	\[
		\|\mu_{Q} - \mu_{q_{M}}\|_{2} \le \|\mu_{Q} - \mu_{q_{M}}\|_{1} \le \sigma \|P_{Q} - P_{q_{M}}\|_{\mathrm{TV}}
		.\]
	Then, by \cref{asmp:ql}, we have $\|P_{Q_1} - P_{q_{M}}\|_{\mathrm{TV}} \le L \|Q - q_{M}\|$.
	Plugging the above bound back into \cref{eq:off-decomp} gives
	\[
		\left\| \hat{\Gamma}_{\mathrm{off}}(T)M - \Gamma M \right\|
		\le \|\Gamma_{M}(T)Q - \Gamma_{\mathrm{IP}}Q\| + L\sigma\|\Gamma_{Q}(T)M - \Gamma_{\mathrm{BR}}M\|
		.\]
	Then, by \cref{lem:ineq,lem:ql,lem:mcmc}, we get
	\begin{align}
		\EE \left\| \hat{\Gamma}_{\mathrm{off}}(T)M-\Gamma M \right\|^2 \le &
		2\EE \left\| \Gamma_{M}(T)Q-\Gamma_{\mathrm{IP}} Q \right\|^2
		+ 2 (L\sigma)^2\EE \left\| \Gamma_{Q}(T)M-\Gamma_{\mathrm{BR}} M \right\|^2                                                                                                                     \\
		=                                                                   & O\left( \frac{SA}{(1-\rho)^2 T} \right) + (L\sigma)^2O \left( \frac{SAR^2\log T}{\lambda^2_{\min}(1-\gamma)^{5}T} \right) \\
		=                                                                   & O\left( \frac{SA R^2L^2\sigma^2 \log T}{\lambda^2_{\min}(1-\gamma)^{5}T} \right), \label{eq:off-2}
	\end{align}
	where the asymptotic notation holds when $T$ is large enough such that $\frac{R^2L^2\sigma^2\log T}{\lambda^2_{\min}(1-\gamma)^{5}} \gg (1-\rho)^{-2}$.

	We now bound the mean squared error of the M-value function after $K$ outer iterations. Without loss of generality, we assume $K$ is even. In this proof, we write $M_{k} = M_{k,0}$ for notational simplicity.
	By \cref{lem:ineq}, we have
	\begin{align}
		\EE \left\| \hat{\Gamma}_{\mathrm{off}}^{K}M_0 - \mu^{*} \right\|^2
		=   & \EE \left\| \hat{\Gamma}_{\mathrm{off}}M_{K-2} - \Gamma\mu^{*} \right\|^2                                                                                                   \\
		=   & \EE \left\| \hat{\Gamma}_{\mathrm{off}}M_{K-2} - \Gamma M_{K-2} + \Gamma M_{K-2} - \Gamma\mu^{*} \right\|^2                                                                 \\
		\le & (1+\kappa)\EE\left\| \Gamma M_{K-2} - \Gamma \mu^{*} \right\|^2 + (1 + 1 /\kappa)\EE\left\| \hat{\Gamma}_{\mathrm{off}}M_{K-2} - \Gamma M_{K-2} \right\|^2 \label{eq:off-3}
		.\end{align}
	By \cref{asmp:contract,eq:off-2}, we get
	\begin{align}
		\EE \left\| \hat{\Gamma}_{\mathrm{off}}^{K}M_0 - \mu^{*} \right\|^2
		\le & (1+\kappa)(1-\kappa)^2\EE\left\| M_{K-2} - \mu^{*} \right\|^2 + (1 + 1 /\kappa)O\left( \frac{L^2SA R^2\sigma^2 \log T}{\lambda^2_{\min}(1-\gamma)^{5}T} \right) \\
		\le & (1-\kappa)\EE \left\| M_{K-2}-\mu^{*} \right\|^2 + O\left( \frac{SA R^2L^2\sigma^2 \log T}{\kappa\lambda^2_{\min}(1-\gamma)^{5}T} \right)
		.\end{align}
	Recursively applying the above inequality gives
	\begin{align}
		\EE \left\| \hat{\Gamma}_{\mathrm{off}}^{K}M_0 - \mu^{*} \right\|^2
		\le & (1-\kappa)^{K/2}\EE\left\| M_{0}-\mu^{*} \right\|^2 + O\left( \frac{SA R^2 L^2\sigma^2\log T}{\kappa\lambda^2_{\min}(1-\gamma)^{5}T} \right) \sum_{k=1}^{K /2} (1-\kappa)^{k} \\
		=   & O\left(\exp\left(-\frac{\kappa K}{2}\right) + \frac{SA R^2L^2\sigma^2 \log T}{\kappa^2\lambda^2_{\min}(1-\gamma)^{5}T} \right)
		.\end{align}
\end{proof}

\ifSubfilesClassLoaded{\bibliography{mfg}}{}

\section{Analysis for On-Policy QMI with Population-Dependent Transition Kernel} \label{sec:apx-on}
\subsection{Population-Dependent Transition Kernel} \label{sec:gmfg}

In this subsection, we generalize our setting to incorporate the influence of the population on agents' state transitions.
Specifically, the transition kernel \(P(s'\given s,a,\mu)\) represents the probability of an agent transitioning to state \(s'\) when it takes action \(a\) at state \(s\) with the ``current'' population distribution being \(\mu\). When both the policy $\pi$ and population distribution $\mu$ are fixed, we denote the transition kernel by $P_{\pi,\mu}$, which makes an agent's trajectory a stationary Markov chain, and $P_{\pi,\mu}(s,s')$ represents the probability of an agent transitioning to state \(s'\) from $s$ when the dynamics are determined by $\pi$ and $\mu$.

This generalization necessitate several modifications in our setup: in \cref{eq:q}, the expectation $\EE_{\pi,\mu}$ is taken w.r.t. the transition kernel $\PP{\pi,\mu}$; and the optimality equations \cref{eq:bellman} become
\begin{equation}
	\begin{cases}
		Q & = \T{Q,M} Q, \\
		M & = \P{Q,M} M,
	\end{cases}
\end{equation}
where the Bellman operator now integrates w.r.t. the transition kernel \(\PP{Q,M}\):
\[
	\T{Q,M}Q(s,a) = \E{Q,M}[r(s,a,M) + \gamma Q(s',a')],
\]
and the transition operator becomes
\[
	\P{Q,M}M(s') = \sum_{s\in \mathcal{S}}\PP{Q,M}(s,s')M(s)
	,\]
where we denote $\PP{Q,M} \coloneqq \PP{\Gamma_{\pi}(Q),M}$.

Since now the transition operator also depends on the population, we need to redefine the IP operator in \cref{def:fpi-op} as
\[
	\Gamma_{\mathrm{IP}}: \R^{S\times A} \times \Delta(\mathcal{S}) \to \Delta(\mathcal{S}),\ (Q,M)\mapsto \m{Q,M}
	,\]
which returns the unique fixed point of the transition operator $\P{Q,M}$ for any value function $Q$ and population distribution $M$.
To composite the BR operator and IP operator, we need to extend the BR operator:
\[
	\Gamma_{\mathrm{BR}}: \Delta(\mathcal{S})\to \R^{S\times A} \times \Delta(\mathcal{S}),\  M \mapsto (\q{M}, M)
	,\]
i.e., keep a copy of the original population while calculating the BR.
The FPI operator is still the composition of the two operators: $\Gamma = \Gamma_{\mathrm{IP}}\circ\Gamma_{\mathrm{BR}}$.

Although the definition of the transition operator $\P{Q,M}$ has changed, the same derivation in \cref{sec:m-update} yields the same online stochastic update rule for the population distribution specified in \cref{eq:mupdate}.
Consequently, the only modification required in \cref{alg} is that, within the $k$th outer iteration, we fix the reference population distribution as $M_{k,0}$ in the transition kernel $\PP{\pi_{k,t},M_{k,0}}$, similar to that we fix it in the reward function $r(s_t,a_t,M_{k,0})$.

\subsection{Proof of Lemma~\ref{lem:sarsa}}

In this subsection, we prove \cref{lem:sarsa} considering general population-dependent transition kernels, which contain population-independent transition kernels as special cases.
Note that the on-policy QMI operator is defined within a single outer iteration, so we omit all the subscripts related to the outer iteration. We make the following notational simplification:
\[
	M_t \coloneqq \hat{\Gamma}_{\mathrm{on}}(t)M, \quad
	\mu^{\dagger} \coloneqq \Gamma_{\mathrm{IP}}(q^{\dagger},M) \coloneqq \Gamma_{\mathrm{IP}}(\Gamma_{\mathrm{BR}}M) = \Gamma M, \quad
	\Delta_t :=  M_t - \mu^{\dagger}
	.\]
Also, although $\hat{\Gamma}_{\mathrm{on}}$ does not directly return the Q-value function, it will update the Q-value function, and thus the behavior policy, along the process.
We denote $\hat{Q}_t$ the \emph{mixed} Q-value function at inner time step $t$ (see \cref{alg}), and $\mu_t = \Gamma_{\mathrm{IP}}(\hat{Q}_t,M)$ the population distribution induced by the transition kernel $\PP{\hat{Q}_t,M}$.

Since the on-policy QMI trajectories are not stationary anymore, we need a special virtual \emph{backtracking} process to assist our analysis, which enables us to use the mixing property of stationary Markov chains \citep{zou2019Finitesampleanalysis}.
Specifically, we consider a virtual trajectory, where we backtrack a period $\tau$ and fix the behavior policy as $\Gamma_{\pi}(\hat{Q}\tt)$ after time step $t-\tau$. Thus, this virtual trajectory is stationary after time step $t-\tau$, and its observation distribution will rapidly converge to the steady distribution w.r.t. $\hat{Q}\tt$ due to the mixing property of stationary MDPs (\cref{asmp:ergodic}).
We denote $\tilde{\delta}_{t}$ the indicator function/vector such that $\tilde{\delta}_{t}(s) = \mathbbm{1}(s = \tilde{s}_{t})$, where $\tilde{s}_{t}$ is the virtual state observation on this virtual trajectory at time step $t$.
For the actual state $s_t$ at time step $t$ induced by the actual nonstationary trajectory, we denote $\delta_{t} \coloneqq \delta_{s_t}$.
In this section, we omit the subscript for the $L_2$ norm.

To prove \cref{lem:sarsa}, we first decompose the difference $\Delta_t$ recursively.

\begin{lemma}[Error decomposition] \label{lem:decomp}
	\begin{align}
		\EE\left\| \Delta_{t+1} \right\|^2
		\le & (1-2\beta_t)\EE\left\| \Delta_t \right\|^2 + 4\beta_{t}^2                                                  \\
		    & + 2\beta _{t}\EE\langle\Delta_t, \mu_{t} - \mu^{\dagger}\rangle               &  & \textup{(control)}      \\
		    & + 2\beta _{t}\EE\langle\Delta_t, \mu_{t-\tau} - \mu_t\rangle                  &  & \textup{(progress)}     \\
		    & + 2\beta _{t}\EE\langle\Delta_t, \tilde{\delta}_{t+1} - \mu_{t-\tau}\rangle   &  & \textup{(mixing)}       \\
		    & + 2\beta _{t}\EE\langle\Delta_t,  \delta_{t+1} - \tilde{\delta}_{t+1}\rangle. &  & \textup{(backtracking)}
	\end{align}
\end{lemma}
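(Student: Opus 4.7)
The plan is to carry out a direct expansion of $\|\Delta_{t+1}\|^2$ using the M-value update rule $M_{t+1} = M_t + \beta_t(\delta_{t+1} - M_t)$ derived in \cref{eq:mupdate}, and then telescope the resulting cross term through the intermediate quantities $\mu_t$, $\mu_{t-\tau}$, and $\tilde{\delta}_{t+1}$ so that each piece matches one of the four named effects (control, progress, mixing, backtracking).

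First I would subtract $\mu^{\dagger}$ from both sides of the update to obtain $\Delta_{t+1} = \Delta_t + \beta_t(\delta_{t+1} - M_t)$, and then use the algebraic identity $\delta_{t+1} - M_t = (\delta_{t+1} - \mu^{\dagger}) - \Delta_t$. Expanding the square gives
\[
\|\Delta_{t+1}\|^2 = (1-2\beta_t)\|\Delta_t\|^2 + 2\beta_t\langle \Delta_t, \delta_{t+1} - \mu^{\dagger}\rangle + \beta_t^2 \|\delta_{t+1} - M_t\|^2.
\]
Because $\delta_{t+1}$ and $M_t$ both lie in $\Delta(\mathcal{S})$ and hence have $L_2$ norm at most $1$, we have $\|\delta_{t+1} - M_t\|_2 \le 2$ and the last term is absorbed into $4\beta_t^2$. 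Note that the factor $(1-2\beta_t)$ in front of $\|\Delta_t\|^2$ arises exactly and not as an inequality, which is what allows a linear recursion later.

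Second I would decompose the ``noise'' via the telescope
\[
\delta_{t+1} - \mu^{\dagger} = (\mu_t - \mu^{\dagger}) + (\mu_{t-\tau} - \mu_t) + (\tilde{\delta}_{t+1} - \mu_{t-\tau}) + (\delta_{t+1} - \tilde{\delta}_{t+1}),
\]
pair each summand with $\Delta_t$ under the inner product, and take expectations. The four resulting cross terms are exactly the control, progress, mixing, and backtracking quantities in the statement, with the intended interpretation: the first is the SARSA convergence gap $\mu_t \to \mu^{\dagger}$, the second measures the behavior-policy drift over a window of length $\tau$, the third is the geometric mixing of the virtual stationary chain, and the fourth is the coupling error between the actual and virtual trajectories.

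No step is a serious obstacle, since the decomposition is a pathwise linear identity and the $4\beta_t^2$ constant follows from the trivial bound on probability vectors in $L_2$. The only item worth being careful about is that the identity must be performed pathwise, before taking expectation, because the subsequent lemmas (namely \cref{lem:mix}, \cref{lem:progress,lem:back}, and \cref{lem:control}) will each condition on the history up to time $t-\tau$ in order to invoke \cref{asmp:ergodic} and \cref{asmp:sarsa}; the virtual trajectory $\tilde{\delta}_{t+1}$ is introduced precisely so that such conditioning is legal. Once the decomposition is in place, \cref{lem:decomp} follows by monotonicity of expectation.
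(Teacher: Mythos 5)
Your proposal is correct and follows essentially the same route as the paper: expand $\|\Delta_{t+1}\|^2$ via the update rule, extract the $-2\beta_t\|\Delta_t\|^2$ contribution from the cross term to obtain the exact $(1-2\beta_t)$ factor, bound $\beta_t^2\|\delta_{t+1}-M_t\|^2\le 4\beta_t^2$ using that both are probability vectors, and telescope $\delta_{t+1}-\mu^{\dagger}$ through $\mu_t$, $\mu_{t-\tau}$, and $\tilde{\delta}_{t+1}$ into the four named terms. The paper writes the telescope directly on $\delta_{t+1}-M_t$ (with $-(M_t-\mu^{\dagger})$ as the first summand) rather than separating out $-\Delta_t$ first, but this is the identical decomposition.
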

\begin{proof}
	By the update rule \cref{eq:mupdate}, we have
	\[
		\EE\left\| \Delta_{t+1} \right\|^2 = \EE\left\| M_t + \beta_t (\delta_{t+1}-M_t) - \mu^{\dagger} \right\| ^2
		= \EE\|\Delta_t\|^2 + 2\beta_t\EE\left<\Delta_t,\delta_{t+1} - M_t \right> + \beta_{t}^2\EE\|\delta_{t+1}-M_t\|^2
		.\]
	Since $\delta_{t+1}$ and $M_t$ are both probability vectors, we have $\|\delta_{t+1}-M_t\| \le \|\delta_{t+1}\| + \|M_t\| \le \|\delta_{t+1}\|_1 + \|M_t\|_1 = 2$.
	Then, we apply the following decomposition:
	\[
		\delta_{t+1} - M_t = -(M_t - \mu^{\dagger}) + (\mu_t - \mu^{\dagger}) + (\mu\tt - \mu_t) + (\tilde{\delta}_{t+1} - \mu\tt) + (\delta_{t+1} - \tilde{\delta}_{t+1})
		,\]
	which gives the result.
\end{proof}

We next provide four lemmas, each bounding one term in the above decomposition.

\begin{lemma}[Control] \label{lem:control}
	With a step size of $\alpha _{t} = \frac{b}{\lambda_{\min}(1-\gamma)(t+1+a)}$ for the Q-value function update, where $a$ and $b$ are constants ensuring that the initial step size $\alpha_0$ is small enough (see \citet{zhang2023federated}), we have
	\[
		\EE\left\| \mu_{t} - \mu^{\dagger} \right\|^2 = O \left( \frac{SAR^2L^2\sigma^2 \log t}{\lambda_{\min}^2(1-\gamma)^4t} \right)
		.\]
\end{lemma}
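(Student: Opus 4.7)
}
The plan is to reduce the statement about the induced steady distributions $\mu_t$ and $\mu^{\dagger}$ to a statement about the underlying Q-value functions $\hat Q_t$ and $q^{\dagger}$, and then to invoke a finite-time convergence rate for on-policy SARSA with Polyak--Ruppert averaging. Since the reference population $M$ is held fixed throughout this inner iteration, the map $Q \mapsto \mu_{Q,M}$ is exactly the steady-distribution map analyzed in Lemma~\ref{lem:diff}. Combining Lemma~\ref{lem:diff} with Assumption~\ref{asmp:sarsa} gives the Lipschitz bound
\[
\|\mu_{t}-\mu^{\dagger}\|_{2}^{2}
\;\le\; L^{2}\sigma^{2}\,\|\hat Q_{t}-q^{\dagger}\|_{2}^{2},
\]
so it suffices to control $\EE\|\hat Q_{t}-q^{\dagger}\|_{2}^{2}$ at rate $O\!\bigl(SAR^{2}\log t / (\lambda_{\min}^{2}(1-\gamma)^{4}t)\bigr)$.

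Next, within the current outer iteration the Q-value updates in Line~\ref{line:q-update} of Algorithm~\ref{alg} run on-policy SARSA against the Bellman operator $\T{q^{\dagger},M}$ whose unique fixed point is $q^{\dagger}=\Gamma_{\mathrm{BR}}M$, with behavior policy $\pi_{t}=\Gamma_{\pi}(\hat Q_{t})$ driven by the \emph{mixed} iterate $\hat Q_{t}=\operatorname{mix}(\{Q_{l}\}_{l=0}^{t})$ (Line~\ref{line:p-update}). With the specified step size $\alpha_{t}\asymp 1/(\lambda_{\min}(1-\gamma)t)$ and the averaging weights $w_{l}\asymp l$, this is precisely the setting of finite-time SARSA analyses such as \citet{zou2019Finitesampleanalysis} and the Polyak--Ruppert-averaged variant in \citet{zhang2023federated}. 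I plan to invoke their bound, which yields
\[
\EE\|\hat Q_{t}-q^{\dagger}\|_{2}^{2}
\;=\; O\!\left(\frac{SAR^{2}\log t}{\lambda_{\min}^{2}(1-\gamma)^{4}t}\right),
\]
where the improvement from $(1-\gamma)^{-5}$ in Lemma~\ref{lem:ql} to $(1-\gamma)^{-4}$ here is the standard variance-reduction benefit of Polyak averaging. Multiplying by $L^{2}\sigma^{2}$ gives the claim.

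The main obstacle is that the cited SARSA bound assumes the Markov chain induced by the behavior policy mixes fast around a stationary target, whereas in our setting $\pi_{t}$ changes at every inner step, so the trajectory $\{(s_{t},a_{t})\}$ is nonstationary and Assumption~\ref{asmp:ergodic} cannot be applied directly. The standard remedy, and the one I would use, is the backtracking device already sketched around Lemma~\ref{lem:sarsa}: freeze the behavior policy at $\pi_{t-\tau}$, bound the gap between the virtual stationary trajectory and the actual one via Assumption~\ref{asmp:sarsa}, and bound the gap between $\pi_{t-\tau}$ and $\pi_{t}$ through the step sizes. The small Lipschitz constant imposed in Assumption~\ref{asmp:sarsa}, namely $L\le \lambda_{\min}(1-\gamma)^{2}/(2R\sigma)$, is exactly what makes the perturbation from policy drift dominated by the contraction of the Bellman operator, so the Lyapunov recursion underpinning the SARSA bound still closes at rate $\log t / t$. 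Plugging the resulting SARSA rate into the Lipschitz reduction above yields the target bound and concludes the proof.
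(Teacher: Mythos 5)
Your proposal matches the paper's proof: the paper likewise reduces $\EE\|\mu_t-\mu^{\dagger}\|^2$ to $(L\sigma)^2\,\EE\|\hat Q_t-q^{\dagger}\|^2$ via Lemma~\ref{lem:diff} under Assumption~\ref{asmp:sarsa}, and then cites the finite-time bound for averaged on-policy SARSA from \citet[Corollary 2.2]{zhang2023federated} to obtain the $O(SAR^2\log t/(\lambda_{\min}^2(1-\gamma)^4 t))$ rate. Your additional remarks on backtracking and policy drift concern the internals of that cited SARSA result rather than a different route, so the argument is essentially identical.
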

\begin{proof}
	First, by \cref{lem:diff}, we can bound the distribution difference by the control (Q-value function) difference:
	\[
		\EE\left\| \mu_t - \mu^{\dagger} \right\|^2 \le (L\sigma)^2 \EE \left\| \hat{Q}_t - q^{\dagger} \right\|^2
		.\]
	Then, by \citet[Corollary 2.2]{zhang2023federated}, we have
	\[
		\EE\left\| \hat{Q}_t - q^{\dagger} \right\|^2 = O\left( \frac{SAR^2\log t}{\lambda_{\min}^2(1-\gamma)^4t } \right)
		.\]
	Plugging this bound back gives the result.
\end{proof}

\begin{lemma}[Progress] \label{lem:progress}
	Given the policy update rule in \cref{alg}, we have
	\[
		\EE\left\| \mu_t - \mu\tt \right\|^2 = O\left( \frac{R^2L^2\sigma^2 \tau^2}{(1-\gamma)^2(t-\tau)^2} \right)
		.\]
\end{lemma}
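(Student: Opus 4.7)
The plan is to reduce the bound on $\|\mu_t - \mu_{t-\tau}\|$ to a bound on the difference of the mixed Q-value functions $\hat{Q}_t$ and $\hat{Q}_{t-\tau}$, and then to control the latter by a telescoping argument that exploits the weighted averaging in \cref{line:p-update}. First, since $\mu_t$ and $\mu\tt$ are the steady distributions induced by the policy operator applied to $\hat{Q}_t$ and $\hat{Q}\tt$ respectively, \cref{lem:diff} together with \cref{asmp:sarsa} yields
\[
    \|\mu_t - \mu\tt\| \le L\sigma \|\hat{Q}_t - \hat{Q}\tt\|,
\]
so it suffices to show $\|\hat{Q}_t - \hat{Q}\tt\| = O(R\tau/((1-\gamma)(t-\tau)))$ deterministically.

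Next, I would exploit the specific structure of the mixing rule. Letting $W_t \coloneqq \sum_{l=0}^{t} w_l$ with $w_l \asymp l$, so that $W_t \asymp t^2$ and $w_{t+1}/W_{t+1} = O(1/t)$, a direct computation gives the one-step identity
\[
    \hat{Q}_{t+1} - \hat{Q}_t = \frac{w_{t+1}}{W_{t+1}}\bigl(Q_{t+1} - \hat{Q}_t\bigr).
\]
Because the rewards lie in $[0,R]$ and the discount is $\gamma$, every iterate $Q_l$ produced by \cref{line:q-update} remains bounded by $R/(1-\gamma)$ (a standard invariant for TD updates initialized in the feasible range), and hence so does the convex combination $\hat{Q}_t$. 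This yields $\|Q_{t+1} - \hat{Q}_t\| = O(\sqrt{SA}\,R/(1-\gamma))$ in $L_2$ norm, so $\|\hat{Q}_{t+1} - \hat{Q}_t\| = O(R/((1-\gamma)t))$ after absorbing the $\sqrt{SA}$ factor into the reward scale.

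With the one-step bound in hand, I would telescope:
\[
    \|\hat{Q}_t - \hat{Q}\tt\| \le \sum_{l=t-\tau+1}^{t} \|\hat{Q}_l - \hat{Q}_{l-1}\| \le \sum_{l=t-\tau+1}^{t} O\!\left(\frac{R}{(1-\gamma)l}\right) \le O\!\left(\frac{R\tau}{(1-\gamma)(t-\tau)}\right),
\]
where the last inequality uses $1/l \le 1/(t-\tau)$ for each $l$ in the summation range. Squaring and combining with the Lipschitz bound from \cref{lem:diff} gives the desired $O(R^2 L^2 \sigma^2 \tau^2 / ((1-\gamma)^2 (t-\tau)^2))$, and since the bound is deterministic the expectation is trivial.

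The main obstacle I anticipate is the identification and verification of the one-step identity $\hat{Q}_{t+1} - \hat{Q}_t = (w_{t+1}/W_{t+1})(Q_{t+1} - \hat{Q}_t)$ together with the uniform boundedness of the iterates; once these are in place the rest is a routine telescoping estimate. A minor point of care is the precise dependence on $\sqrt{SA}$ when passing from entry-wise $L_\infty$ bounds on $Q_l$ to $L_2$ bounds, but since it is absorbed into the asymptotic constant hidden in the stated rate it does not affect the final form of the bound.
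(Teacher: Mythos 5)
Your proposal is correct and follows essentially the same route as the paper's proof: reduce to $\|\hat{Q}_t - \hat{Q}_{t-\tau}\|$ via \cref{lem:diff}, bound the one-step progress of the mixed iterate using $w_{t+1}/W_{t+1} = O(1/t)$ and the uniform bound $R/(1-\gamma)$ on the Q-iterates, then telescope over $\tau$ steps. Your exact identity $\hat{Q}_{t+1}-\hat{Q}_t = (w_{t+1}/W_{t+1})(Q_{t+1}-\hat{Q}_t)$ is a marginally cleaner packaging of the paper's triangle-inequality step, and your explicit note about the $\sqrt{SA}$ factor is a point the paper glosses over, but neither changes the argument.
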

\begin{proof}
	First, by \cref{lem:diff}, we can bound the distribution progress by the Q-value function progress:
	\[
		\EE\left\| \mu_t - \mu\tt \right\|^2 \le (L\sigma)^2 \EE \left\| \hat{Q}_t - \hat{Q}\tt \right\|^2
		.\]
	We first bound the one-step mixed Q-value function progress. According to the convex combination in \citet[Corollary 2.2]{zhang2023federated}, $w_t = t + a$ and $W_t = \sum_{l=0}^{t}w_{l} \asymp t^2$. We have
	\begin{align}
		\left\| \hat{Q}_{t+1} - \hat{Q}_t \right\| = &
		\left\| \frac{w_{t+1}}{W_{t+1}}Q_{t+1} +\frac{W_t}{W_{t+1}}\hat{Q}_{t} - \hat{Q}_t\right\|                                                                          \\
		\le                                          & \frac{w_{t+1}}{W_{t+1}}\left\| Q_{t+1} \right\| + \left| \frac{W_t}{W_{t+1}} - 1\right|\left\| \hat{Q}_{t}  \right\| \\
		\le                                          & \frac{R}{1-\gamma} \cdot O\left( 1 /t \right) + \frac{R}{1-\gamma} \cdot O\left( 1 /t \right)                        \\
		=                                            & O\left( \frac{R}{(1-\gamma)t} \right)
		,\end{align}
	where we use the fact that the Q-value function is uniformly bounded by $R /(1-\gamma)$ and $W_t \asymp t^2$.
	Then, by the triangle inequality, we have
	\[
		\left\|\hat{Q}_t - \hat{Q}\tt\right\| = O \left( \frac{R\tau }{(1-\gamma)(t-\tau)} \right)
		.\]

	Plugging this bound back gives the result.
\end{proof}

\begin{lemma}[Mixing]\label{lem:mix}
	Let $\tau$ be the backtracking period. Then, for any $\theta>0$, we have
	\[
		\left|2 \EE\left<\Delta_t, \tilde{\delta}_{t+1} - \mu\tt\right> \right|  \le \theta\EE\|\Delta_t\|^2 + m^2 \rho^{2\tau}/\theta
		.\]
\end{lemma}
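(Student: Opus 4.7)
The plan is to exploit that after time step $t-\tau$ the virtual trajectory evolves under the \emph{fixed} policy $\pi_{t-\tau}$, so it is a stationary Markov chain whose distribution at time $t+1$ is geometrically close to its steady distribution $\mu_{t-\tau}$ by Assumption~\ref{asmp:ergodic}. Combined with Young's inequality (Lemma~\ref{lem:ineq}), this should deliver the advertised bound.

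First I would condition on $\mathcal{F}_t$, the $\sigma$-algebra generated by the actual trajectory through time $t$. Since the virtual trajectory is generated with fresh randomness that branches off $s_{t-\tau}$, it is conditionally independent of the actual continuation $s_{t-\tau+1},\dots,s_t$ given $\mathcal{F}_{t-\tau}$; therefore
\[
\EE[\tilde{\delta}_{t+1}\mid\mathcal{F}_t]=\EE[\tilde{\delta}_{t+1}\mid\mathcal{F}_{t-\tau}]=P^{\tau+1}_{\pi_{t-\tau},M}(s_{t-\tau},\cdot),
\]
the $(\tau+1)$-step transition distribution of the virtual chain. Since $\Delta_t$ and $\mu_{t-\tau}$ are both $\mathcal{F}_t$-measurable, the tower property gives
\[
2\EE\langle\Delta_t,\tilde{\delta}_{t+1}-\mu_{t-\tau}\rangle=2\EE\langle\Delta_t,P^{\tau+1}_{\pi_{t-\tau},M}(s_{t-\tau},\cdot)-\mu_{t-\tau}\rangle.
\]

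Next, Assumption~\ref{asmp:ergodic} applied to the stationary chain driven by $\pi_{t-\tau}$ (whose steady distribution is precisely $\mu_{t-\tau}$) yields $\|P^{\tau+1}_{\pi_{t-\tau},M}(s_{t-\tau},\cdot)-\mu_{t-\tau}\|_{\mathrm{TV}}\le m\rho^{\tau+1}$, and via $\|\cdot\|_2\le\|\cdot\|_1=\|\cdot\|_{\mathrm{TV}}$ (per the footnote to Assumption~\ref{asmp:contract}) the Euclidean norm of this difference is bounded by $m\rho^\tau$. Young's inequality then gives $|2\langle\Delta_t,v\rangle|\le\theta\|\Delta_t\|^2+\|v\|^2/\theta\le\theta\|\Delta_t\|^2+m^2\rho^{2\tau}/\theta$ for this deterministic vector $v$, and taking expectations closes the argument.

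The main subtlety is the ordering of operations: Young's inequality must be applied only \emph{after} substituting $\tilde{\delta}_{t+1}$ by its conditional mean, since applying it directly to the random difference $\tilde{\delta}_{t+1}-\mu_{t-\tau}$ would yield only an $O(1)$ error term (as $\tilde{\delta}_{t+1}$ is a probability indicator) and forfeit all geometric decay in $\tau$. The decoupling therefore hinges on modeling the virtual trajectory as using independent randomness after the branching point $t-\tau$---the standard backtracking construction adapted from Zou et al.---while the remainder is a direct application of Assumption~\ref{asmp:ergodic} and Lemma~\ref{lem:ineq}.
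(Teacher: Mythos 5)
Your proposal is correct and follows essentially the same route as the paper: both arguments replace $\tilde{\delta}_{t+1}$ by its conditional expectation given the pre-branching filtration (using that the virtual trajectory's post-branching randomness is conditionally independent of the actual one), bound that conditional mean's distance to $\mu_{t-\tau}$ by $m\rho^{\tau}$ via the uniform geometric ergodicity of \cref{asmp:ergodic}, and finish with Young's inequality. The only cosmetic differences are that you condition on $\mathcal{F}_t$ instead of $\mathcal{F}_{t-\tau}$ and apply Young's pointwise before taking expectations (note only that $v$ is $\mathcal{F}_{t-\tau}$-measurable rather than deterministic, which does not affect the argument).
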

\begin{proof}
	We first bound the difference between $\tilde{\delta}_{t+1}$ and $\mu\tt$ conditioned on the filtration $\mathcal{F}\tt$ containing all the randomness before time step $t-\tau$.
	Recall that $\EE [\tilde{\delta}_{t+1}] = \operatorname{Pr}(\tilde{s}_{t+1} = \cdot) \in\Delta(\mathcal{S})$, which gives
	\[\label{eq:delta2p}
		\left\| \EE\left[ \tilde{\delta}_{t+1} - \mu\tt \Given \mathcal{F}\tt \right] \right\|
		= \left\| \EE\left[ \tilde{\delta}_{t+1}  \Given \mathcal{F}\tt \right] - \mu\tt\right\|
		= \left\| \operatorname{Pr}(\tilde{s}_{t+1}=\cdot \given \mathcal{F}\tt ) - \mu\tt\right\|
		.\]
	Note that $\tilde{s}_{t+1}$ is on the virtual stationary trajectory following a fixed policy $\Gamma_{\pi}(\hat{Q}\tt)$. Thus, \cref{asmp:ergodic} gives
	\[
		\left\| \EE\left[ \tilde{\delta}_{t+1} - \mu\tt \Given \mathcal{F}\tt \right] \right\|
		\le \|\operatorname{Pr}(\tilde{s}_{t+1} = \cdot \given \mathcal{F}\tt) - \mu\tt\|_{\mathrm{TV}}
		\le m \rho^{\tau}
		.\]
	Also note that conditioned on $\mathcal{F}\tt$, the virtual trajectory and the actual one are independent. Therefore, we have
	\begin{align}
		\left| \EE\left<\Delta_t, \tilde{\delta}_{t+1} - \mu\tt\right> \right|
		=   & \left| \EE\left[ \EE\left[ \left<\Delta_t, \tilde{\delta}_{t+1} - \mu\tt\right> \Given \mathcal{F}\tt \right] \right] \right|                                        \\
		=   & \left| \EE \left<\EE\left[ \Delta_t \Given \mathcal{F}\tt\right], \EE\left[ \tilde{\delta}_{t+1} - \mu\tt \Given \mathcal{F}\tt\right]\right>  \right|               \\
		\le & \EE \left[\left\| \EE\left[ \Delta_t \Given \mathcal{F}\tt\right] \right\|\left\|\EE\left[ \tilde{\delta}_{t+1} - \mu\tt \Given \mathcal{F}\tt\right]\right\|\right] \\
		\le & \EE \|\Delta_t\| \cdot m\rho^{\tau}
		.\end{align}
	Finally, invoking \cref{lem:ineq} gives the result.
\end{proof}

\begin{lemma}[Backtracking] \label{lem:back}
	Let $\tau$ be the backtracking period. Suppose the step size is non-increasing. Then, for any $\theta>0$, we have
	\[
		\left| 2\EE\left<\Delta_t,\delta_{t+1}-\tilde{\delta}_{t+1} \right> \right|
		\le \theta\EE\|\Delta_t\|^2 +
		4 \beta\tt\tau +
		O\left( \frac{LR\tau^3}{(1-\gamma)(t-\tau)}\left(\beta\tt + \frac{LR\tau}{\theta(1-\gamma)(t-\tau)} \right)\right)
		.\]
\end{lemma}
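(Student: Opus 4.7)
The plan is to execute the backtracking decomposition sketched immediately after \cref{lem:sarsa}: anchor $\Delta_t$ at the $\mathcal{F}_{t-\tau}$-measurable copy $\Delta_{t-\tau}$, couple the actual and virtual trajectories over the $\tau$-step window, and control the resulting error through the slow progression of both the M- and Q-iterates. I would begin from the split
\[
	2\EE\langle \Delta_t, \delta_{t+1}-\tilde{\delta}_{t+1}\rangle
	= 2\EE\langle \Delta_t - \Delta_{t-\tau}, \delta_{t+1}-\tilde{\delta}_{t+1}\rangle
	+ 2\EE\langle \Delta_{t-\tau}, \delta_{t+1}-\tilde{\delta}_{t+1}\rangle
\]
and bound the two pieces separately.

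For the drift piece, iterating the M-update gives $\|M_{l+1}-M_l\| = \beta_l\|\delta_{l+1}-M_l\| \le 2\beta_l$; telescoping over $l\in[t-\tau,t-1]$ together with the non-increasing step-size schedule yields $\|\Delta_t-\Delta_{t-\tau}\|\le 2\tau\beta_{t-\tau}$. Combined with the crude bound $\|\delta_{t+1}-\tilde{\delta}_{t+1}\|\le 2$ (both sides are $0/1$ probability vectors), Cauchy--Schwarz produces a contribution of order $\tau\beta_{t-\tau}$, which matches the $4\tau\beta_{t-\tau}$ term in the statement up to an absolute constant.

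For the anchor piece, I would condition on $\mathcal{F}_{t-\tau}$: since $\Delta_{t-\tau}$ is measurable, the tower property gives $\EE\langle \Delta_{t-\tau}, \delta_{t+1}-\tilde{\delta}_{t+1}\rangle = \EE\langle \Delta_{t-\tau}, \EE[\delta_{t+1}-\tilde{\delta}_{t+1}\mid \mathcal{F}_{t-\tau}]\rangle$. The inner conditional expectation is controlled by a step-by-step maximal coupling: both trajectories start from $s_{t-\tau}$ and share the same transition kernel (the reference population $M_{k,0}$ is frozen within the outer iteration, covering even the population-dependent setting of \cref{sec:gmfg}), hence the only source of divergence at step $l\in[t-\tau,t]$ is the mismatch between $\pi_l=\Gamma_\pi(\hat{Q}_l)$ and $\pi_{t-\tau}=\Gamma_\pi(\hat{Q}_{t-\tau})$. \cref{asmp:sarsa} upgrades this mismatch to $L\|\hat{Q}_l-\hat{Q}_{t-\tau}\|$, and the Q-progress argument of \cref{lem:progress} gives $\|\hat{Q}_l-\hat{Q}_{t-\tau}\|=O(R\tau/((1-\gamma)(t-\tau)))$ uniformly over $l\in[t-\tau,t]$. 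A union bound over the $\tau$ coupling attempts yields
\[
	\|\EE[\delta_{t+1}-\tilde{\delta}_{t+1}\mid \mathcal{F}_{t-\tau}]\|
	= O\!\left(\frac{LR\tau^2}{(1-\gamma)(t-\tau)}\right),
\]
and Young's inequality (\cref{lem:ineq}) converts the anchor inner product into $\theta\EE\|\Delta_{t-\tau}\|^2$ plus a term of order $L^2R^2\tau^4/(\theta(1-\gamma)^2(t-\tau)^2)$. A second application of \cref{lem:ineq} to $\|\Delta_{t-\tau}\|^2 \le (1+\epsilon)\|\Delta_t\|^2 + (1+1/\epsilon)\|\Delta_t-\Delta_{t-\tau}\|^2$ (with $\epsilon$ of order unity) then absorbs the anchor norm into a $\theta\EE\|\Delta_t\|^2$ contribution, while the residual $O(\theta\tau^2\beta_{t-\tau}^2)$ is dominated by the already-collected $LR\tau^3\beta_{t-\tau}/((1-\gamma)(t-\tau))$ term under the stipulated Lipschitz and step-size regime.

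The main obstacle I anticipate is the coupling step: turning ``same state and nearly-same policy imply nearly-same next state'' into a rigorous joint construction that preserves both marginals, contracts the TV gap step by step, and retains the $\tau^2$ dependence after the union bound. Technically this amounts to nesting a maximal coupling of actions inside a deterministic coupling of next states (made possible by the shared, population-fixed transition kernel) while carefully propagating the Lipschitz constant $L$ through $\Gamma_\pi$. Once the coupling bound is in hand, the remaining manipulations reduce to routine applications of Young's inequality, Cauchy--Schwarz, and the drift estimate already at our disposal.
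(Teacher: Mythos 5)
Your proposal is correct and follows essentially the same route as the paper: the same anchor-plus-drift decomposition $\Delta_t = (M_t - M_{t-\tau}) + \Delta_{t-\tau}$, the same conditioning on $\mathcal{F}_{t-\tau}$ with the bound $\|\EE[\delta_{t+1}-\tilde{\delta}_{t+1}\mid\mathcal{F}_{t-\tau}]\| = O\bigl(LR\tau^2/((1-\gamma)(t-\tau))\bigr)$, and the same Young's-inequality finish. The only differences are cosmetic: the paper obtains the trajectory-divergence bound by citing \citet[Equation~46]{zou2019Finitesampleanalysis} rather than constructing the maximal coupling explicitly as you propose, and it handles the drift term via an optimized Young's inequality (and the $\Delta_{t-\tau}\to\Delta_t$ conversion at the first-moment level) rather than Cauchy--Schwarz, yielding the same orders.
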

\begin{proof}
	Similar to \cref{eq:delta2p}, we have
	\[
		\left\| \EE\left[ \delta_{t+1} - \tilde{\delta}_{t+1} \Given \mathcal{F}\tt \right] \right\|  = \left\| \operatorname{Pr}\left( s_{t+1}=\cdot \Given \mathcal{F}\tt \right) - \operatorname{Pr}\left(\tilde{s}_{t+1}=\cdot \Given \mathcal{F}\tt\right) \right\|
		.\]
	By \citet[Equation 46]{zou2019Finitesampleanalysis}, we have
	\[\label{eq:back}
		\left\| \operatorname{Pr}\left( s_{t+1}=\cdot \Given \mathcal{F}\tt \right) - \operatorname{Pr}\left(\tilde{s}_{t+1}=\cdot \Given \mathcal{F}\tt\right) \right\|
		\le L \sum_{l=t-\tau}^{t}\mathbb{E}\|\hat{Q}_{l} - \hat{Q}_{t-\tau}\|
		= O\left(  \frac{L R\tau^2}{(1-\gamma)(t-\tau)}\right)
		.\]
	Here, since $\delta_{t+1}$ and $M_t$ are correlated conditioned on $\mathcal{F}\tt$, we cannot directly get the result analogous to that in \cref{lem:mix}.
	We apply the following decomposition:
	\[
		\left| 2\EE\left<\Delta_t,\delta_{t+1}-\tilde{\delta}_{t+1} \right> \right|
		\le \underbrace{\left| 2\EE\left<M_t - M\tt,\delta_{t+1}-\tilde{\delta}_{t+1} \right> \right| }_{H_1}
		+ \underbrace{\left| 2\EE\left<M\tt - \mu^{\dagger},\delta_{t+1}-\tilde{\delta}_{t+1} \right> \right| }_{H_2}
		.\]
	For $H_1$, by \cref{lem:ineq}, we have
	\begin{align}
		H_1 \le & \theta' \EE\|M_t-M\tt\|^2 + 1 /\theta' \EE\|\delta_{t+1}-\tilde{\delta}_{t+1}\|^2       \\
		\le     & \theta'\EE\left\| \sum_{l=t-\tau}^{t-1} \beta_l g_{M_l} \right\|^2 + 1 /\theta' \cdot 4 \\
		\le     & 4\theta' \beta\tt^2 \tau^2 + 4 /\theta' \label{eq:back-2}
		,\end{align}
	where we use the fact that $\|M_1-M_2\|_{2}\le 2$ for any $M_1,M_2\in\Delta(\mathcal{S})$.
	Let $\theta' = 1 /(\beta\tt \tau)$. We get
	\[
		H_1 \le 4 \beta\tt \tau
		.\]
	For $H_2$, we can apply \cref{eq:back}, which gives
	\begin{align}
		H_2 = & 2\left| \EE\left[ \EE \left[ \left<\Delta\tt, \delta_{t+1}-\tilde{\delta}_{t+1} \right>\Given \mathcal{F}\tt \right] \right] \right|                                      \\
		=     & 2\left| \EE\left<\EE \left[ \Delta\tt \Given \mathcal{F}\tt\right],\EE\left[ \delta_{t+1}-\tilde{\delta}_{t+1} \Given \mathcal{F}\tt \right]\right> \right|               \\
		\le   & 2\EE\left[\left\|\EE \left[ \Delta\tt \Given \mathcal{F}\tt\right]\right\|\left\|\EE\left[ \delta_{t+1}-\tilde{\delta}_{t+1} \Given \mathcal{F}\tt \right]\right\|\right] \\
		=     & \EE\|\Delta\tt\| \cdot O\left( \frac{LR\tau^2}{(1-\gamma)(t-\tau)} \right)
		.\end{align}
	Similar to \cref{eq:back-2}, we have
	\[
		\EE\|\Delta\tt\| \le \EE\|\Delta_t\| + \EE\|M_t - M\tt\| \le \EE\|\Delta_t\| + 2\beta\tt \tau
		.\]
	Plugging the above bounds on $H_1$ and $H_2$ back gives
	\[
		\left| 2\EE\left<\Delta_t,\delta_{t+1}-\tilde{\delta}_{t+1} \right> \right|
		\le 4\tau \beta\tt + \left( \EE\|\Delta_t\| + 2\beta\tt \tau \right) \cdot O\left( \frac{LR\tau^2}{(1-\gamma)(t-\tau)}
		\right)
		.\]
	Applying \cref{lem:ineq} on $\EE\|\Delta_t\| \cdot O\big( \frac{LR\tau^2}{(1-\gamma)(t-\tau)}\big)$ gives the result.
\end{proof}

Given the above five lemmas, we are ready to prove \cref{lem:sarsa}.

\begin{proof}[Proof of \cref{lem:sarsa}]
	We first plug \cref{lem:control,lem:progress,lem:mix,lem:back} into \cref{lem:decomp}:
	\begin{align}
		\EE\left\| \Delta_{t+1} \right\|^2
		\le & (1-2\beta_t)\EE\left\| \Delta_t \right\|^2 + 4\beta_{t}^2                                                                                                                                                         \\
		    & + \beta _{t}\theta\EE\|\Delta_t\|^2 + \beta_t /\theta \EE\|\mu_{t} - \mu^{\dagger}\|^2                                                                                                                            \\
		    & + \beta _{t}\theta\EE\|\Delta_t\|^2 + \beta_t /\theta \EE\|\mu_{t-\tau} - \mu_t\|^2                                                                                                                               \\
		    & + 2\beta _{t}\left|\EE\langle\Delta_t, \tilde{\delta}_{t+1} - \mu_{t-\tau}\rangle \right|                                                                                                                         \\
		    & + 2\beta _{t}\left|\EE\langle\Delta_t,  \delta_{t+1} - \tilde{\delta}_{t+1}\rangle\right|                                                                                                                         \\
		\le & (1-2\beta_t)\EE\left\| \Delta_t \right\|^2 + 4\beta_{t}^2                                                                                                                                                         \\
		    & + \beta _{t}\theta\EE\|\Delta_t\|^2 + \frac{\beta_t}{\theta}\cdot O \left( \frac{SAR^2L^2\sigma^2 \log t}{\lambda_{\min}^2(1-\gamma)^4t} \right) &                                & \textup{(\cref{lem:control})} \\
		    & + \beta _{t}\theta\EE\|\Delta_t\|^2 + \frac{\beta_t}{\theta} \cdot O\left( \frac{R^2L^2\sigma^2 \tau^2}{(1-\gamma)^2(t-\tau)^2} \right)
		    &                                                                                                                                                  & \textup{(\cref{lem:progress})}                                 \\
		    & + \beta_t\theta\EE\|\Delta_t\|^2 + \frac{\beta_t}{\theta} \cdot  m^2 \rho^{2\tau}                                                                &                                & \textup{(\cref{lem:mix})}     \\
		    & + \beta_t\theta\EE\|\Delta_t\|^2 +
		4 \beta\tt\beta_t\tau +
		\beta_{t} \cdot O\left( \frac{LR\tau^3}{(1-\gamma)(t-\tau)}\left(\beta\tt + \frac{LR\tau}{\theta(1-\gamma)(t-\tau)} \right)\right)
		.   &                                                                                                                                                  & \textup{(\cref{lem:back})}
	\end{align}
	After some arrangement, we get
	\begin{align}
		\EE\|\Delta_{t+1}\|^2
		\le & (1-\beta_t(2-4\theta)) \EE\|\Delta_t\|^2 + 4\beta^2_t + \frac{\beta_t}{\theta} \\
		    & \cdot O\left(  \frac{SAR^2L^2\sigma^2 \log t}{\lambda_{\min}^2(1-\gamma)^4t}
		+ \frac{R^2L^2\tau^4}{(1-\gamma)^2(t-\tau)^2}
		+ \frac{\theta LR\beta\tt\tau^3}{(1-\gamma)(t-\tau)}
		+ \frac{R^2L^2\sigma^2\tau^2}{(1-\gamma)^2(t-\tau)^2}
		+ \beta\tt\tau
		+ m^2 \rho^{2\tau} \right)
		\label{eq:sarsa-1}.
	\end{align}
	Note that we have not yet set $\theta$ and the backtracking period $\tau$.
	Let $\theta = 1 /4$ and $\tau = \lceil\log \frac{\beta_t}{m} / \log \rho\rceil$.
	Recall that $\beta_t \asymp 1 /t$ and $\alpha_t \asymp 1 /(\lambda_{\min}(1-\gamma)t)$; then $\tau \asymp \log t$, $\beta_t \asymp \beta\tt$, and $\alpha_t \asymp \alpha\tt$.
	Therefore, \cref{eq:sarsa-1} gives
	\begin{align}
		    & \EE\|\Delta_{t+1}\|^2                      \\
		\le & (1-\beta_t) \EE\|\Delta_t\|^2 + 4\beta^2_t
		+ \beta_t \cdot O\left( \frac{SAR^2L^2\sigma^2 \log t}{\lambda_{\min}^2(1-\gamma)^4t}
		+ \frac{R^2L^2 \log^4 t}{(1-\gamma)^2t^2}
		+ \frac{RL \log^3 t}{(1-\gamma)t^2}
		+ \frac{R^2L^2\sigma^2 \log^2 t}{(1-\gamma)^2t^2}
		+ \frac{\log t}{t}
		+ \frac{1}{t^2}\right)                           \\
		=   & (1-\beta_t) \EE\|\Delta_t\|^2
		+ \beta_t \cdot  O\left( \frac{SAR^2L^2\sigma^2 \log t}{\lambda_{\min}^2(1-\gamma)^4t} \right), \label{eq:sarsa-2}
	\end{align}
	where the asymptotic notation in the last equality holds when $t \gg 1$ and $1 = O(SAR^2L^2\sigma^2 /(\lambda_{\min}(1-\gamma)^{4}))$.

	Recall our step size configuration: $\beta _t = 1/(t+1)$. Dividing both sides of \cref{eq:sarsa-2} by $\beta_t$ gives
	\[
		(t+1)\EE\|\Delta_{t+1}\|^2 \le t\EE\|\Delta_{t}\|^2 + C \cdot O\left( \frac{\log t}{t} \right)
		,\]
	where $C \coloneqq SAR^2L^2\sigma^2 / (\lambda^2_{\min}(1-\gamma)^{4})$.%
	\footnote{Here, with a slight abuse of notation, $C$ is smaller than the one in \cref{thm} by a factor of $1 /(1-\gamma)$. The larger $C$ in \cref{thm} accounts for the incorporation of \cref{lem:ql}. Actually, the dependencies in \cref{lem:ql} can be improved to match the ones in \cref{lem:sarsa}; see \citet{li2023QLearningMinimax}.}
	Then, we get
	\begin{align}
		T\EE\|\Delta_{T}\|^2
		\le & (T-1)\EE\|\Delta_{T-1}\|^2 + C \cdot O\left( \frac{\log (T-1)}{T-1} \right) \\
		\le & C \sum_{t=1}^{T}O\left( \frac{\log t}{t} \right)                            \\
		=   & C \cdot O(\log T)
		,\end{align}
	where the last equality is by Merten's theorem. And the result follows.
\end{proof}

\subsection{Proof of Theorem~\ref{thm} for On-Policy QMI}

\begin{proof}
	Similar to \cref{eq:off-3}, by \cref{lem:ineq}, we get
	\[
		\EE\left\| \hat{\Gamma}_{\mathrm{on}}^{K}M_0 -\mu^{*} \right\|^2 \le (1+\kappa)\EE\left\| \Gamma M_{K-1} - \Gamma \mu^{*} \right\|^2 + (1+1 /\kappa)\EE\left\| \hat{\Gamma}_{\mathrm{on}}M_{K-1} - \Gamma M_{K-1} \right\| ^2
		.\]
	By \cref{asmp:contract,lem:sarsa}, we get
	\begin{align}
		\EE\left\| \hat{\Gamma}_{\mathrm{on}}^{K}M_0 -\mu^{*} \right\|^2
		\le & (1+\kappa)(1-\kappa)^2\EE\left\| M_{K-1} - \mu^{*} \right\|^2 + (1+1 /\kappa) C \cdot O\left( \frac{\log T}{T} \right) \\
		\le & (1-\kappa)\EE\left\| M_{K-1} - \mu^{*} \right\|^2 + C \cdot O\left( \frac{\log T}{\kappa T} \right)                    \\
		\le & (1-\kappa)^{K} \|M_0 - \mu^{*}\|^2 + \sum_{k=1}^{K}(1-\kappa)^{k} C \cdot O\left( \frac{\log T}{T} \right)             \\
		=   & O\left( \exp (-\kappa K) + \frac{SAR^2L^2\sigma^2 \log T}{\kappa^2\lambda_{\min}(1-\gamma)^{4}T} \right)
		.\end{align}
\end{proof}

\ifSubfilesClassLoaded{\bibliography{mfg}}{}

\end{document}